\newcommand{\red}[1]{\textcolor{red}{ #1}}
\xdef\csname vec\x \endcsname{\noexpand\ensuremath{\noexpand\bm{\x}}}
\xdef\csname vec\x \endcsname{\noexpand\ensuremath{\noexpand\bm{\x}}}
\xdef\csname c\x \endcsname{\noexpand\ensuremath{\noexpand\mathcal{\x}}}
\xdef\csname bb\x \endcsname{\noexpand\ensuremath{\noexpand\mathbb{\x}}}
\newcommand{\Lt}[1]{\lVert#1\rVert_2}
\newcommand{\Lo}[1]{\lVert#1\rVert_1}
\theoremstyle{plain}
\newtheorem{theorem}{Theorem}
\newtheorem{proposition}{Proposition}
\newtheorem{lemma}{Lemma}
\newtheorem*{lemma*}{Lemma}
\newtheorem{corollary}{Corollary}
\theoremstyle{definition}
\newtheorem{definition}{Definition}
\newcommand{\eps}{\epsilon}
\title{Robust Estimation Under Heterogeneous Corruption Rates}
\author{%
  Syomantak Chaudhuri\\
  University of California, Berkeley
  \And
  Jerry Li \\
  University of Washington 
     \And
  Thomas A. Courtade \\
  University of California, Berkeley
}
\begin{document}

\maketitle

\begin{abstract}
  We study the problem of robust estimation under heterogeneous corruption rates, where each sample may be independently corrupted with a known but non-identical probability. 
  This setting arises naturally in distributed and federated learning, crowdsourcing, and sensor networks, yet existing robust estimators typically assume uniform or worst-case corruption, ignoring structural heterogeneity.
  For mean estimation for multivariate bounded distributions and univariate gaussian distributions, we give tight minimax rates for all heterogeneous corruption patterns.
  For multivariate gaussian mean estimation and linear regression, we establish the minimax rate for squared error up to a factor of $\sqrt{d}$, where $d$ is the dimension.
    Roughly, our findings suggest that samples beyond a certain corruption threshold may be discarded by the optimal estimators -- this threshold is determined by the empirical distribution of the corruption rates given.
\end{abstract}
\section{Introduction}

In traditional statistics, we typically rely on the assumption that our data is generated in a ``nice'', i.i.d. manner from some population distribution. 
Robust statistics can be seen as a relaxation of these assumptions, aiming to ensure meaningful performance even when the data that has been corrupted by a small fraction of arbitrary, but potentially adversarially chosen, outliers.
First proposed in seminal work by statisticians such as Huber, Tukey, and Anscombe more than fifty years ago \cite{huber1972,huber1973robust,anscombe1960,Tukey75}, this problem remains highly relevant to this day, and has received considerable recent attention from the statistics, machine learning, and theoretical computer science community, see e.g.~\cite{DKK16,LRV16,Charikar17,diakonikolas2017being,Meister18,diakonikolas2019sever,dong2019quantum,hopkins2020robust,Ilias20,Prasad20,depersin2022robust,bakshi2022robustly,depersin2022optimal,liu2023robustly,Zeng23,oliveira2024improved}.
A full survey of this line of work is beyond the scope of this paper, see e.g.~\cite{diakonikolas2023algorithmic} for a more comprehensive overview.

There are a number of related ways in the literature to model such outliers.
For instance, one classical setting is known as the \emph{Huber contamination model} \cite{huber1992}. 
Here, there is a ``nice'' distribution $D$, typically assumed to be from some well-behaved class of distributions, and we receive $n$ samples $Z_1, \ldots, Z_n$ from a distribution $D' = (1 - \eps) D + \eps N$ for some arbitrary noise distribution $N$, and some error parameter $\eps > 0$.
In other words, to generate a sample point $Z_i$, we generate a clean sample from $D$ with probability $(1 - \eps)$, and we sample an outlier from a noise distribution with probability $\eps$.
There are also a number of other, more powerful, adversaries considered in the literature, for instance, which allow the adversary to choose the corrupted points adaptively based on the clean samples; see e.g.~\cite{blanc2022power,canonne2023full,blanc2024adaptive} for a more detailed discussion.
However, by now, at least for many basic estimation tasks such as mean estimation, tight (or nearly tight) rates are known for many parametric families of distributions under all of these corruption models.

But, all of these models share an unfortunate drawback: namely, they all essentially assume a homogeneous prior on the error rates across the dataset.
In the Huber contamination model, for example, every sample is corrupted with the same probability $\eps$.
In practice, this is usually not the case: we often obtain data from various heterogeneous sources, and consequently we should have different priors on the cleanliness of different data points in our dataset.
As an example, many large-scale biomedical datasets are agglomerated in a distributed fashion over different institutions and at different points in time, and it is unreasonable to assume that the rate of outliers ought to be similar over the various sub-parts of the data.
But because all existing robust statistics techniques assume a uniform noise prior, it is unclear whether can obtain the optimal statistical rates for such datasets, or what these rates even are in the first place.

Motivated by these considerations, we propose a natural generalization of the Huber contamination model with heterogeneous corruption rates, where every data point is corrupted with a potentially different error rate. Formally:
\begin{definition}[$\bm{\lambda}$-contamination]
    Let $n$ be the number of samples, and let $\bm{\lambda} = (\lambda_1, \ldots, \lambda_n)\in [0,1]^n$.
    Let $P$ be some distribution.
    We say a dataset $\vecZ = (Z_1, \ldots, Z_n)$ is a \emph{$\lambda$-contaminated} set of samples from $P$ if $Z_i = (1 - B_i) X_i + B_i \tilde{X}_i$, where: \vspace{-8pt}
    \begin{itemize}
      \setlength\itemsep{0em}
        \item $X_1, \ldots, X_n \overset{\text{iid}}{\sim} P$ is a set of clean samples from $P$,
        \item $B_i \sim \mathsf{Bern} (\lambda_i)$ are independent Bernoulli random variables for $i = 1, \ldots, n$, and
        \item The outliers $(\tilde X_1,\ldots, \tilde X_n)$ are sampled from some joint distribution conditioned on the realizations of $X_1, \ldots, X_n$ and $B_1, \ldots, B_n$.
    \end{itemize}
    When a set of samples is generated this way, we denote this by $\vecZ \sim_{\mathbf{\lambda}} P$.
\end{definition}

In other words, we assume that the probability that the $i$-th sample is corrupted is $\lambda_i$, for different values of $\lambda_i$.
We pause to make a couple of remarks on this definition.
First, when all the $\lambda_i$ are the same, this is essentially the standard Huber contamination model, except we allow that the outliers may be chosen adaptively.
Second, we will assume that we know $\bm{\lambda}$ exactly.
However, our upper bounds also naturally generalize to the setting where we only have approximate knowledge of the $\lambda_i$, i.e., we just need a constant factor approximation $\hat \lambda_i$ of the corruption rate for our results to hold (i.e., $\lambda_i \leq c \hat \lambda_i$).
Such upper bounds on the corruption rates can often be estimated from past performance in practice.
Finally, just as in standard robust statistics, there are many similar but not equivalent definitions we can consider here, for instance, corresponding to oblivious contamination or strong corruption~\cite{blanc2022power,canonne2023full,diakonikolas2023algorithmic,blanc2024adaptive}.
Our proposed error upper and lower bounds hold for both adaptive and non-adaptive adversaries.
We leave the exploration between the differences between these definitions as an interesting future direction.

From a more conceptual point of view, the main challenge in this heterogeneous setting is how to effectively ``mix'' the information from data points with different noise levels.
As an illuminating example, one important special case of the $\bm{\lambda}$-contamination model is  \emph{semi-verified} learning \cite{Charikar17,Meister18}, where there is a (very) small amount of trusted data, and a large amount of potentially very noisy data, which corresponds to the setting where $\lambda_i = 0$ for a small number of $i$, and $\lambda_i = \gamma$ for some $\gamma$ close to $1$ for the rest of the data.
In general, robust estimation is impossible if $\lambda_i \geq 1/2$ for all $i$, however, it turns out that in the semi-verified setting, one can obtain consistent estimators by combining the information from the two sets of samples \cite{Charikar17}.
This example illustrates the main technical challenge of the general heterogeneous setting.
To obtain the tight rates, one must correctly identify how to incorporate information from the noisier samples into the information from cleaner ones, and to do so smoothly as a function of $\lambda_i$.

\subsection{Our results}
We establish lower and upper bounds for the statistical rates for a number of fundamental estimation tasks in the $\bm{\lambda}$-contaminated setting.
We first consider robust mean estimation in the heterogeneous setting, one of the most important and well-studied problems in robust statistics.
For any distribution $P$, let $\mu_P = \bbE_{X \sim P} [X]$ be the expectation of $P$.
We will measure the effectiveness of our estimators using the following minimax metrics:
\begin{definition}[Minimax, Minimax PAC rates for heterogeneous robust mean estimation~\cite{ma2024high,chaudhuri2025privatee}]
    Let $\cD$ be a set of distributions over $\bbR^d$, and let $\bm{\lambda} \in [0,1]^n$.
    The \emph{minimax rate for $\bm{\lambda}$-corrupted mean estimation} for the class $\mathcal{D}$ is defined to be
\begin{equation}
    L(\bm{\lambda}, \mathcal{D}) = \inf_{M} \sup_{P \in \cD} \bbE_{\vecZ \sim_{{\bm\lambda}} P}\left[ \|M(\vecZ) - \mu_P \|^2_2  \right] \; ,
\end{equation}
where the infimum is taken over all estimators $M$. The \emph{minimax PAC rate for $\bm{\lambda}$-corrupted mean estimation} for the class $\mathcal{D}$ is defined to be 
\begin{equation}
\label{eq:minimax-pac}
    L_{\mathsf{PAC}} (\bm{\lambda}, \mathcal{D}, \delta) =  \inf_{M} \sup_{P \in \cD} \inf \left\{t \in [0,\infty):\Pr_{\vecZ \sim_{{\bm\lambda}} P}\left[ \|M(\vecZ) - \mu_P \|^2_2 \geq t  \right] \leq \delta \right\} \; .
\end{equation}
\end{definition}
Readers can refer to \citet{ma2024high} for more details on the minimax PAC rate.
Stated simply, the minimax PAC rate is simply the smallest rate so that the probability that the estimator exceeds $L_{\mathsf{PAC}} (\bm{\lambda}, \mathcal{D}, \delta)$ is small.
Since by standard techniques, we can boost the error probability $\delta$, we will typically focus on the setting where $\delta$ is a small constant, and we will let $L_{\mathsf{PAC}} (\bm{\lambda}, \mathcal{D}) \coloneq L_{\mathsf{PAC}} (\bm{\lambda}, \mathcal{D}, 1/5)$.
We use this notion because it is necessary in the robust setting. In settings where the unknown mean could be unbounded, no estimator can achieve finite expected error, even in the standard Huber contamination model. This is because all samples may be corrupted with exponentially small probability, in which case the error could be arbitrarily large.

\paragraph{Mean Estimation for Bounded Distributions:} 
We first consider the class of bounded, multivariate distributions. 
For this setting, we show:
\begin{theorem}
\label{thm:bounded}
Let $\cD_r^b$ be the set of all distributions on $\bbR^d$ supported on the $l_2$ ball of radius $r$. 
Then,
\begin{equation}
L(\bm\lambda,\cD_r^b) \simeq r^2f(\bm\lambda,1) \; , \; \mbox{where} \; f(\bm\lambda,k) = \min_{t \in [0,1]} \left( \frac{k}{|\{i: \lambda_i \leq t \}|} + t^2\right)\; .
\end{equation}
Moreover, the optimal estimator can be implemented in nearly-linear time.
\end{theorem}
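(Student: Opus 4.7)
The overall strategy is to reduce the heterogeneous problem to standard Huber contamination on the cleanest subset of samples. Since $n'(t) = |\{i: \lambda_i \leq t\}|$ is piecewise constant with jumps at the order statistics $\lambda_{(1)} \leq \cdots \leq \lambda_{(n)}$, and $t^2 + 1/n'(t)$ is increasing between jumps, the continuous minimum defining $f(\bm\lambda,1)$ equals $\min_k (\lambda_{(k)}^2 + 1/k)$. Call this minimizer $k^*$ and set $t^* = \lambda_{(k^*)}$. A sort plus a linear scan finds $k^*$ in $O(n \log n)$ time.

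For the upper bound I would use the clip-then-average estimator on the $k^*$ samples with smallest $\lambda_i$: project each such $Z_i$ onto $B(0,r)$ and return the empirical mean. Splitting the error by $B_i$, the clean samples ($B_i = 0$) are exactly the $X_i \in B(0,r)$ and contribute variance $O(r^2/k^*)$, while each corrupted sample contributes at most $2r$ to the sum after clipping. Using the standard second-moment estimate $\bbE[\,|\{i: B_i = 1\}|^2\,] \leq (k^* t^*)^2 + k^* t^*$, the total MSE is $O(r^2/k^* + r^2 (t^*)^2) = O(r^2 f(\bm\lambda,1))$. Combined with sorting, this is $O(nd + n \log n)$, nearly linear in the input size.

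For the matching lower bound I would apply Le Cam's two-point method. Let $\tau^* \in (0, 1/2]$ be the largest $\tau$ with $n'(\tau) \tau^2 \leq c_0$ for a suitable small constant $c_0$; by the piecewise-constant structure of $n'(\cdot)$ this yields $(\tau^*)^2 \asymp 1/n'(\tau^*) \asymp f(\bm\lambda,1)$. Take $P_1 = \tfrac{1}{2}\delta_{re_1} + \tfrac{1}{2}\delta_{-re_1}$ and $P_2 = (\tfrac{1}{2}+\tau^*)\delta_{re_1} + (\tfrac{1}{2}-\tau^*)\delta_{-re_1}$, both in $\cD_r^b$, with $\text{TV}(P_1,P_2) = \tau^*$ and $\|\mu_{P_1} - \mu_{P_2}\| = 2 r \tau^*$. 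For each $i$ the adversary picks contamination distributions to minimize the TV between the two observed marginals; a short signed-measure calculation (matching $\lambda_i(B_i - A_i)$ against $(1 - \lambda_i)(P_1 - P_2)$) shows the two marginals coincide whenever $\lambda_i \geq \tau^*/(1+\tau^*)$ and otherwise have TV at most $\tau^*$, while remaining supported on $\{\pm r e_1\}$ with marginal probabilities bounded in $[1/4, 3/4]$. This regularity gives $H^2 \leq C\,\text{TV}^2$ per sample, so tensorizing gives total $H^2 \leq C n'(\tau^*)(\tau^*)^2 \leq 1/8$ for $c_0$ small. Le Cam then yields $L(\bm\lambda, \cD_r^b) \gtrsim r^2 (\tau^*)^2 \asymp r^2 f(\bm\lambda,1)$.

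The hardest step is the tensorization: the generic bound $H^2 \leq 2\,\text{TV}$ would only produce total $H^2 \lesssim n'(\tau^*)\,\tau^*$, losing a factor $1/\tau^*$ and breaking the match with the bias term. This is precisely why the construction must keep the contaminated Bernoulli-type marginals bounded away from $\{0,1\}$; an asymmetric choice such as $P_1 = \delta_0$ vs $P_2 = (1-\tau)\delta_0 + \tau\delta_{re_1}$ does not enjoy the quadratic-in-TV Hellinger bound and fails to give the tight rate. Two boundary cases then need brief checking: when $\tau^* = 1/2$ the lower bound recovers the trivial ceiling $r^2$, and when some $\lambda_i = 0$ forces $t^* = 0$, the upper-bound estimator simply averages the fully trusted samples, still attaining $O(r^2 f(\bm\lambda,1))$.
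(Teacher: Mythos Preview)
Your proposal is correct and follows essentially the same route as the paper: threshold-and-average for the upper bound, and Le Cam's two-point method with Bernoulli-type distributions on $\{\pm r e_1\}$ for the lower bound, with the adversary neutralizing all samples whose $\lambda_i$ exceeds the natural TV-based cutoff. The only noteworthy differences are cosmetic: the paper uses the symmetric pair $\mathrm{Ber}(1/2\pm\delta)$ and tensorizes via Pinsker/KL (bounding $\mathrm{KL}(\mathrm{Ber}(1/2-\delta),\mathrm{Ber}(1/2+\delta))\le 12\delta^2$), whereas you use the asymmetric pair $\mathrm{Ber}(1/2)$ vs.\ $\mathrm{Ber}(1/2+\tau^*)$ and tensorize via squared Hellinger with the quadratic bound $H^2\lesssim \mathrm{TV}^2$ for Bernoullis with parameters bounded away from $0$ and $1$; both yield the same $n'(\tau^*)(\tau^*)^2$ control. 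Your explicit clipping step is a clean way to handle adversarial points outside the ball, which the paper leaves implicit, and your observation that the continuous minimum equals $\min_k(\lambda_{(k)}^2+1/k)$ gives a slightly more concrete description of the near-linear algorithm than the paper's Algorithm~1 (which solves the full weighted problem rather than just the thresholding one, but is not needed for the minimax rate).
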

The function $f(\bm\lambda, k)$ can be thought of as a measure of the ``effective'' error rate of the dataset, and will play a crucial role in many of our results going forward.
Indeed, ~\cref{thm:bounded} implies that the optimal robust mechanism chooses a corruption level $t$ and only utilizes the data having corruption rate below $t$, i.e., there is no improvement by considering the other samples up to constant factors.

\paragraph{Mean Estimation for Gaussians:} 
Our next result concerns heterogeneous mean estimation for Gaussians. 
Our main result can be summarized as follows:
\begin{theorem}[informal, see~\cref{thm:gauss-minimax}]
Let $\cD_d^{\cN}$ the set of all multivariate Gaussian distributions on $\bbR^d$ with identity covariance.
Suppose that $f(\bm\lambda,d) = O(1)$, then $d^{-1/2} f(\bm\lambda, d) \lesssim L_{\mathsf{PAC}}(\bm\lambda,\cD_d^{\cN}) \lesssim f(\bm\lambda, d)$.
\end{theorem}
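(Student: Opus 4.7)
The plan is to reduce to the standard homogeneous Huber model via a sample-selection step. Let $t^*$ attain the minimum in $f(\bm\lambda, d) = \min_t (d/m_t + t^2)$, and let $S = \{i : \lambda_i \leq t^*\}$, so $|S| = m_{t^*}$. Discard all samples outside $S$. The retained subsample is then $\bm\lambda_S$-contaminated with $\max_{i \in S}\lambda_i \leq t^*$, which in particular can be coupled to a standard $t^*$-Huber-contaminated Gaussian sample of size $m_{t^*}$ (with a possibly adaptive adversary). Invoke any minimax-optimal robust Gaussian mean estimator for homogeneous Huber contamination — e.g., Tukey's halfspace median or a polynomial-time filtering algorithm — which achieves squared error $O(d/m + \eps^2)$ under $\eps$-Huber contamination with $m$ samples and constant failure probability. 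Plugging in $m = m_{t^*}$ and $\eps = t^*$ gives squared error $O(d/m_{t^*} + (t^*)^2) = O(f(\bm\lambda, d))$ with constant probability, yielding the PAC upper bound.

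\textbf{Lower bound.} The plan is a two-point Le Cam-style argument, with the adversary exploiting the heterogeneous structure of $\bm\lambda$. Fix a radius $r > 0$ and a unit vector $v \in \bbR^d$, and consider the two hypotheses $\mu_0 = 0$ and $\mu_1 = rv$. For each sample $i$, one computes that the adversary's optimal coupling of the noise distributions $N_i^0, N_i^1$ makes the per-sample total variation distance between the contaminated marginals equal to $\max\{0, (1-\lambda_i)\tau - \lambda_i\}$, where $\tau = \TV{\cN(\mu_0, I)}{\cN(\mu_1, I)} \asymp \min(r, 1)$. Tensorizing — either via sub-additivity of TV or via Hellinger — the joint TV over all $n$ samples is controlled by a functional of $\sum_i \max\{0,(1-\lambda_i)r - \lambda_i\}$, which is in turn bounded by $r \cdot m_r$. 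Le Cam's inequality then yields a squared error lower bound of $\Omega(r^2)$ as soon as the joint TV is at most $1/2$. Optimizing $r$ against this constraint, together with the assumption $f(\bm\lambda, d) = O(1)$, gives $r^2 \gtrsim d^{-1/2} f(\bm\lambda, d)$.

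\textbf{Main obstacle.} The hard part is extracting the dimensional factor $d^{-1/2}$ in the lower bound. A pure two-point Le Cam argument is essentially dimension-oblivious and yields a bound more naturally expressed in terms of $f(\bm\lambda, 1)$; conversely, a full multi-hypothesis Fano argument over a $2^{\Theta(d)}$-sized packing of the sphere of radius $r$ in $\bbR^d$ — where the adversary can ``neutralize'' sample $i$ and contribute zero mutual information whenever $(1-\lambda_i) \int \max_k \cN(\mu_k, I) \leq 1$, and otherwise leaks at most $(1-\lambda_i)r^2/2$ bits by the data-processing inequality — tends to produce looser dimension factors such as $d$ or $\log d$. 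Carefully balancing the packing size $N$, the radius $r$, and the heterogeneous profile $\bm\lambda$ to land exactly on $d^{-1/2} f$ under the $f = O(1)$ regime is the crux of the argument. Closing the resulting $\sqrt d$ gap to a tight lower bound of $\Omega(f)$ appears to require genuinely new techniques and is left open.
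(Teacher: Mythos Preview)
Your upper bound is correct and matches the paper's approach: threshold at $t^*$, keep $\{i:\lambda_i\le t^*\}$, and apply a homogeneous-$\eps$ robust Gaussian mean estimator (the paper uses Tukey's median) to obtain squared error $O(d/N(t^*)+(t^*)^2)=O(f(\bm\lambda,d))$ with constant probability.

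The lower bound, however, has a genuine gap. Your claim that the two-point Le Cam optimization yields $r^2\gtrsim d^{-1/2}f(\bm\lambda,d)$ is false, as you effectively concede in your own ``Main obstacle'' paragraph: two-point Le Cam is dimension-oblivious and delivers only $f(\bm\lambda,1)$. Concretely, take $\lambda_i\equiv 0$ (which satisfies $f(\bm\lambda,d)=d/n=O(1)$ once $n\gtrsim d$): your argument gives $r^2\asymp 1/n$, a full factor of $\sqrt d$ short of the target $d^{-1/2}f(\bm\lambda,d)=\sqrt d/n$. Your Fano sketch over a $2^{\Theta(d)}$ spherical packing also fails to land on $d^{-1/2}$, as you yourself note. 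So the proposal identifies the obstacle but does not overcome it.

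The missing idea is to run Assouad over a hypercube of \emph{intermediate} dimension $\sqrt d$: place hypotheses at $\delta e(\tau)$ for $\tau\in\{-1,1\}^{\sqrt d}$, embedded in the first $\sqrt d$ coordinates of $\bbR^d$. The product structure makes the normalizing constant of the pointwise-max density factor as $T=(2\Phi(\delta))^{\sqrt d}\le e^{\delta\sqrt d}$, so the adversary can neutralize every sample with $\lambda_i\ge 1-e^{-\delta\sqrt d}$; meanwhile Assouad contributes a prefactor of $\sqrt d$ from the coordinate sum, and the KL between Hamming neighbors is only $2\delta^2$. This yields $L_{\bbE}\gtrsim\sqrt d\,\delta_*^2$ at the balancing point $N(1-e^{-\delta_*\sqrt d})\asymp 1/\delta_*^2$, and since the thresholding upper bound shows $f(\bm\lambda,d)\le d\delta_*^2$ at the same point, one obtains $L_{\mathsf{PAC}}\gtrsim d^{-1/2}f(\bm\lambda,d)$ after a boundedness argument converting expectation to the PAC guarantee. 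Neither the $t=1$ extreme (your Le Cam) nor the $t=d$ extreme reaches this; the $\sqrt d$ choice is precisely what balances the adversary's neutralization threshold against the Assouad prefactor.
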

In other words, we show that, in the regime where there are sufficiently many relatively clean samples, $f(\bm{\lambda}, d)$ dictates the minimax rate, up to a $\sqrt{d}$ factor.
Note that this is arguably the most interesting regime from a statistical perspective, as it is the only regime where the recovered Gaussian has non-trivial statistical overlap with the true Gaussian.
We note that our lower bound technique also yields non-trivial bounds in more general settings, however, they are somewhat more difficult to interpret---see the supplementary material for more discussion.

We pause to make a couple of remarks on this result.
First, in constant dimensions, our bound is tight up to constant factors, so our bound is tight in this regime.
Second, we note that na\"{i}ve multi-dimensional techniques such as coordinate-wise methods would lose a $\Theta(d)$ factor in the squared $\ell_2$-error, so our bound represents a polynomial improvement over baseline methods.

\paragraph{Linear Regression:}
Finally, we turn our attention to Gaussian design linear regression with heterogeneous corruptions.
We assume the uncorrupted covariates are Gaussian with covariance matrix $\Sigma \in \bbR^{d \times d}$ and noise rate $\sigma^2$.
Then, for any choice of regression coefficients $\beta \in \bbR^d$, we assume that the clean data is of the form $(W, Y) \in \bbR^{d + 1}$ where $W \sim \cN(0,\Sigma)$, and conditioned on $W$, $Y \sim \cN(W^T\beta,\sigma^2)$.
We let $\cD(\Sigma,\sigma^2)$ the family of distributions over $(W, Y)$ of this form.
For any $P \in \cD(\Sigma,\sigma^2)$, we let $\beta_P$ denote its associated regression coefficients. 
In analogy to~\cref{eq:minimax-pac}, we define the minimax PAC rate of squared excess risk under heterogeneous corruptions to be
\begin{equation}
L_{\mathsf{reg}}(\bm\lambda,\cD(\Sigma,\sigma^2), \delta) = \inf_M \sup_{P \in \cD(\Sigma,\sigma^2)} \inf \left\{t \in [0,\infty): \Pr_{\vecZ \sim_{{\bm\lambda}} P} \left[ \|M(\vecZ) - \beta_P \|_{\Sigma}^2 > t \right] \leq \delta \right\} \; ,
\end{equation}
and as before, we let $L_{\mathsf{reg}}(\bm\lambda,\cD(\Sigma,\sigma^2)) \coloneq L_{\mathsf{reg}}(\bm\lambda,\cD(\Sigma,\sigma^2), 1/5)$.
Here $\| x \|_\Sigma = x^\top \Sigma x$, which is known to be the natural affine-invariant measure of error for linear regression~\cite{Wainwright_2019}.
Note that this is the standard joint contamination model~\cite{klivans2018efficient,diakonikolas2019sever}, where both the covariate and the target can be jointly corrupted by the adversary, i.e., $Z_i = (W_i (1 - B_i) + B_i \tilde{W_i}, Y_i (1 - B_i) + B_i \tilde{Y_i})$.
This is opposed to the simpler target contamination model also considered in the literature where only the target $Y$ can be corrupted \cite{kawashima2023robust}.

Our main result in this setting is as follows:
\begin{theorem}[informal, see~\cref{thm:LR}]
Suppose that $f(\bm\lambda,d) = O(1)$, then $d^{-1/2} f(\bm\lambda, d) \lesssim L_{\mathsf{reg}}(\bm\lambda,\cD(\Sigma,\sigma^2))  \lesssim f(\bm\lambda, d)$.
\end{theorem}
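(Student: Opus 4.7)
The plan is to establish the upper and lower bounds separately, closely following the template of the Gaussian mean estimation result in~\cref{thm:gauss-minimax}, with additional care taken for the randomness of the covariates.

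For the upper bound, I would employ a threshold-and-black-box strategy. Let $t^*$ achieve the minimum in the definition of $f(\bm\lambda,d)$, and set $S := \{i : \lambda_i \leq t^*\}$ with $m := |S|$, so that $f(\bm\lambda,d) \simeq d/m + (t^*)^2$. Restricting to samples indexed by $S$, each such sample is independently corrupted with probability at most $t^*$; a Chernoff bound shows the realized fraction of corrupted samples in $S$ is $O(t^*)$ with high constant probability. Under the assumption $f(\bm\lambda,d) = O(1)$ we have $t^* \lesssim 1$ and $m \gtrsim d$, so a standard robust Gaussian-design linear regression algorithm (e.g., Klivans--Kothari--Meka or Diakonikolas--Kong--Stewart) applied to the restricted dataset returns $\hat\beta$ with $\|\hat\beta - \beta_P\|_\Sigma^2 \lesssim d/m + (t^*)^2 \simeq f(\bm\lambda,d)$ with constant probability.

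For the lower bound, I would mirror the Le Cam two-point construction used in the heterogeneous Gaussian mean estimation lower bound. Specifically, I would build a pair of regression problems $\{P_0, P_1\}$ whose regression coefficients differ only along a single coordinate direction by roughly $d^{-1/4} f(\bm\lambda,d)^{1/2}$, so that $\|\beta_0 - \beta_1\|_\Sigma^2 \simeq d^{-1/2} f(\bm\lambda,d)$. The adversary's contamination budget on the samples with $\lambda_i > t^*$ is then used to align the contaminated distributions of the two hypotheses, keeping their total-variation distance bounded by a constant. The $d^{-1/2}$ slack factor arises because the adversary concentrates its attack along a single direction while the squared $\Sigma$-norm sums over all $d$ directions; matching this with a multi-directional attack is what would close the gap, and is what makes the tight lower bound difficult.

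The main obstacle I anticipate is bridging the random pattern of corruption (driven by the Bernoullis $B_i$) to the worst-case corruption budget assumed by off-the-shelf robust regression algorithms. To address this, I would condition on the high-probability event that the number of corruptions in $S$ is within a constant factor of $m t^*$; on this event, the restricted dataset fits the standard Huber framework with corruption rate $O(t^*)$, and the Gaussianity of the retained clean covariates is preserved via a mild change-of-measure argument. A secondary concern is that the robust algorithm's excess-risk guarantee must be stated in $\Sigma$-norm (rather than Euclidean norm of the parameter), but this form is standard in the aforementioned references.
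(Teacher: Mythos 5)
Your upper bound is essentially the paper's: thresholding at $t^*$ and running a statistically optimal robust regression procedure on the subsample is exactly what $\hat\beta_{\mathsf{S}}(\vecZ,w(t^*))$ does (the weighted regression depth with thresholding weights is just Gao's regression-depth estimator on $\{i:\lambda_i\le t^*\}$), and the adaptive/Bernoulli corruption pattern is handled in the paper by a McDiarmid bound on the weight of corrupted samples, which plays the role of your Chernoff step. One caveat: the black boxes you cite (Klivans--Kothari--Meka, Diakonikolas--Kong--Stewart) are computationally efficient but do \emph{not} attain the information-theoretic $\sigma^2\epsilon^2$ dependence (they lose at least polylogarithmic, and in the hypercontractive setting polynomial, factors in $\epsilon$), so as written they would not give the $t^{*2}$ term in $f(\bm\lambda,d)$. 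Since \cref{thm:LR} makes no efficiency claim, the fix is simply to invoke the regression-depth estimator of \cite{Gao20} (or the paper's weighted version in \cref{sec:LR-UB}) instead.

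The genuine gap is in the lower bound. A two-point Le Cam construction in which the two regression vectors differ along a \emph{single} direction by $d^{-1/4}f^{1/2}$ cannot yield the claimed $d^{-1/2}f(\bm\lambda,d)$ bound in general: a one-directional two-point family can never certify more than the one-dimensional rate $\min_t\bigl(1/N(t)+t^2\bigr)$. Concretely, take $\bm\lambda=\bm 0$ with $n$ samples, so $f=d/n$ and the target lower bound is $\sqrt{d}/n$; your construction requires separation $\delta^2\simeq\sqrt{d}/n$ while keeping the total variation between the two $n$-fold product distributions bounded, but with no samples available for corruption the per-sample KL is $\Theta(\delta^2)$, so boundedness of the TV forces $\delta^2\lesssim 1/n$ --- a contradiction once $d\gg 1$. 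The same failure occurs whenever the $d/N(t)$ term dominates $f$. This is precisely why the paper does not use a two-point argument in \cref{sec:LR-LB}: it runs Assouad's method over a $\sqrt{d}$-dimensional hypercube of coefficients $\beta(\tau)=\delta\Sigma^{-1/2}e(\tau)$, $\tau\in\{-1,1\}^{\sqrt{d}}$, with an adversary that simulates the normalized pointwise maximum of all $2^{\sqrt{d}}$ densities (normalizer $T\le e^{2\delta\sqrt{d}/(\pi\sigma)}$, so samples with $\lambda_i\ge 1-e^{-2\delta\sqrt{d}/(\pi\sigma)}$ are neutralized), and then applies a Le Cam/Pinsker bound per flipped coordinate. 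The $\sqrt{d}$ coordinates are what supply the extra dimension factor, and the choice of hypercube dimension $\sqrt{d}$ (rather than $d$) is what lets a single-direction corruption budget neutralize samples; your proposal is missing this multi-directional packing, and "concentrating the attack along a single direction" does not substitute for it.
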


In other words, as before, we establish the tight rate up to a factor of $O(\sqrt{d})$.

\paragraph{Upper bound techniques} For all four problems, the upper bounds in the theorem statements can be achieved by a thresholding estimator -- set an appropriate threshold $t$ and take the optimal robust mean estimator with homogeneous corruption rate $t$.
Thus, at least for bounded distributions and Gaussians in a constant number of dimensions, our results show that there is no significant benefit in collecting data with higher corruption rate beyond a certain threshold.

\paragraph{Per-sample reweighting}
In addition to this simple thresholding method, we also give a family of more refined estimators that find an optimal per-sample weighting scheme, that match the theoretical guarantees of the simple thresholding estimator, but which has a number of additional advantages over it.
First, as we discuss below, these methods seem to sometimes perform better in practice in preliminary synthetic evaluations.
Second, in the high-dimensional settings, the thresholding-based methods have error rates which seem to plateau, resulting a the $\sqrt{d}$ gap in the upper and lower bounds.
We believe resolving this gap is a very interesting open question.
As a first step, we derive natural heterogeneous variants of Tukey depth~\cite{Tukey75} and regression depth~\cite{Gao20} using these per-sample reweighting methods.
We believe that these methods may allow us to bridge this gap, by leveraging the higher corruption points that the thresholding-based methods ignore to boost the accuracy.
To reiterate, we propose the reweighted estimators as a possible avenue to improve the upper bounds but the estimators used to obtain the minimax upper bounds are based on the thresholding estimators.

\paragraph{Lower bound techniques} A key technical challenge that separates our analysis from that of standard homogeneous robust statistics analysis is that of capturing the heterogeneity in the lower bound.
The standard approach for the lower bound in homogeneous robust statistics is to add a term due to the corruption rate to the non-robust lower bound \cite{Gao18}.
For example, in robust $d$-dimensional Gaussian mean estimation under identity covariance with $\epsilon$ corruption, the minimax rate is of the order $\Theta(\frac{d}{n} + \epsilon^2)$, where $n$ is the number of samples.
The term $\frac{d}{n}$ is obtained from classical statistics literature while the $\epsilon^2$ term captures the fact that two distributions with mean within $l_2$ distance of $\epsilon$ can not be distinguished due to the $\epsilon$ contamination  -- proved via Le Cam's method \cite{Wainwright_2019}. 
Such a two-staged approach is unsuitable for our setting since the number of samples $n$ can be made arbitrarily large by artificially appending samples with $\lambda = 1$.
Le Cam's method captures the difficulty of the problem in terms of corruption rate but fails to capture the dimensionality of the problem, while Assouad's method can capture the difficulty of the problem in higher dimension, but seems to have difficulty capturing the power of an adversary.
Thus, the challenge lies in constructing tight lower bounds that incorporate both the terms jointly.
Readers may refer to our lower bound construction in \cref{sec:Gaussian-LB} for more details.

\paragraph{Experimental evaluations}
While we emphasize that the main contribution of this work is theoretical, in the supplementary material, we also perform some preliminary synthetic evaluations to validate the effectiveness of our methods. 
In the bounded and univariate Gaussian settings, we demonstrate that both the thresholding-based methods as well as the per-sample reweighting methods outperform  baselines from the standard homogeneous robust statistics literature.
Our results also demonstrate that in some settings, the per-sample reweighting methods also yield improvements over the threshold-based methods in practice.

\subsection{Related Works}
As discussed previously, the robust statistics literature has typically focused on a homogeneous corruption rate. 
In terms of heterogeneity, \citet{Charikar17} introduced the notion of semi-verified learning where a small amount of data is sampled from the true distribution in conjunction with a dataset having only $\alpha$ fraction of the data uncorrupted ($\alpha \ll 1$).
The semi-verified learning problem is closely related to the problem of list-decodable estimation introduced by \citet{Balcan08}; list-decodable learning aims to recover a list of possible values of the quantity being estimated with the guarantee that one of entries in the list is close to the true value.
Typical results in semi-verified learning lead of an error that depends on $\alpha$ but does not typically scale with the number of clean samples.
There has been a line of work extending their results \cite{Meister18,Zeng23,Ilias18,Ilias20,Prasad20}.

Robustness and privacy are known to be deeply related \cite{georgiev2022privacy,hopkins2023robustness,Asi23,Dwork09}.
In the privacy literature there has been a line of work on optimal mechanisms under heterogeneous privacy demands \cite{Cumm23,Fallah,Chaudhuri25}.
However, these works only focus on bounded univariate setting, and do not seem to imply anything directly for the heterogeneous corruption setting we consider.

\textit{Organization:} We define the problem of mean estimation for bounded distributions in \cref{sec:bounded}, and provide a general proof sketch to obtain the minimax rate. In \cref{sec:gaussian}, we present our results on Gaussian mean estimation for both univariate and multivariate setting, followed by results on excess risk minimization for Gaussian linear regression in \cref{sec:reg}.
We present our thoughts on future works in \cref{sec:disc}.
Some illustrative experiments can be found in \cref{sec:exp}.

\section{Mean Estimation for Bounded Distributions} \label{sec:bounded}

In this section, we prove Theorem~\ref{thm:bounded}.

\subsection{Upper Bound}

To obtain an upper bound on $L(\bm\lambda, \cD_r^b)$, we shall restrict our attention to the class of estimators of the form $\sum_{i=1}^n w_i Z_i$ for $w \in \Delta_n$.
Via standard bias-variance decomposition,
\begin{align}
	\bbE\left[\left\|\sum_{i=1}^n w_i Z_i - \mu_P\right\|_2^2 \right] = \left\|\sum_{i=1}^n w_i \bbE[Z_i] - \mu_P\right\|_2^2  + \bbE\left[\left\|\sum_{i=1}^n w_i (Z_i - \bbE[Z_i])\right\|_2^2 \right].
\end{align}
Denoting $E[\tilde X_i|B_i = 1] = \mu_{Q_i}$, the bias term can be upper bounded as  
\begin{equation}
    \left\|\sum_{i=1}^n w_i (\bbE[Z_i] - \mu_P)\right\|_2^2 =  \left\|\sum_{i=1}^n w_i\lambda_i (\mu_{Q_i}  - \mu_P)\right\|_2^2 \leq 4r^2 (w^T\lambda)^2,
\end{equation}
using Jensen's inequality and the fact that $\|\mu_{Q_i}  - \mu_P\|_2 \leq 2r$.
The variance term needs to be bounded with more care since $\tilde\vecX$ are not independent can dependent on $\vecX, \vecB$.
It can be upper bounded as $\bbE\left[\left\|\sum_{i=1}^n w_i (Z_i - \bbE[Z_i])\right\|_2^2 \right] \leq 7r^2\|w\|^2_2 + 16r^2 (w^T\lambda)^2$ (see \cref{sec:Var-UB}).
Thus, we get the upper bound
\begin{equation} 
	\bbE\left[\left\|\sum_{i=1}^n w_i Z_i - \mu_P\right\|_2^2 \right] \leq 7r^2 \left(\Lt{w}^2 + 3(w^T\lambda)^2\right) \ \forall w \in \Delta_n. \label{eq:bounded-ub-0}
\end{equation}
Taking minimum over $w \in \Delta_n$, 
\begin{equation}
    L(\bm\lambda, \cD_r^b) \leq 7r^2 \min_{w \in \Delta_n} \Lt{w}^2 + 3(w^T\lambda)^2. \label{eq:bounded-ub}
\end{equation}
Since Slater's condition holds, by KKT we obtain the solution of the above to be of form $w_i = (\beta - 3(w^T\lambda) \lambda_i)_{+}$, where $(x)_{+} = \max\{x,0\}$, for suitable $\beta$ that ensures $w \in \Delta_n$.
Exploiting this structure, we obtain a near-linear time algorithm to find the exact minimizer of $\Lt{w}^2 + c (w^T\lambda)^2$ presented in \cref{alg:bounded}.
For a vector $\lambda$, the notation $\lambda_a^b$ denotes the sub-vector $(\lambda_a,\ldots,\lambda_b)$ for $a \leq b$.
The proof of correctness is presented in \cref{sec:app-bounded-mean}.

\begin{algorithm}
	\caption{Robust Mean Estimation for Bounded Distributions}
	\begin{algorithmic}
		\STATE Input: $\vecZ$ and corresponding $\lambda$, with $\lambda$ assumed to be in ascending order
		\STATE $n \gets \textsc{length}(\lambda)$
		\STATE $k \gets 1$
		\WHILE{$k \leq n$}
		\IF{$\lambda_{k+1} < \frac{1+c\Lt{\lambda_1^{k}}^2}{c\Lo{\lambda_1^{k}}}$}
		\STATE $k \gets k+1$ 
		\ELSE 
		\STATE break
		\ENDIF
		\ENDWHILE
		\STATE $\beta \gets \frac{1+c\Lt{\lambda_1^{k}}^2}{k(1+c\Lt{\lambda_1^{k}}^2) - c\Lo{\lambda_1^{k}}^2}$,\ \  $\alpha \gets \frac{c \Lo{\lambda_1^{k}} \beta}{1 + c\Lt{\lambda_1^{k}}^2}$.
		\STATE $w_{1}^{k} \gets \beta - \alpha \lambda_1^{k}$, \ \ $w_{k+1}^n \gets \bm0$
		\STATE \textbf{return} $\sum_{i=1}^n w_i Z_i$
	\end{algorithmic}
    \label{alg:bounded}
\end{algorithm}

\paragraph{Simpler Upper Bound:} While \cref{alg:bounded} recovers the exact minimizer for \eqref{eq:bounded-ub}, we also discuss a significantly simpler upper bound -- take the mean of the samples with corruption probability less than $t$ for a hyperparameter $t$.
Let $N(t) = |\{i:\lambda_i \leq t\}|$.
Therefore, for any $t \in [0,1]$, we get the upper bound from \eqref{eq:bounded-ub-0}  
\begin{equation}
    L(\lambda,r) \lesssim  r^2 \left( \frac{1}{N(t)} + t^2 \right). \label{eq:bounded-ub-threshold}
\end{equation}
We can similarly take a minimum over $t$ to get an upper bound on $L(\bm\lambda, \cD_r^b)$.
This simpler upper bound suffices for proving minimax optimality.

\subsection{Lower Bound}

We use Le Cam's two-point method to obtain the lower bound.
Fix some $0 \leq \delta \leq \frac{1}{4}$.
Let $P_0 = re_1 (2\text{Ber}(\frac{1}{2} - \delta) - 1)$ and $P_1 = re_1(2\text{Ber}(\frac{1}{2} + \delta) - 1)$, where $e_1$ is a unit vector; we have $\|\mu_{P_1} - \mu_{P_2}\|_2^2 = 4r^2 \delta^2$.

Under distribution $P_0$, consider the  strategy such that adversary tries to ensure that the $i$-th sample is as close to  $re_1(2\text{Ber}(\frac{1}{2})-1)$ as possible in mean.
Let $Q_i = re_1(2\text{Ber}(\gamma_i)-1)$ with 
\begin{equation}
	\gamma_i = \begin{cases} \frac{1}{2}-\delta + \frac{\delta}{\lambda_i} & \text{if } \lambda_i \geq \frac{2\delta}{1+2\delta}, \\
		\frac{1}{2} - \delta & \text{else},
	\end{cases}
\end{equation}
then the adversary simply samples $\tilde Y_i$ independently from $Q_i$.
Thus, the adversary can simulate\footnote{By simulate, we mean the mixture distribution matches the claimed distribution.} the distribution $re_1(2$Ber$(\frac{1}{2}) - 1)$ when $\lambda_i \geq \frac{2\delta}{1+2\delta}$. 
If $\lambda_i < \frac{2\delta}{1+2\delta}$, then the adversary does not perturb the distribution at all,  or equivalently, samples $\tilde X_i$ from $P_0$ independently.
Thus, the proposed adversarial strategy results in $Z_i$ drawn independently from $re_1(2$Ber$(\frac{1}{2} - \epsilon_i) - 1)$, where 
\begin{equation}
	\epsilon_i := \delta \bbI\left\{\lambda_i < \frac{2\delta}{1+2\delta} \right\}.
\end{equation}
Similar arguments hold when the underlying distribution is $P_1$.

Let $n(t) = |\{i:\lambda_i < t\}|$. Using Le Cam's method, we show in \cref{sec:Bounded-LB-proof} 
\begin{align}
	L(\lambda,r) \geq r^2\delta^2 \left( 1 - \sqrt{6\delta^2 n(2\delta)}\right) \ \forall \delta \in \left[0,\frac{1}{4}\right] \label{eq:lb}
\end{align}

\subsection{Proving Minimax Optimality}

We outline how to pick a `good' $\delta$.
For $x \in [0,\frac{1}{4}]$, let $h(x) = N(2x) - \frac{1}{12x^2}$.
Note that $h$ is monotone, piece-wise continuous, and $\exists \alpha >0$ such that $h(\alpha) < 0$. 
We use the notation $h(x^-)$ to denote the left limit of $h$ at $x$; note that $h(x^-) = n(2x) - \frac{1}{12x^2}$.

Consider the case that $h(\frac{1}{4}) < 0$, i.e., $ |\{i: \lambda_i \leq \frac{1}{2} \}| < \frac{4}{3}$.
This conditions correspond to extremely high levels of corruption and unsurprisingly, we show the minimax rate to be $\Theta(r^2)$.
Plug-in $\delta = \frac{1}{4}$ in \eqref{eq:lb} to get $L(\lambda,r) \gtrsim r^2$. 
We can get matching upper bound in \eqref{eq:bounded-ub-threshold} by setting $t=1$.

Alternately, if $h(\frac{1}{4}) > 0$ then since $h(\cdot)$ is a \textit{cadlag} function and $h(\alpha) < 0$, there exists $\delta_* \in (0,\frac{1}{4}]$ such that $h(\delta_*) \geq 0$ and $h(\delta_*^-) \leq 0$, i.e., $N(2\delta_*) \geq \frac{1}{12\delta_*^2}$ and $n(2\delta_*) \leq \frac{1}{12\delta_*^2}$.
Plug-in $\delta_*$ in \eqref{eq:lb} to get
\begin{align}
	L(\lambda,r) &\geq r^2\delta_*^2(1-\sqrt{6\delta_*^2n(2\delta_*)}) \\
	&\geq r^2  \delta_*^2 \left(1-\sqrt{\frac{1}{2}} \right) \simeq r^2 \delta_*^2.
\end{align}
In the upper bound of \eqref{eq:bounded-ub-threshold}, set $t = 2\delta_*$ to get 
\begin{align}
	L(\lambda,r) &\lesssim  r^2 \frac{1}{N(2\delta_*)} + r^2 \delta_*^2 \\
	&\leq 12r^2 \delta_*^2 + r^2\delta_*^2 \simeq r^2 \delta_*^2,
\end{align}
proving minimax optimality of the proposed schemes in \eqref{eq:bounded-ub} and \eqref{eq:bounded-ub-threshold} up to constant factors.
\section{Mean Estimation for Gaussian Distributions} \label{sec:gaussian}

\begin{figure}
    \centering
\includegraphics[width=\textwidth]{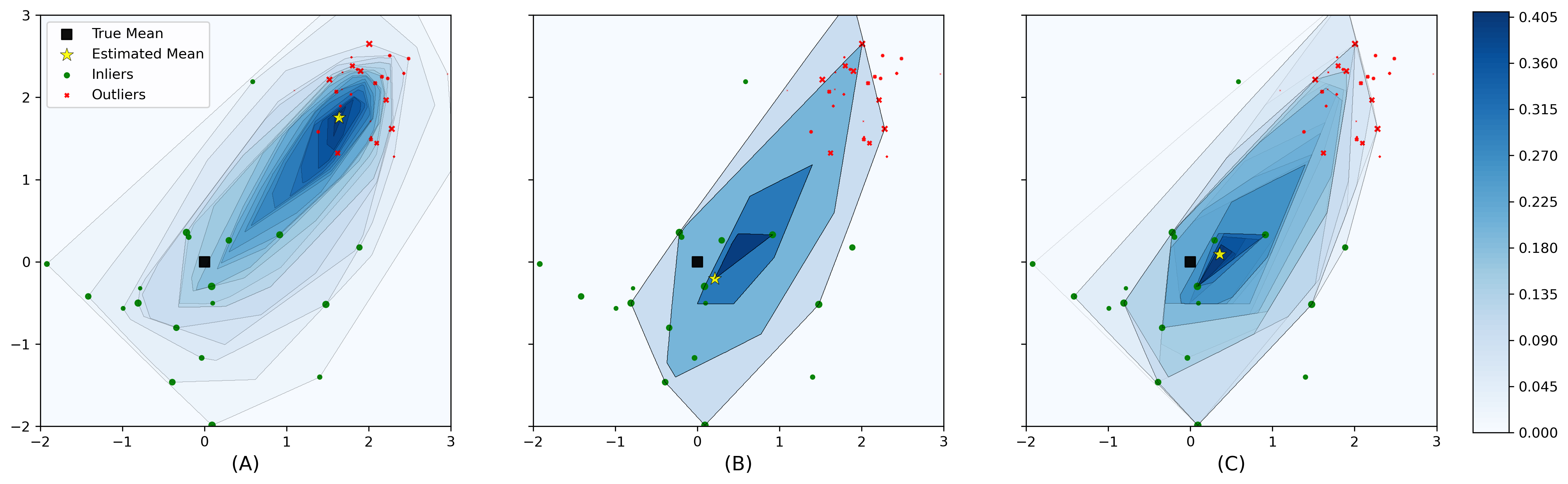}
    \vspace{-10pt}\caption{Plot of weighted Tukey depth (see \eqref{eq:tukey-depth}) visualized for three different weighing schemes. 
    (A) is computed with the standard uniform weights $w_i = \frac{1}{n}$, (B) is computed with $w_i = \frac{\bbI\{\lambda_i \leq t \}}{|\{j:\lambda_j \leq t \}|}$ using the value of $t$ from \eqref{eq:t-exp}, and (C) is computed with weights given by \cref{alg:bounded}.
    For the dataset, the true underlying distribution is $\cN((0,0),I)$, and $\lambda$ is sampled i.i.d.
    Points are contaminated by replacing them with samples from $\cN((2,2),I/5)$. 
    The samples are marked in red `\red{x}' if they were contaminated; the size of the markers for each point is proportional to $1-\lambda_i$. The estimated mean, the point with maximum depth, is marked with a yellow star.}
    \label{fig:TukeyDepth}
\end{figure}

In this section, we describe our results for the problem of mean estimation for multivariate Gaussian distributions at a high level. 
For conciseness, we defer the complete proofs to the appendix.

\paragraph{Upper Bound:} 
We propose a weighted version of the Tukey median \cite{Tukey75}. 
Define the weighted depth of a point $\eta \in \bbR^d$ for a dataset $\vecZ = (Z_1,\ldots,Z_n)$ and $w \in \Delta_n$ as
\begin{align} \label{eq:tukey-depth}
    D_{w}(\eta, \vecZ) = \min_{v \in \bbS_{d}} \sum_{i=1}^n w_i \bbI\{ v^T(Z_i-\eta) \geq 0\},
\end{align}
where $\bbS_{d} = \{ x \in \bbR^d : \|x\|_2 = 1\}$.
The weighted Tukey median is defined as
\begin{equation}
    \hat{\mu}_{\mathsf{TM}}(\vecZ,w) := \arg\max_{\eta \in \bbR^d} D_w(\eta,\vecZ).
\end{equation}
The standard Tukey depth (i.e., $w = 1/n$) of a point $\eta$ is the minimum number of samples in a closed half-space with $\eta$ at the boundary. 
Thus, depth of $\eta$ is high if it is surrounded by samples in all directions.
Our weighted version allows us to decrease the sensitivity to samples which are more likely to be corrupted.
In \cref{fig:TukeyDepth}, we illustrate the Tukey depth map for three different weighing schemes which demonstrates how using the weights $w$ can be used to leverage the heterogeneity.
\cref{thm:gauss-ub} establishes an upper bound on the performance of the weighted Tukey median.

\begin{proposition}[Upper Bound for Gaussian Distributions] \label{thm:gauss-ub}
For all $w\in \Delta_n$ satisfying $(w^T\lambda)^2 + d\Lt{w}^2 \leq c$ for some universal constants $c$,
\begin{equation}
\sup_{P\in \cD_d^{\cN}} \mathrm{Pr}_{\vecZ \sim_{\bm{\lambda}} P}\left[ \| \hat\mu_{\mathsf{TM}}(\vecZ,w) - \mu_P\|_2^2 \geq c' \left(  (w^T\lambda)^2 + d\Lt{w}^2 \right) \right] \leq 1/5 \; ,
\end{equation}
for some universal constant $c'$.
\end{proposition}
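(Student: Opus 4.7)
The plan is to follow the standard two-step analysis of Tukey median in the robust setting (as in Chen--Gao--Ren and Gao), but with a careful accounting of the sample-dependent weights. Let $P = \cN(\mu,I)$ be the true Gaussian, and for each halfspace direction $v\in\bbS_d$ and candidate center $\eta\in\bbR^d$ let
\begin{equation*}
\widehat H_{v,\eta}(\vecZ) = \sum_{i=1}^n w_i\, \bbI\{v^T(Z_i-\eta)\geq 0\}, \qquad H_{v,\eta}^P = \Pr_{X\sim P}[v^T(X-\eta)\geq 0] = \Phi(v^T(\mu-\eta)).
\end{equation*}
The weighted Tukey depth is $D_w(\eta,\vecZ)=\min_v \widehat H_{v,\eta}$. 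The strategy is to show that uniformly over $(v,\eta)$, $\widehat H_{v,\eta}$ is close to the clean-population quantity $H_{v,\eta}^P$, then invoke the Gaussian geometry of $H_{v,\eta}^P$ to convert depth guarantees into a mean-error bound.

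First I would handle the \emph{corruption bias}. Because $Z_i=(1-B_i)X_i+B_i\tilde X_i$ with $B_i\sim\mathsf{Bern}(\lambda_i)$, one has $\bbE[\bbI\{v^T(Z_i-\eta)\geq 0\}\mid X_i,B_i=0]=\bbI\{v^T(X_i-\eta)\geq 0\}$, while the contribution from corrupted points is bounded trivially by $1$. Hence, conditional on $\vecX$, the expectation of $\widehat H_{v,\eta}$ differs from $\sum_i w_i\bbI\{v^T(X_i-\eta)\geq 0\}$ by at most $\sum_i w_i\lambda_i=w^T\lambda$, uniformly over $(v,\eta)$.

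Second I would handle the \emph{stochastic fluctuation} via weighted uniform convergence. For the clean samples, the deterministic weights let me apply a bounded-differences / Rademacher argument: for fixed $(v,\eta)$, changing one $X_i$ changes $\sum_i w_i \bbI\{v^T(X_i-\eta)\geq 0\}$ by at most $w_i$, so McDiarmid gives Gaussian tails with variance proxy $\Lt{w}^2$. To upgrade this to a uniform bound over the class of halfspaces (VC dimension $d+1$), I would use the standard symmetrization/chaining argument applied to the weighted empirical process; since the envelope is $1$ and the weighted $L_2$ norm of a single indicator is at most $\Lt{w}$, the Rademacher complexity is $O(\Lt{w}\sqrt{d})$. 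Combining, with probability at least $1-1/5$,
\begin{equation*}
\sup_{v,\eta}\Bigl|\widehat H_{v,\eta}(\vecZ) - H^P_{v,\eta}\Bigr| \;\lesssim\; w^T\lambda + \sqrt{d\Lt{w}^2} \;=:\; \Delta.
\end{equation*}

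Third, I would plug this into the Gaussian geometry of the depth function. Since $H^P_{v,\eta}=\Phi(v^T(\mu-\eta))$, one has $\min_v H^P_{v,\mu}=1/2$ while $\min_v H^P_{v,\eta}=\Phi(-\Lt{\mu-\eta})\leq 1/2-c\Lt{\mu-\eta}$ whenever $\Lt{\mu-\eta}$ is bounded by an absolute constant (using $\Phi(-t)\leq 1/2-t/\sqrt{2\pi}+O(t^3)$). The hypothesis $(w^T\lambda)^2+d\Lt{w}^2\leq c$ ensures $\Delta$ is smaller than a small absolute constant, putting us in this linear regime. By the uniform bound, $D_w(\mu,\vecZ)\geq 1/2-\Delta$, and for any $\eta$ with $\Lt{\mu-\eta}\geq C\Delta$ with $C$ large, $D_w(\eta,\vecZ)\leq 1/2-c C\Delta +\Delta<1/2-\Delta$. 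Since $\hat\mu_{\mathsf{TM}}$ maximizes the weighted depth, it must satisfy $\Lt{\mu-\hat\mu_{\mathsf{TM}}}\lesssim\Delta$, giving $\Lt{\mu-\hat\mu_{\mathsf{TM}}}^2\lesssim (w^T\lambda)^2+d\Lt{w}^2$, as claimed.

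The main obstacle is the \emph{weighted} uniform convergence over halfspaces: with non-uniform $w_i$, the usual $\sqrt{d/n}$ VC bound must be replaced by a Rademacher bound whose variance scale is set by $\Lt{w}$ rather than by $1/\sqrt{n}$. This is the step where the precise $\sqrt{d\Lt{w}^2}$ term appears (rather than, say, $\sqrt{d}\,\Li{w}$ from a naive union bound), and it is what makes the threshold-style choice of $w$ in the companion upper-bound argument match the $f(\bm\lambda,d)$ rate in the theorem.
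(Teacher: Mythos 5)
Your proposal is correct and follows essentially the same route as the paper: a weighted VC/Rademacher uniform-convergence bound over halfspaces scaling as $\Lt{w}\sqrt{d}$, a corruption term controlled by the (concentrated) weight $w^T\lambda$ of contaminated points, and the Gaussian geometry $\min_v \Phi(v^T(\mu-\eta)) = \Phi(-\Lt{\mu-\eta})$ to turn the depth gap into a distance bound. The only cosmetic difference is that you state a single two-sided uniform deviation over $(v,\eta)$ while the paper splits into good/bad samples and two function classes (with McDiarmid applied to the corrupted weight $\sum_{i\in B} w_i$ rather than to its conditional expectation), which is the minor precision your bias step would need.
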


The proof of \cref{thm:gauss-ub} (see \cref{sec:Gauss-UB}) involves proving that the depth of any point sufficiently far from the true mean is going to be lower than the depth of the true mean in a uniform convergence sense.
Due to the weighted nature of the depth definition, we modify results from empirical process theory to adapt to the weights and get tight upper bound.

Let $N(t) = |\{i:\lambda_i \leq t\}|$ and denote the weights $w(t)$ such that $w(t)_i = \frac{\bbI\{\lambda_i \leq t\}}{N(t)}$, i.e., we consider the Tukey median estimator obtained by only considering samples with corruption less than $t$.
Define
\begin{equation} \label{eq:thresh-TM}
    \hat\mu_{\mathsf{S}}(\vecZ,t) = \hat\mu_{\mathsf{TM}}(\vecZ,w(t)).
\end{equation}

By \cref{thm:gauss-ub}, the following holds for some universal constant $c'$
\begin{equation} \label{eq:t-exp}
    \sup_{P\in \cD_d^{\cN}} \mathrm{Pr}_{\vecZ \sim_{\bm{\lambda}} P}\left[ \| \hat\mu_{\mathsf{S}}(\vecZ,t) - \mu_P\|^2_2 \geq c' \left(  t^2 + \frac{d}{N(t)} \right) \right] \leq 1/5,
\end{equation}    
$\forall$ $t$ such that $t^2 + \frac{d}{N(t)} \leq c$ for universal constants $c$.

\subsection{Lower Bound and Minimax Rate}

\paragraph{Univariate Case}
We begin our exposition by considering our result for $d=1$, i.e., univariate Gaussian distributions.
We use Le Cam's method to obtain the lower bound \cite{Wainwright_2019}; Le Cam's method lower bounds the estimation problem by a hypothesis testing problem where the optimal test is entirely determined by the distribution of data from the different hypotheses.
The two hypotheses we consider are $\cN(\delta,1)$ and $\cN(-\delta,1)$ for $\delta$ to be chosen later.
The adversary tries to ensure that the resulting corrupted distribution under either hypotheses is the same, rendering those sample useless for hypothesis testing.

If the corruption rate is greater than $\frac{1}{2}$ for a sample, then the adversary can trivially `simulate' the distribution $\frac{1}{2}\left(\cN(\delta,1) + \cN(-\delta,1) \right)$.
Alternately, when $\delta$ is small, adversary can simulate the distribution $\max\{\cN(\delta,1),\cN(-\delta,1)\}/Z(\delta)$ instead, where $Z(\delta)$ is an appropriate normalizing constant.
The adversary can simulate this distribution when the corruption rate is greater than $1-e^{-\delta}$.
Thus, only the samples in the set $\{i:\lambda_i < \min\{\frac{1}{2}, 1- e^{-\delta}\} \}$ are relevant for hypothesis testing, and it remains to choose a value of $\delta$ judiciously.

Based on our generic multivariate bounds of \cref{thm:gauss-minimax} later, we have the following \cref{cor:uni-gauss-minimax} for the univariate case.

\begin{corollary}[Minimax Rate for Univariate Gaussian Distributions]
    \label{cor:uni-gauss-minimax}
Suppose $\min_{t \in [0,1]} \left(t^2 + \frac{1}{N(t)}\right) \leq c$ for some universal constants $c$, then
\begin{equation}
L_{\mathsf{PAC}}(\bm\lambda,\cD_1^{\cN}) \simeq \min_{t\in [0,1]} \left( \frac{1}{N(t)} + t^2 \right).
\end{equation}
Moreover, the estimator $\hat\mu_{\mathsf{S}}(\vecZ,w(t^*))$, with $t^* = \arg\min_{t \in [0,1]} \left(t^2 + \frac{1}{N(t)}\right)$, achieves this error.
\end{corollary}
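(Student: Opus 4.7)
The approach combines the upper bound from Proposition~\ref{thm:gauss-ub} with a matching Le~Cam two-point lower bound built from the construction the paper sketches informally.

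\emph{Upper bound.} The plan is to instantiate Proposition~\ref{thm:gauss-ub} with $d = 1$ and the thresholding weights $w = w(t^*)$, where $t^* \in \arg\min_{t \in [0,1]}(t^2 + 1/N(t))$. By construction, $w(t^*)^\top \lambda = \tfrac{1}{N(t^*)}\sum_{i:\lambda_i \leq t^*}\lambda_i \leq t^*$ and $\|w(t^*)\|_2^2 = 1/N(t^*)$, so the hypothesis $(w^\top\lambda)^2 + d\|w\|_2^2 \leq c$ of Proposition~\ref{thm:gauss-ub} follows from the assumption $\min_t(t^2 + 1/N(t)) \leq c$. The conclusion, equivalent to equation~(\ref{eq:t-exp}) at $d=1$, then yields $\|\hat\mu_{\mathsf S}(\vecZ, t^*) - \mu_P\|_2^2 \lesssim t^{*2} + 1/N(t^*)$ with probability at least $4/5$.

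\emph{Lower bound.} I would apply Le~Cam's two-point method to the hypotheses $P_0 = \cN(-\delta, 1)$ and $P_1 = \cN(\delta, 1)$, together with an explicit adversary that neutralizes every sufficiently corrupted sample. Define $\tau(\delta) := \min\{1/2,\, 1 - e^{-\delta}\}$. For each $i$ with $\lambda_i \geq \tau(\delta)$, the adversary chooses $Q_{b,i}$ so that the contaminated law $(1-\lambda_i)P_b + \lambda_i Q_{b,i}$ equals a common symmetric distribution $R$ independent of $b$: namely $R = (P_0 + P_1)/2$ when $\lambda_i \geq 1/2$, and $R \propto \max\{\phi(\cdot-\delta),\phi(\cdot+\delta)\}$ when $\tau(\delta) \leq \lambda_i < 1/2$, whose feasibility reduces to the pointwise inequality $R(x) \geq (1-\lambda_i)\phi(x\mp\delta)$, equivalent to the stated threshold. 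For $\lambda_i < \tau(\delta)$ the adversary does nothing. The only informative coordinates are then the $N(\tau(\delta))$ clean Gaussians with means $\pm\delta$, so the KL between the two joint laws is $2\delta^2 N(\tau(\delta))$; Pinsker plus Le~Cam give $L_{\mathsf{PAC}}(\bm\lambda,\cD_1^{\cN}) \gtrsim \delta^2$ whenever $\delta^2 N(\tau(\delta))$ is below a small universal constant.

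\emph{Closing the loop.} I would set $\delta = c_1 t^*$ for a sufficiently small universal $c_1 > 0$. Since $\tau(\delta) \leq \delta = c_1 t^* \leq t^*$, monotonicity of $N$ gives $N(\tau(\delta)) \leq N(t^*)$; combined with the near-balance $t^{*2} N(t^*) \lesssim 1$ at the minimizer (which follows from the fact that $N$ is nondecreasing and piecewise constant, so any significant imbalance could be improved by perturbing $t$), this yields $\delta^2 N(\tau(\delta)) \lesssim c_1^2$, small enough for Le~Cam. Hence $L_{\mathsf{PAC}}(\bm\lambda,\cD_1^{\cN}) \gtrsim \delta^2 \simeq t^{*2} \simeq \min_t(t^2 + 1/N(t))$, matching the upper bound.

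The main technical obstacle is the tight computation of $\tau(\delta)$ — specifically, verifying the pointwise feasibility of the max-density construction $R \propto \max\{\phi(\cdot-\delta),\phi(\cdot+\delta)\}$ so that the threshold scales as $\Theta(\delta)$ rather than being stuck at $1/2$. Without this refinement, the lower bound cannot descend below the constant $1/4$ and the $\min_t(t^2 + 1/N(t))$ rate is missed in the low-error regime. A secondary subtlety is the near-balance argument at minimizers of $t^2 + 1/N(t)$ that fall on jumps of the step function $N$, and the boundary case $t^* = 1$, which is covered by a separate trivial $\Omega(1)$ lower bound.
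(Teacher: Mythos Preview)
Your upper bound via Proposition~\ref{thm:gauss-ub} and the Le~Cam construction with the max-density adversary are correct and match the paper. The gap is in the ``closing the loop'' step. Your choice $\delta = c_1 t^*$ only delivers a lower bound of order $t^{*2}$, whereas the target is $t^{*2} + 1/N(t^*)$; when the second term dominates, your bound is too weak by an arbitrarily large factor. Concretely, take $\lambda_i = \epsilon$ for all $i$ with $\epsilon$ small and $n \ll 1/\epsilon^2$ (or simply all $\lambda_i = 0$): then $t^* = \epsilon$ (resp.\ $t^* = 0$), $\min_t(t^2 + 1/N(t)) \approx 1/n$, but your lower bound is $\delta^2 \approx c_1^2\epsilon^2 \ll 1/n$. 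The near-balance claim $t^{*2} N(t^*) \lesssim 1$ is also false in general --- the same instance with $n \gg 1/\epsilon^2$ gives $t^{*2} N(t^*) = \epsilon^2 n$ unbounded --- so the perturbation heuristic does not rescue the argument: the minimizer of a parabola plus a step function sits at a jump of $N$ and need not balance the two terms in either direction.

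The paper sidesteps this by \emph{not} tying $\delta$ to $t^*$. It defines $\delta_*$ implicitly via the two-sided condition
\[
n\bigl(1-e^{-\delta_*}\bigr) \ \le\ \frac{1}{64\,\delta_*^{2}} \ \le\ N\bigl(1-e^{-\delta_*}\bigr),
\]
whose existence follows from a c\`adl\`ag/intermediate-value argument on the monotone step function $s \mapsto N(1-e^{-s}) - \tfrac{1}{64 s^2}$. The left inequality makes the Le~Cam KL small and yields $L_{\mathsf{PAC}} \gtrsim \delta_*^2$; the right inequality, plugged into the threshold upper bound at $t = 1 - e^{-\delta_*}$, gives $\min_t\bigl(t^2+1/N(t)\bigr) \le (1-e^{-\delta_*})^2 + 64\,\delta_*^2 \le 65\,\delta_*^2$. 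This pins down $\delta_*^2 \simeq \min_t(t^2+1/N(t))$ directly, closing the loop without any balance assumption at $t^*$.
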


Thus the proposed weighted Tukey median scheme is optimal in the univariate case -- demonstrating that beyond a certain corruption threshold determined by $N(\cdot)$, there is no way to leverage the more contaminated samples up to universal constants.

\paragraph{Multivariate Case}

In the multivariate setting, our upper and lower bounds (see \cref{sec:Gaussian-LB}) are off by a multiplicative factor of $\sqrt{d}$.
This gap can possibly be attributed by our rather relaxed handling of the heterogeneity since the proof technique we consider can recover the minimax rate of $\frac{d}{n} + \epsilon^2$ in the homogeneous setting where $\lambda_i = \epsilon \ \forall i$ (see \cref{sec:AdvLB}).

\begin{theorem} \label{thm:gauss-minimax}
Suppose $\min_{t \in [0,1]}\left( t^2 + \frac{d}{N(t)} \right) \leq c$ for some universal constants $c$, then 
\begin{equation}
\label{eq:minimax-rate}
 \frac{1}{\sqrt{d}}\min_{t\in [0,1]} \left( \frac{d}{N(t)} + t^2 \right) \lesssim L_{\mathsf{PAC}}(\bm\lambda,\cD_1^{\cN}) \lesssim  \min_{t\in [0,1]} \left( \frac{d}{N(t)} + t^2 \right).
\end{equation}
Moreover, the estimator $\hat\mu_{\mathsf{S}}(\vecZ,w(t^*))$, with $t^* = \arg\min_{t \in [0,1]} \left(t^2 + \frac{d}{N(t)} \right)$, achieves the upper bound.
\end{theorem}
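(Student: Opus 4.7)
The upper bound follows immediately from \eqref{eq:t-exp} by choosing $t = t^*$, since the assumption $\min_t (t^2 + d/N(t)) \leq c$ guarantees that the regime condition of \cref{thm:gauss-ub} applies. So the substance lies in the matching lower bound, which I would approach via a Le Cam two-point construction adapted to the heterogeneous corruption pattern.

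For the lower bound, fix a unit direction $u \in \bbS_{d}$ and a parameter $\delta > 0$, and take $P_0 = \cN(-\delta u, I_d)$ and $P_1 = \cN(+\delta u, I_d)$. Design the adversary as follows: knowing which hypothesis is in play and knowing $\bm\lambda$, it tries to make the marginal law of each $Z_i$ identical under $P_0$ and $P_1$. Because the two Gaussian densities satisfy a simple log-likelihood ratio bound along $u$, there is an ``equalizing'' corruption distribution that matches $P_0$ and $P_1$ on the overlap mass; the rate at which it needs to be deployed is comparable to $\delta$. Hence whenever $\lambda_i \gtrsim \delta$ the adversary can fully simulate a common distribution for $Z_i$, rendering that sample useless for distinguishing the two hypotheses. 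Only the $N(\delta)$ samples with $\lambda_i \lesssim \delta$ retain any information, and each contributes total variation at most $O(\delta)$ after the adversary spends its available budget.

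Applying Le Cam's inequality to the product of the corrupted per-sample marginals and tensorizing TV (via Pinsker) then gives a lower bound of the form $\delta^2 \bigl(1 - C \delta \sqrt{N(\delta)}\bigr)$. Optimizing $\delta$ so that $\delta \sqrt{N(\delta)} = \Theta(1)$ yields a bound of order $\min_t \max(t^2, 1/N(t))$, which is the univariate minimax rate. To lift this to $d$ dimensions and bring in the factor $d$ in the numerator of $d/N(t)$, I would replace the single pair $P_0,P_1$ with a richer family that spans the $d$ directions of freedom in $u$ -- for example, a Gilbert--Varshamov packing of $\{\pm \delta/\sqrt{d}\}^d$ combined with Fano or Assouad -- effectively boosting the separation in $\ell_2$ to $\delta$ while requiring the adversary to equalize in $d$ directions simultaneously. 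Because the adversary is measured by a single scalar corruption probability $\lambda_i$ but has to cancel contributions along many coordinates at once, the equalization threshold degrades by a factor that I only know how to bound by $\sqrt{d}$, which accounts for the $\frac{1}{\sqrt{d}}$ loss in the final bound.

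The main obstacle, and the reason for the $\sqrt{d}$ gap noted in the paper, is that combining the corruption term $t^2$ (naturally obtained from Le Cam) with the dimensional term $d/N(t)$ (naturally obtained from Assouad) into a single tight construction does not decouple cleanly under heterogeneity. In the homogeneous case the two bounds can be derived independently and then maximized, which is why \cref{sec:AdvLB} recovers the optimal $d/n + \eps^2$ rate. In the heterogeneous case the effective sample count $N(\cdot)$ used for the Assouad packing depends on the threshold $\delta$ at which samples can still be distinguished under the adversary, so the two arguments are entangled. Closing the $\sqrt{d}$ gap would, I expect, require an Assouad-style packing whose per-coordinate perturbation scale is itself coupled to the empirical corruption profile $\bm\lambda$, and proving that the adversary cannot simultaneously neutralize all such coordinates is the key technical step that I do not see how to carry out without a loss.
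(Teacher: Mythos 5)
Your upper bound argument is fine and matches the paper: under the stated condition, \eqref{eq:t-exp} applied at $t=t^*$ gives the right-hand side of \eqref{eq:minimax-rate}. Your univariate Le Cam sketch (adversary simulates a common distribution once $\lambda_i \gtrsim \delta$, only the $N(\delta)$ cleaner samples carry information) is also essentially the paper's univariate argument. The genuine gap is in the multivariate lower bound, which is the heart of the theorem, and the specific construction you name would not deliver the claimed $d^{-1/2}$ factor. If you take a packing of the full hypercube $\{\pm \delta/\sqrt{d}\}^d$ (per-coordinate scale $\eta=\delta/\sqrt{d}$) and run Assouad, the adversary's simulation threshold is governed by the normalizing constant of the pointwise-max density, $T=[2\Phi(\eta)]^{d}\leq e^{\eta d}=e^{\delta\sqrt{d}}$, so all samples with $\lambda_i \gtrsim \delta\sqrt{d}$ become useless, while the Assouad error you extract is only of order $d\eta^2=\delta^2$. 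Matching this against the upper bound evaluated at the same threshold, whose $t^2$ term is of order $d\delta^2$, you only certify $L_{\mathsf{PAC}} \gtrsim \frac{1}{d}\min_t\bigl(t^2+\frac{d}{N(t)}\bigr)$ in the worst case over $\bm\lambda$ --- a factor-$d$ gap, not $\sqrt{d}$. Your heuristic that ``the threshold degrades by at most $\sqrt{d}$'' is not substantiated by this construction, and you yourself note you do not see how to carry out the key step.

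The missing idea in the paper's proof (\cref{sec:Gaussian-LB}) is to use a hypercube of \emph{intermediate} dimension: the means are $\delta e(\tau)$ with $\tau\in\{-1,1\}^{\sqrt{d}}$ supported on only the first $\sqrt{d}$ coordinates. Then Assouad over $\sqrt{d}$ coordinates at per-coordinate scale $\delta$ yields error $\sqrt{d}\,\delta^2\bigl(1-\sqrt{\delta^2 n(1-e^{-\delta\sqrt{d}})}\bigr)$, while the simulation constant is still $T=[2\Phi(\delta)]^{\sqrt{d}}\leq e^{\delta\sqrt{d}}$, so the useless-sample threshold sits at $1-e^{-\delta\sqrt{d}}\approx\delta\sqrt{d}$. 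Choosing $\delta_*$ so that $n(1-e^{-\delta_*\sqrt{d}})\approx \delta_*^{-2}\approx N(1-e^{-\delta_*\sqrt{d}})$ gives a lower bound of order $\sqrt{d}\,\delta_*^2$, and comparing with the upper bound $\min_t\bigl(t^2+\frac{d}{N(t)}\bigr)\leq d\delta_*^2+64d\delta_*^2$ produces exactly the $\sqrt{d}$ gap claimed. (The paper's \cref{sec:AdvLB} makes this tradeoff explicit: with block dimension $t$ the mismatch is $\max\{t, d/t\}/\sqrt{d}$ relative to $\sqrt d$, optimized at $t=\sqrt{d}$; your proposal corresponds to $t=d$.) Two further steps you omit but the paper needs: the explicit mixture argument showing the adversary can realize $\max_{\tau'}P_\delta(\tau')/T$ whenever $\lambda_i\geq 1-1/T$, and the conversion from the expected-error bound that Le Cam/Assouad produce to the quantile quantity $L_{\mathsf{PAC}}$ (done by restricting to estimators with outputs in $[-\delta_*,\delta_*]^{\sqrt{d}}\times\{0\}^{d-\sqrt{d}}$ and bounding the essential supremum).
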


The proof of \cref{thm:gauss-minimax} can be found in \cref{sec:Gauss-Minimax}.
As \cref{thm:gauss-minimax} suggests, it might be possible to leverage the heterogeneity beyond just discarding samples with a higher threshold and this might reduce the squared-error by a factor of $\sqrt{d}$.
However, as stated before, this gap might be an artifact of the analysis rather than a true phenomenon.
We leave this as an open question.

\section{Linear Regression} \label{sec:reg}

We now consider the problem of linear regression under Gaussian covariates.
The proofs for this section can be found in \cref{sec:LR-Proofs}.

\paragraph{Background:} Under no corruptions, the minimax rate is known to be $\frac{\sigma^2 d}{n}$, i.e., independent of $\Sigma$ as long as $\Sigma$ is non-singular -- the lack of information in directions with low eigenvalues of $\Sigma$ if offset by the $\|\cdot\|_{\Sigma}$ distance.
In the homogeneous robust regression setting with $\epsilon$ corruption, the minimax rate is of the order $\frac{\sigma^2 d}{n} + \sigma^2 \epsilon^2$ \cite{Gao20}.

For the upper bound, we consider a weighted version of Tukey median adapted to regression \cite{Rousseeuw_Hubert_1999,Gao20}. 
Define the weighted regression depth of a point $\eta \in \bbR^d$ for a dataset $\vecZ$ with $Z_i = (\hat W_i,\hat Y_i)$ and $w \in \Delta_n$ as
\begin{align} \label{eq:reg-depth}
    D_{w}(\eta, \vecZ) = \min_{v \in \bbS_{d}} \sum_{i=1}^n w_i \bbI\{ (\hat Y_i - \eta^T\hat W_i)(v^T \hat W_i) \geq 0\}.
\end{align}
The regression depth of a point $\eta \in \bbR^d$ is high if for all half-spaces, the sign of the residual errors using $\eta$ are distributed equally in every direction. 
The weighted Tukey regression coefficient is defined as
\begin{equation}
    \hat{\beta}_{\mathsf{TC}}(\vecZ,w) := \arg\max_{\eta \in \bbR^d} D_w(\eta,\vecZ).
\end{equation}

\begin{proposition}[Upper Bound for Regression] \label{prop:reg-ub}
For all $w\in \Delta_n$ satisfying $(w^T\lambda)^2 + d\Lt{w}^2 \leq c$ for some universal constants $c$,
\begin{equation}
\sup_{P\in \cD(\Sigma,\sigma^2)} \mathrm{Pr}_{\vecZ \sim_{\bm{\lambda}} P}\left[ \| \hat\beta_{\mathsf{TC}}(\vecZ,w) - \beta_P\|_{\Sigma}^2 \geq c' \left(  (w^T\lambda)^2 + d\Lt{w}^2 \right) \right] \leq 1/5 \; ,
\end{equation}
for some universal constant $c'$.
\end{proposition}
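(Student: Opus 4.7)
The plan is to prove \cref{prop:reg-ub} by adapting the uniform-convergence strategy from the proof of \cref{thm:gauss-ub} to the weighted regression depth. It suffices to show that with probability at least $4/5$, every $\eta$ with $\|\eta - \beta_P\|_\Sigma^2$ exceeding a universal multiple of $(w^T\lambda)^2 + d\Lt{w}^2$ has strictly smaller empirical weighted depth than $\beta_P$; this forces the argmax $\hat{\beta}_{\mathsf{TC}}$ into the desired ball.

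\textbf{Population depth gap.} First I would analyze the population regression depth on the clean law. Since $\xi := Y - \beta_P^T W \sim \cN(0,\sigma^2)$ is independent of $W$ and symmetric, $\Pr[\xi \cdot (v^T W) \geq 0] = 1/2$ for every $v \in \bbS_d$, so the population depth at $\beta_P$ equals $1/2$ exactly. For $\eta$ with $r := \eta - \beta_P \neq 0$, the residual becomes $\xi - r^T W$; choosing the witness $v \propto \Sigma r$ so that $(r^T W, v^T W)$ is jointly Gaussian with maximal alignment, a direct bivariate-Gaussian computation of $\Pr[(\xi - r^T W)(v^T W) \geq 0]$ yields an upper bound of $1/2 - c\,\min\{1,\|r\|_\Sigma/\sigma\}$. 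This is the key depth-gap inequality driving the proof.

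\textbf{Uniform concentration and adversary contribution.} Next I would control the empirical weighted depth uniformly over $(\eta,v)$. The indicator class $\{(W,Y) \mapsto \bbI\{(Y - \eta^T W)(v^T W) \geq 0\} : \eta \in \bbR^d,\, v \in \bbS_d\}$ is a product of two halfspaces in $\bbR^{d+1}$ and hence has VC dimension $O(d)$; a weighted Rademacher / covering argument, paralleling the one used to establish \cref{thm:gauss-ub}, then gives a uniform deviation of the empirical weighted depth from its clean expectation of order $\sqrt{d}\,\Lt{w}$ (the $\Lt{w}$ scaling, rather than $\|w\|_\infty$, comes from the $L^2$ norm of the weighted Rademacher process). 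Independently, the contribution of the adversarial samples to any fixed indicator is at most $\sum_{i:B_i=1} w_i$, which concentrates around $w^T\lambda$ with a Bernstein fluctuation $O(\Lt{w})$; union-bounding over the same VC class inflates this adversarial term by an additional $\sqrt{d}$ factor, giving an overall perturbation of $O(w^T\lambda + \sqrt{d}\,\Lt{w})$.

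\textbf{Assembly.} Combining the three ingredients, with probability at least $4/5$ the empirical weighted depth at $\beta_P$ is at least $1/2 - C(w^T\lambda + \sqrt{d}\,\Lt{w})$, while for every $\eta$ with $\|\eta - \beta_P\|_\Sigma \geq C'\sigma(w^T\lambda + \sqrt{d}\,\Lt{w})$ there is a direction $v$ along which the empirical depth is at most $1/2 - c\|\eta - \beta_P\|_\Sigma/\sigma + C(w^T\lambda + \sqrt{d}\,\Lt{w})$. Under the smallness assumption $(w^T\lambda)^2 + d\Lt{w}^2 \leq c$, constants can be chosen so the latter is strictly smaller than the former, forcing $\hat{\beta}_{\mathsf{TC}}$ into the desired ball (with $\sigma$ absorbed into $c'$).

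\textbf{Main obstacle.} The principal difficulty, compared to the Gaussian mean case, is the population depth-gap step: the product $(\xi - r^T W)(v^T W)$ couples two correlated Gaussians with an independent noise $\xi$, so obtaining a gap that is linear in $\|r\|_\Sigma/\sigma$ and uniform over all directions of $r$ requires choosing the witness $v$ through $\Sigma$ rather than from the geometry of $r$ alone, and then bounding a bivariate Gaussian orthant probability in the small-$\|r\|_\Sigma/\sigma$ regime. A secondary challenge is producing the VC concentration at the correct $\sqrt{d}\,\Lt{w}$ rate for arbitrary weights, which demands weighted-sample analogues of the classical Massart-type bounds rather than the unweighted inequalities typically invoked for standard regression depth.
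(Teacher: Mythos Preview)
Your proposal follows the paper's strategy, but contains one genuine error and one unnecessary complication. The error is the witness direction: you propose $v \propto \Sigma r$ ``so that $(r^T W, v^T W)$ is jointly Gaussian with maximal alignment,'' but the correlation $\tfrac{r^T \Sigma v}{\sqrt{r^T\Sigma r}\sqrt{v^T\Sigma v}}$ is maximized (and equal to $1$) at $v \propto r$, not $v \propto \Sigma r$. With $v \propto \Sigma r$ the depth gap would acquire a dependence on the condition number of $\Sigma$, destroying the universal-constant conclusion. The paper instead takes $v_\eta = r/\Lt{r}$; then $v_\eta^T W$ is a positive scalar multiple of $M := r^T W \sim \cN(0,\|r\|_\Sigma^2)$, the ``bivariate orthant'' problem collapses to the one-line computation $\Pr[(\xi - M)M \geq 0] = \tfrac{1}{2} - \tfrac{1}{\pi}\arctan(\|r\|_\Sigma/\sigma)$, and what you flag as the ``main obstacle'' largely evaporates. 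The unnecessary complication is the $\sqrt{d}$ inflation of the adversarial term: $\sum_{i:B_i=1} w_i$ does not depend on $(\eta,v)$, so a single McDiarmid bound (fluctuation $O(\Lt{w})$, no VC union bound) suffices, as in the paper. The remainder---the VC bound $O(d)$ for the product-of-halfspaces class, weighted concentration at rate $\sqrt{d}\,\Lt{w}$ via a weighted Massart/Dudley argument, and the assembly---matches the paper exactly.
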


Like \eqref{eq:thresh-TM}, we can similarly consider a thresholding method -- define $\hat\beta_{\mathsf{S}}(\vecZ,t) = \hat\beta_{\mathsf{TM}}(\vecZ,w(t))$. 
The lower bound construction is similar in spirit to \cref{thm:gauss-minimax}.
Based on our upper and lower bounds, we present \cref{thm:LR}.

\begin{theorem}[Minimax Rate for Linear Regression]  \label{thm:LR}
Suppose $\min_{t \in [0,1]}\left( t^2 + \frac{d}{N(t)} \right) \leq c$ for some universal constants $c$, then
\begin{equation}
 \frac{\sigma^2}{\sqrt{d}}\min_{t\in [0,1]} \left( \frac{d}{N(t)} + t^2 \right) \lesssim L_{\mathsf{reg}} (\bm{\lambda}, \mathcal{D}(\Sigma,\sigma^2)) \lesssim  \sigma^2 \min_{t\in [0,1]} \left( \frac{d}{N(t)} + t^2 \right).
\end{equation}
Moreover, the estimator $\hat\beta_{\mathsf{S}}(\vecZ,w(t^*))$, with $t^* = \arg\min_{t \in [0,1]} \left(t^2 + \frac{d}{N(t)} \right)$, achieves the upper bound.
\end{theorem}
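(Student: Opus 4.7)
The plan is to prove the upper and lower bounds separately, each mirroring the corresponding part of \cref{thm:gauss-minimax} for Gaussian mean estimation, with three regression-specific adaptations: (i) the natural error metric is the $\Sigma$-weighted one, (ii) the adversary may corrupt covariate and response jointly, and (iii) we may rescale $Y \to Y/\sigma$ to reduce to the unit-noise case and re-introduce the $\sigma^2$ factor at the end.

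For the upper bound, I would apply \cref{prop:reg-ub} directly to the threshold weight $w(t^*)$ with $t^* = \arg\min_{t}(t^2 + d/N(t))$. A direct calculation yields $w(t^*)^T\lambda = \sum_i \lambda_i \bbI\{\lambda_i \leq t^*\}/N(t^*) \leq t^*$ and $\Lt{w(t^*)}^2 = 1/N(t^*)$, so $(w(t^*)^T\lambda)^2 + d\Lt{w(t^*)}^2 \leq t^{*2} + d/N(t^*) \leq c$ by hypothesis. Hence \cref{prop:reg-ub} produces the bound $c'(t^{*2} + d/N(t^*))$ with probability at least $4/5$; reversing the $Y \to Y/\sigma$ rescaling then multiplies by $\sigma^2$ and gives the claimed upper bound.

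For the lower bound, I would adapt the Le Cam construction of \cref{thm:gauss-minimax} to the regression setting. Take two hypotheses $P_0, P_1 \in \cD(\Sigma,\sigma^2)$ whose regression coefficients differ by $2\delta v$, where $v$ has $\|v\|_\Sigma = 1$, so that $\|\beta_0 - \beta_1\|_\Sigma^2 = 4\delta^2$. Because the $W$-marginal is identical and the response conditionals differ in mean by $W^T(2\delta v)$, the clean per-sample KL is $2\delta^2/\sigma^2$ and $\mathrm{TV}(P_0, P_1) \simeq \delta/\sigma$ for small $\delta$. The adversary's strategy, in direct analogy with the Gaussian mean case, is to simulate a common contaminated law under both hypotheses whenever $\lambda_i$ exceeds a threshold proportional to $\delta/\sigma$, and to pick $\tilde Q_0 = \tilde Q_1$ otherwise; joint convexity of KL then bounds the per-sample contribution by $2\delta^2/\sigma^2$ for useful samples and $0$ otherwise. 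Summing over samples and applying Le Cam's two-point inequality, the minimax error is $\gtrsim \delta^2$ whenever $N(C\delta/\sigma)\,\delta^2/\sigma^2 \lesssim 1$. Choosing $\delta$ to optimize this constraint, in the same way as the Gaussian mean proof (which loses $\sqrt{d}$), delivers the stated $\sigma^2 d^{-1/2}\min_t(d/N(t)+t^2)$ lower bound.

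The main obstacle is the $\sqrt{d}$ gap, inherited from the analogous gap in \cref{thm:gauss-minimax}: Le Cam's two-point method tightly captures the corruption threshold $t$ but underexploits the ambient dimension relative to Fano or Assouad arguments, and it is unclear how to combine the two while respecting the heterogeneous $\bm\lambda$. Secondary technical points to handle carefully are (a) verifying that the adversary's joint contamination $(\tilde W_i, \tilde Y_i)$ can be realized as a valid conditional distribution given $(W_i, Y_i, B_i)$, (b) carrying out the TV and KL computations under the \emph{random} covariate $W \sim \cN(0,\Sigma)$ rather than treating $W$ as fixed, and (c) reconciling the absence of a $\sigma^2$ in the RHS of \cref{prop:reg-ub} with its presence in the theorem via the scaling $Y \to Y/\sigma$.
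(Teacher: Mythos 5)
Your upper bound matches the paper's: applying \cref{prop:reg-ub} to the threshold weights $w(t^*)$ with $w(t^*)^T\lambda \leq t^*$ and $\Lt{w(t^*)}^2 = 1/N(t^*)$ is exactly how the paper proceeds, and your rescaling $Y \to Y/\sigma$ is a legitimate way to account for the $\sigma^2$ (in the paper's proof the $\sigma$ enters instead through the depth deficit $\frac{1}{\pi}\arctan(r/\sigma)$ of a point at $\Sigma$-distance $r$, which amounts to the same thing).

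The lower bound, however, has a genuine gap. You propose a two-point Le Cam construction with $\|\beta_0-\beta_1\|_\Sigma \asymp \delta$ and an adversarial simulation threshold $\propto \delta/\sigma$, and claim that optimizing $\delta$ "in the same way as the Gaussian mean proof" yields $\sigma^2 d^{-1/2}\min_t(t^2+d/N(t))$. It does not: with only two hypotheses the accumulated error is $\delta^2$ and the constraint is $N(c\delta/\sigma)\,\delta^2/\sigma^2 \lesssim 1$; writing $\delta' = \delta/\sigma$ and taking the crossing point $N(c\delta') \approx 1/\delta'^2$, the upper-bound expression evaluated at $t = c\delta'$ is $\asymp d\delta'^2$, so the two-point bound is only $\gtrsim \sigma^2 d^{-1}\min_t(t^2+d/N(t))$ --- a full factor $\sqrt{d}$ short of the statement (this is the same reason two-point Le Cam gives $1/n+\epsilon^2$ rather than $d/n+\epsilon^2$ in the homogeneous case). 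You have also mischaracterized the multivariate Gaussian lower bound of \cref{thm:gauss-minimax}: it is not a two-point argument but an Assouad construction over $\tau \in \{-1,1\}^{\sqrt{d}}$ with per-coordinate Le Cam, and the paper's regression proof mirrors it by taking $\beta(\tau) = \delta\,\Sigma^{-1/2}e(\tau)$, accumulating error $\sqrt{d}\,\sigma^2\delta^2$ over the $\sqrt{d}$ active coordinates while keeping the per-coordinate KL at $2\delta^2/\sigma^2$. The missing regression-specific ingredient is the bound on the normalizing constant $T$ of the pointwise maximum of the $2^{\sqrt{d}}$ joint densities of $(W,Y)$, shown via the block-inversion/change-of-variables computation to satisfy $T \leq e^{2\delta\sqrt{d}/(\pi\sigma)}$; this is what lets the adversary neutralize every sample with $\lambda_i \geq 1-1/T$, i.e.\ gives a simulation threshold $\asymp \delta\sqrt{d}/\sigma$ rather than $\delta/\sigma$, and hence the $d^{-1/2}$ factor. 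Finally, since $L_{\mathsf{reg}}$ is a PAC rate, the expected-error Assouad bound must still be converted into a quantile bound (the paper restricts estimators to a bounded set and uses $\bbE K \leq x\,Q(K,x) + (1-x)\cdot(\text{essential supremum of }K)$); this step is absent from your plan, though routine once noted.
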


Note that when $\lambda_i = \epsilon \ \forall i$, we can recover the homogeneous minimax rate of $\frac{\sigma^2d}{n} + \sigma^2 \epsilon^2$ using the more refined lower bound in \cref{sec:AdvLB}.

\section{Discussion and Future Work} \label{sec:disc}

There exists several avenues of future work. Perhaps most pressing issue would be to address the $\sqrt{d}$ gap in our upper and lower bounds for multivariate Gaussian mean estimation and linear regression. In this regard, it seems that neither our upper nor or lower bound in~\Cref{thm:gauss-minimax} can be tight for all choices of $\lambda$. Indeed, the lower bound in~\Cref{eq:minimax-rate} is not tight for homogeneous corruption (although this can be avoided with our more sophisticated lower bound technique, see~\Cref{sec:AdvLB}), and on the other hand, the upper bound fails to be tight in the aforementioned regime of semi-verified learning, where the upper bound would suggest that the error in the estimator should diverge as $d \to \infty$, whereas existing upper bounds due to~\cite{Charikar17,Meister18} already demonstrate that dimension independent rates are possible.
This suggests that fundamentally new measures of robustness are needed to fully characterize the complexity of learning under heterogeneous errors in the high-dimensional setting.
The heterogeneous robust estimation problem can also be studied under different corruption models such as total-variation distance or Kullback-Leibler divergence corruption.

\begin{ack} 
S.C. acknowledges the support of AI Policy Hub, UC Berkeley.
\end{ack}

\bibliographystyle{unsrtnat}
\bibliography{refs}

\newpage
\appendix
\section{Experiments} \label{sec:exp}

\begin{figure}%
\centering
\subfigure[Bounded Mean Estimation]{%
\label{fig:first}%
\includegraphics[width=0.45\textwidth]{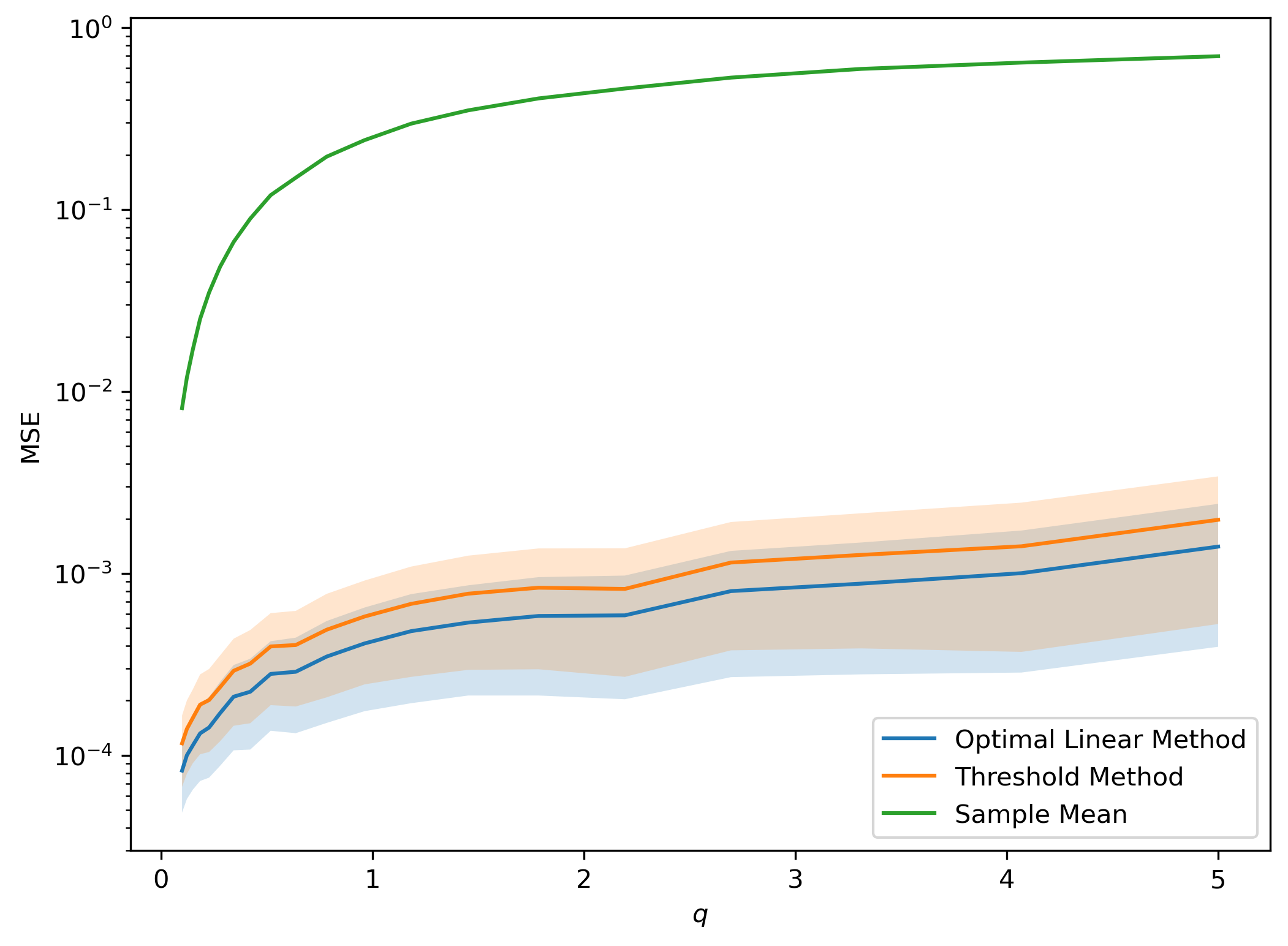}}%
\qquad
\subfigure[Univariate Gaussian Mean Estimation]{%
\label{fig:second}%
\includegraphics[width=0.45\textwidth]{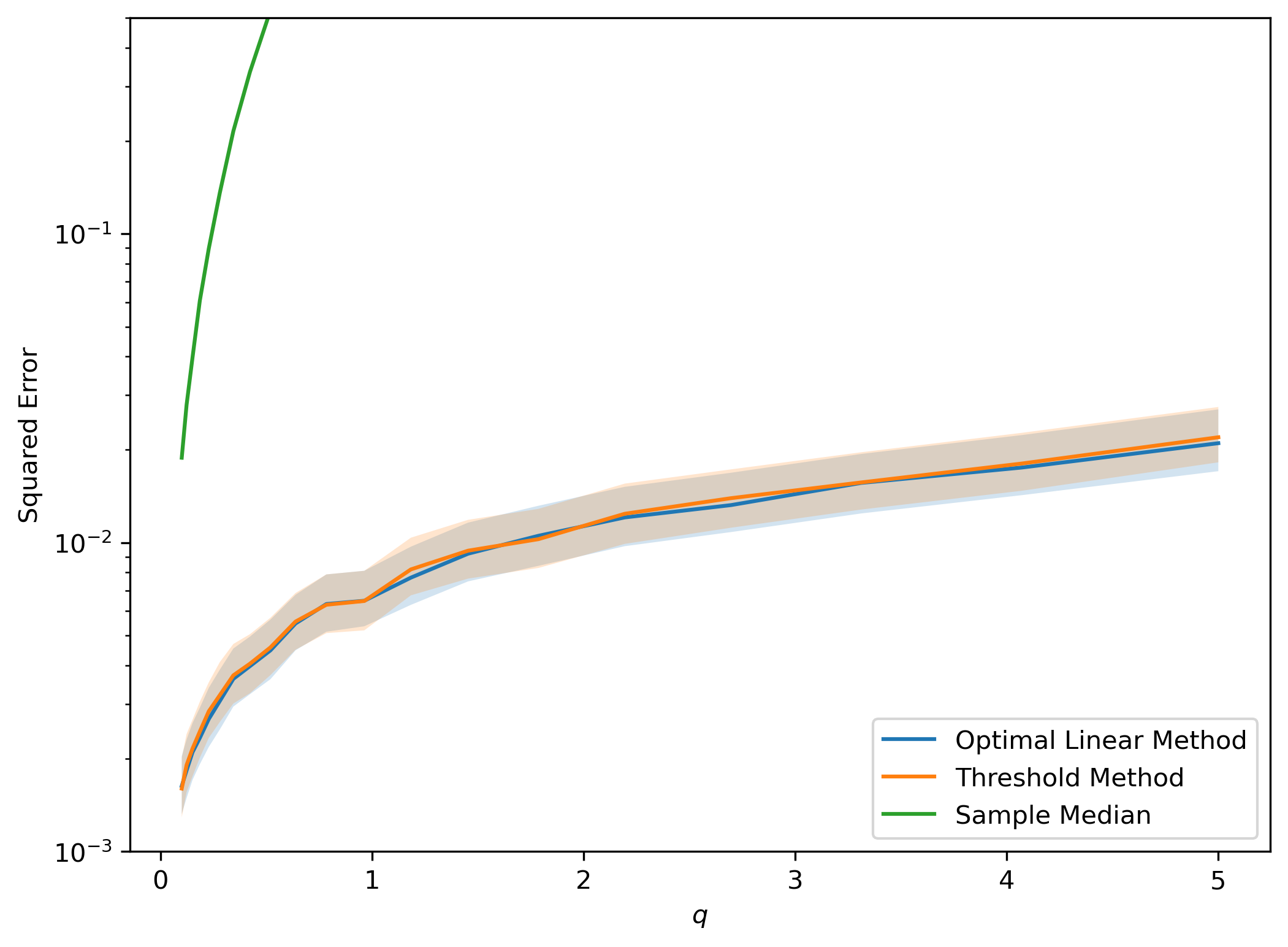}}%
\caption{Mean estimation algorithms for (a) bounded distributions and (b) univariate Gaussian distributions.
The x-axis is a proxy for degree of contamination of the model.
}
\label{fig:both}
\end{figure}

We investigate the practical performance of our algorithms and compare it with baseline in \cref{fig:both}.
For bounded mean estimation, we use the sample mean as the baseline since it is the minimax robust estimator in the homogeneous setting.
For univariate Gaussian mean estimation, we use the sample median, equivalent to Tukey median in one dimension, as the baseline for the same reason.

We set $n=10^4$ and for a fixed value $q$, we sample the corruption rates $\bm\lambda$ i.i.d. from the distribution with cdf given by $F(t) = 1- (1-t)^q$.
As $q$ increases we can expect a higher corruption rate.
Fixing this sampled $\bm\lambda$, we sample the dataset $10^4$ times. 
For bounded distribution, we plot the mean squared-error and the corresponding standard deviations over the trials at each value of $q$ considered.
For the Gaussian distribution, we plot the empirical $\frac{4}{5}$-th quantile of the squared-error along with $\frac{15}{20}$-th and $\frac{17}{20}$-th quantiles over the trials.
For the bounded distribution, we choose $r=1$ and choose the true underlying distribution to be the point mass at $0$, and the corrupted values to be $1$.
For univariate Gaussian distribution, we fix the true distribution to be $\cN(0,1)$ and the corrupted values sampled i.i.d. from $\cN(100,1)$.

Optimal linear method in the plots refer to the reweighing scheme proposed while threshold method refers to the special case of reweighing that discards samples above a certain corruption threshold and performs standard homogeneous robust estimation on the sub-sampled dataset.

While unclear for the Gaussian distribution, the reweighing does seem to provide marginal improvement over thresholding method.
Further investigation is required to establish whether reweighing may pose significant advantages in high dimensions.
\section{Bounded Mean Estimation: Proofs} 

\subsection{Variance Upper Bound} \label{sec:Var-UB}
Using $E\|X+Y\|_2^2 \leq 2E\|X\|_2^2 +2E\|Y\|_2^2$, we get
\begin{align}
\bbE\left[\left\|\sum_{i=1}^n w_i (Z_i - \bbE[Z_i])\right\|_2^2 \right] &= \bbE\left[\left\|\sum_{i=1}^n w_i((1-B_i)X_i - (1-\lambda_i)\mu_P) + \sum_{i=1}^n w_i(B_i\tilde{X}_i - \lambda_i\mu_{Q_i})\right\|_2^2 \right] \\
&\hspace{-65pt}\leq 2\bbE\left[\left\|\sum_{i=1}^n w_i((1-B_i)X_i - (1-\lambda_i)\mu_P \right\|_2^2\right] + 2\bbE\left[ \left\|\sum_{i=1}^n w_i(B_i\tilde{X}_i - \lambda_i\mu_{Q_i})\right\|_2^2 \right]  
\end{align}

Since $\{(1-B_i)X_i - (1-\lambda_i)\mu_P\}$ are independent random variables and $\|(1-B_i)X_i \|_2 \leq r$, we can use the crude variance bound $E\|(1-B_i)X_i - (1-\lambda_i)\mu_P\|_2^2 \leq r^2$ to obtain
\begin{equation}
    \bbE\left[\left\|\sum_{i=1}^n w_i((1-B_i)(X_i - \mu_P) - (1-\lambda_i)\mu_P)\right\|_2^2\right] \leq r^2\Lt{w}^2.
\end{equation}
Inspecting the other term, we use the law of total variance by conditioning on $\vecB$. In particular,
\begin{align}
    \bbE\left[ \left\|\sum_{i=1}^n w_i(B_i\tilde{X}_i - \lambda_i\mu_{Q_i})\right\|_2^2 \right] &= \bbE\left[ \left\|\sum_{i=1}^n w_i(B_i(\tilde{X}_i -\mu_{Q_i}) + \mu_{Q_i}(B_i-\lambda_i))\right\|_2^2 \right] \\
    &\hspace{-40pt}\leq 2\textcolor{blue}{\bbE\left[ \left\|\sum_{i=1}^n w_iB_i(\tilde{X}_i -\mu_{Q_i})\right\|_2^2\right]} + 2\textcolor{red}{\bbE\left[\left\|\sum_i w_i \mu_{Q_i}(B_i-\lambda_i))\right\|_2^2 \right]}. \label{eq:bound-var}
\end{align}

For the first term in \eqref{eq:bound-var}, use Jensen's inequality as
\begin{align}
    \textcolor{blue}{\bbE\left[ \left\|\sum_{i=1}^n w_iB_i(\tilde{X}_i -\mu_{Q_i})\right\|_2^2\right]} &= \bbE\left[ \bbE\left[ \left\|\sum_{i=1}^n w_iB_i(\tilde{X}_i -\mu_{Q_i})\right\|_2^2 \bigg| \vecB \right] \right] \\
    &\leq 4r^2 \bbE\left[ \left(\sum_{i=1}^n w_iB_i \right)^2 \right] \\
    &= 4r^2 \sum_i w_i^2 \lambda_i(1-\lambda_i) + 4r^2(w^T\lambda)^2 \\
    &\leq r^2\Lt{w}^2 + 4r^2(w^T\lambda)^2.
\end{align}

For the second term in \eqref{eq:bound-var}, using the fact that $\Lt{\mu_{Q_i}} \leq r$,  using Jensen's inequality we get
\begin{equation}
    \textcolor{red}{\bbE\left[\left\|\sum_i w_i \mu_{Q_i}(B_i-\lambda_i))\right\|_2^2 \right]} \leq r^2 \bbE\left[\left(\sum_i w_i(B_i-\lambda_i))\right)^2 \right] \leq \frac{r^2}{4}\Lt{w}^2.
\end{equation}

Combining the above, obtain
\begin{equation}
    \bbE\left[\left\|\sum_{i=1}^n w_i (Z_i - \bbE[Z_i])\right\|_2^2 \right] \leq 7r^2 \Lt{w}^2 +16 r^2 (w^T \lambda)^2
\end{equation}

\subsection{Upper Bound Solution}
\label{sec:app-bounded-mean}

To solve 
\begin{equation}
    \min_{w \in \Delta_n} \|w\|^2 + c(w^T\lambda)^2, 
\end{equation}
consider the Lagrangian $\cL(w,\beta,\gamma) = \|w\|_2^2 + c(w^T\lambda)^2 + 2\beta(1-\sum_i w_i) - \sum_i 2 \gamma_i w_i$.
KKT condition on the Lagrangian leads to
\begin{align}
    w_i = \beta - c (w^T\lambda) \lambda_i + \gamma_i \ \forall i,
\end{align}
where $\gamma_i w_i = 0, \gamma_i \geq 0 \ \forall i$.
Thus, we can equivalently write 
\begin{equation}
w_i = (\beta - c(w^T\lambda) \lambda_i)_{+}. \label{eq:w-+}    
\end{equation}

Notice that $w_i$ are decreasing in $\lambda_i$ thus, order the indices such that $\lambda_1 \leq \lambda_2 \ldots \leq \lambda_n$.
Note that since this is a strictly convex objective with a convex compact constraint set we are guaranteed a \textit{unique} solution $w$.

Let $m$ such that $w_i > 0 \ \forall i \leq m$ and $w_i = 0 \ \forall i > m$; if no such $m$ exists then it is understood $m=n$.
Since $\sum w_i = 1$, use \eqref{eq:w-+} to obtain the condition
\begin{equation}
    m\beta - c (w^T\lambda) \Lo{\lambda_1^m} = 1. \label{eq:bounded-beta}
\end{equation}
Noting that $w^T\lambda = \sum_{i=1}^m w_i\lambda_i$, we can use \eqref{eq:w-+} to obtain
\begin{equation}
    w^T\lambda = \beta \Lo{\lambda_1^m} - c w^T\lambda \Lt{\lambda_1^m}^2.
\end{equation}
Solving for $w^T\lambda$ and substituting in \eqref{eq:bounded-beta}, obtain $\beta = \frac{1+c\Lt{\lambda_1^{m}}^2}{k(1+c\Lt{\lambda_1^{m}}^2) - c\Lo{\lambda_1^{m}}^2}$, and 
\begin{equation}
    w_i = \frac{1+c\Lt{\lambda_1^{m}}^2}{m(1+c\Lt{\lambda_1^{m}}^2) - c\Lo{\lambda_1^{m}}^2} \left( 1 - c\lambda_i \frac{\Lo{\lambda_1^m}}{1+c\Lt{\lambda_1^m}^2}\right)_+ \forall i \in [n]. \label{eq:bounded-w-exp}
\end{equation}

Thus, the problem of solving for the weights has been reduced to identifying the index $k$ after which the weights are zero.
This is precisely what \cref{alg:bounded} does.
In particular, $m+1 = \min \{j : w_j = 0 \}$ by definition and $w_{m+1} = 0 \Leftrightarrow \lambda_{m+1} \geq \frac{1+c\Lt{\lambda_1^m}^2}{c\Lo{\lambda_1^m}}$ by \eqref{eq:bounded-w-exp}.
Therefore, if the loop in \cref{alg:bounded} runs without termination till index $k = m$, then it will correctly terminate at $k=m$ since $\lambda_{m+1} \geq \frac{1+c\Lt{\lambda_1^m}^2}{c\Lo{\lambda_1^m}}$.

Thus, we need to show that the algorithm does not terminate before $k=m$ to prove correctness.
Assume the contrary that it terminates at $k=p < m$, i.e., $\lambda_{p+1} \geq \frac{1+c\Lt{\lambda_1^p}^2}{c\Lo{\lambda_1^p}}$.
Observe that
\begin{align}
    \lambda_{p+1} \geq \frac{1+c\Lt{\lambda_1^p}^2}{c\Lo{\lambda_1^p}} &\Leftrightarrow \lambda_{p+1} \geq \frac{1+c\Lt{\lambda_1^{p+1}}^2}{c\Lo{\lambda_1^{p+1}}} \\
    &\implies \lambda_{p+2} \geq \frac{1+c\Lt{\lambda_1^{p+1}}^2}{c\Lo{\lambda_1^{p+1}}}, \label{eq:alg-ind}
\end{align}
where \eqref{eq:alg-ind} follows since $\lambda$(s) are indexed in non-decreasing order.
Extending this argument, we get $\lambda_m \geq \frac{1+c\Lt{\lambda_1^{m}}^2}{c\Lo{\lambda_1^{m}}}$.
Note since $w_m > 0$, we have 
\begin{equation}
    \lambda_m < \frac{1+c\Lt{\lambda_1^m}^2}{c\Lo{\lambda_1^m}} 
\end{equation}
by \eqref{eq:bounded-w-exp} -- a contradiction. 
This proves that the proposed algorithm solves for $w$ correctly.

\subsection{Lower Bound} \label{sec:Bounded-LB-proof}

By Le Cam's method, 
\begin{align}
	L(\lambda,r) &\geq  r^2\delta^2 \left( 1 - \mathsf{TV}\left(\otimes_{i=1}^n \text{Ber}\left(\frac{1}{2} - \epsilon_i\right), \otimes_{i=1}^n \text{Ber}\left(\frac{1}{2} + \epsilon_i\right) \right) \right), \\
	&= r^2\delta^2 \left( 1 - \sqrt{\frac{1}{2}\sum_{i=1}^n\mathsf{KL}\left( \text{Ber}\left(\frac{1}{2} - \epsilon_i\right), \text{Ber}\left(\frac{1}{2} + \epsilon_i\right) \right) } \right) \\
	&= r^2\delta^2 \left( 1 - \sqrt{6\sum_{i=1}^n \epsilon_i^2} \right), 
\end{align}
where we used $2\epsilon \log \frac{1+2\epsilon}{1-2\epsilon} \leq 12 \epsilon^2 \ \forall \epsilon \in [0,\frac{1}{4}]$.
Let $n(t) = |\{i:\lambda_i < t\}|$, then we obtain 
\begin{align}
    L(\lambda,r) &\geq r^2\delta^2 \left( 1 - \sqrt{6\delta^2 n \left(\frac{2\delta}{1+2\delta}\right) }\right) \\
    &\geq r^2\delta^2 \left( 1 - \sqrt{6\delta^2 n(2\delta)}\right) \ \forall \delta \in \left[0,\frac{1}{4}\right]
\end{align}

\section{Mean Estimation for Gaussian Distributions: Proofs}

\subsection{Upper Bound} \label{sec:Gauss-UB}
Recall
\begin{align}
    D_{w}(\eta, \vecZ) = \min_{v \in \bbS_{d}} \sum_{i=1}^n w_i \bbI\{ v^T(Z_i-\eta) \geq 0\},
\end{align}
\begin{equation}
    \hat{\mu}_{\mathsf{TM}}(\vecZ,w) := \arg\max_{\eta \in \bbR^d} D_w(\eta,\vecZ).
\end{equation}

Let $G = \{i:B_i = 0\}$ and $B = [n]\setminus G$.
Note that $Z_i = X_i$ for $i \in G$.
The depth of the true mean is lower bounded as 
\begin{align}
    D_w(\mu,\vecZ) &\geq \min_{v \in \bbS_d} \sum_{i \in G} w_i \bbI\{(X_i - \mu)^Tv \geq 0\}.
\end{align}

Define the class of indicator functions $\cF_{\mu} = \{f_v(x) = \bbI\{(x-\mu)^Tv \geq 0\}| v \in \bbS_d\}$.
Note that $E[f(X)] = \frac{1}{2}\ \ \forall f\in \cF_{\mu}$.
With some abuse of notation, let $w(G) = \{w_i | i \in G\}$.
By \cref{prop:SLT}, we have with probability at least $1 - \frac{\delta}{4}$
\begin{align}
    \min_{f \in \cF_{\mu}}\sum_{i \in G} w_i \left( f(X_i) - \frac{1}{2} \right) &\geq -62 \Lt{w(G)}\sqrt{\mathsf{VC}(\cF_{\mu})} - \Lt{w(G)}\sqrt{\frac{\log 4/\delta}{2}} \\
    &\geq -\Lt{w} \left(62\sqrt{d} + \sqrt{\frac{\log 4/\delta}{2}} \right),
\end{align}
where we used $\mathsf{VC}(\cF_{\mu}) = d$; readers may refer to \cite[Corollary 4.2.2]{Lugosi-Combinatorial} for VC dimension of homogeneous half-space classifiers.
Further, with probability at least $1-\delta/4$, by McDiarmid's inequality \cite{Wainwright_2019}
\begin{align}
    \sum_{i \in G} w_i &= \sum_{i=1}^n w_i \bbI\{B_i = 0\} \\
    &\geq \sum_{i=1}^n w_i(1-\lambda_i) - \Lt{w} \sqrt{\frac{\log 4/\delta}{2}} \\
    &= 1 - w^T \lambda  - \Lt{w} \sqrt{\frac{\log 4/\delta}{2}}.
\end{align}
Thus, with probability at least $1-\delta/2$,
\begin{align} \label{eq:m-d-lb}
    D_w(\mu,\vecZ) &\geq \frac{1}{2} - \frac{w^T\lambda}{2} - \Lt{w} \left(62\sqrt{d} +  \sqrt{\frac{9\log 4/\delta}{8}} \right).
\end{align}

Next, we show that depth of any point far away from the true mean is low.
For any $\eta \in \bbR^d $ such that $\Lt{\eta - \mu} \geq r = \Phi^{-1}(\frac{1}{2} + \alpha)$, let $v_{\eta} = \frac{\eta - \mu}{\Lt{\eta - \mu}}$.
We shall set the value of $\alpha > 0$ later.

\begin{align}
    \sup_{\eta: \Lt{\eta - \mu} \geq r} D_w(\eta,\vecZ) \leq \sum_{i \in B} w_i +  \sup_{\eta: \Lt{\eta - \mu} \geq r} \sum_{i = 1}^n w_i \bbI\{(X_i - \eta)^T v_{\eta} \geq 0 \}.
\end{align}

Define the class of indicator functions $\cG_{\mu} = \{f_{\eta}(x) = \bbI\{(x-\eta)^T v_{\eta} \geq 0\}| \Lt{\eta - \mu} \geq r \}$.
Since $\bbE[\bbI\{(X-\eta)^T v_{\eta} \geq 0\}] = \Phi(-\Lt{\eta - \mu})$, we have $E[f(X)] \leq \frac{1}{2} - \alpha$ $\forall f \in \cG_{\mu}$.

Now, note that $\cG_{\mu} \subseteq \{f_{\eta}(x) = \bbI\{(x-\eta)^T v_{\eta} \geq 0\}| \eta \in \bbR^d \}$. 
By reparameterzing $x$, we have $\mathsf{VC}( \{f_{\eta}(x) = \bbI\{(x-\eta)^T v_{\eta} \geq 0\}| \eta \in \bbR^d \}) = \mathsf{VC}( \{f_{\eta}(x) = \bbI\{(x-\eta)^T \eta \geq 0\}| \eta \in \bbR^d \})$.
Observe that
$ \mathsf{VC}( \{f_{\eta}(x) = \bbI\{(x-\eta)^T \eta \geq 0\}| \eta \in \bbR^d \}) \leq \mathsf{VC}( \{f_{\eta,v}(x) = \bbI\{(x-\eta)^T v \geq 0\}| \eta,v \in \bbR^d \}) = d+1$.
Thus, $\mathsf{VC}(\cG_{\mu}) \leq d+1 \leq 2d$.

Thus, by \cref{prop:SLT}, with probability at least $1-\delta/4$,
\begin{align}
\sup_{\eta: \Lt{\eta - \mu} \geq r} D_w(\eta,\vecZ) &\leq \sum_{i\in B} w_i + 
    \sup_{g \in \cG_{\mu}} \sum_{i = 1}^n w_i g(X_i) \\
    &\leq \sum_{i\in B} w_i + \frac{1}{2} - \alpha + \Lt{w}\left(62\sqrt{2d} + \sqrt{\frac{\log 4/\delta}{2}} \right).
\end{align}
Again, by McDiarmid's inequality, with probability at least $1-\delta/4$,
\begin{align}
    \sum_{i \in B} w_i \leq w^T\lambda + \Lt{w} \sqrt{\frac{\log 4/\delta}{2}}.
\end{align}
Thus, with probability at least $1-\delta/2$, we have
\begin{equation} \label{eq:m-d-ub}
    \sup_{\eta: \Lt{\eta - \mu} \geq r} D_w(\eta,\vecZ) \leq \frac{1}{2} - \alpha + w^T\lambda  + \Lt{w} \left(88\sqrt{d} +  2\sqrt{\frac{\log 4/\delta}{2}} \right).
\end{equation}
Combining \eqref{eq:m-d-lb} and \eqref{eq:m-d-ub}, picking $\alpha = \frac{3}{2} w^T\lambda + \Lt{w} \left(150\sqrt{d} +  3.5 \sqrt{\frac{\log 4/\delta}{2}} \right)$ ensures that no point $\eta$ such that $\Lt{\mu - \eta} \geq \Phi^{-1}(\frac{1}{2} + \alpha)$ can be returned by the Tukey median estimator.
Thus, with probability at least $1- \delta$, we have
\begin{align}
\Lt{\hat\mu_{\mathsf{TM}} - \mu} &\leq \Phi^{-1}\left(\frac{1}{2} + \alpha \right) \\
&\leq 3\alpha,
\end{align}
using the identity $\Phi^{-1}\left(\frac{1}{2} + x\right) \leq 3x \ \forall x \in [0,\frac{1}{3}]$.
The above upper bound is valid as long as $\alpha = \frac{3}{2} w^T\lambda + \Lt{w} \left(150\sqrt{d} +  3.5\sqrt{\frac{\log 4/\delta}{2}} \right) < \frac{1}{3}$.

Setting $\delta = 1/5$, ensuring $\frac{3}{2} w^T\lambda + \Lt{w} \left(150\sqrt{d} +  4.3 \right) < \frac{3}{2} w^T\lambda + 155 \Lt{w} \sqrt{d} \leq \frac{1}{3}$ suffices.

Thus, summarizing, let $g(w,\lambda) =  \frac{3}{2} w^T\lambda + 155 \Lt{w} \sqrt{d}$.
For $w$ and $\lambda$ such that $g(w,\lambda) \leq \frac{1}{3}$, we have the guarantee that with probability at least $\frac{4}{5}$,
\begin{equation}
    \Lt{\hat\mu_{\mathsf{TM}} - \mu} \leq 3 g(w,\lambda). \label{eq:gauss-ub-final}
\end{equation}

To reduce the above condition to the simpler form stated in the main paper, note that
\begin{align}
g(w,\lambda)^2 &\leq \left( \frac{3}{2} w^T\lambda + 155 \Lt{w} \sqrt{d} \right)^2 \\
 &\leq     \left( 155 w^T\lambda + 155 \Lt{w} \sqrt{d} \right)^2 \\
&\leq 2 \times 155^2 \left( ( w^T\lambda )^2 + d\Lt{w}^2 \right). 
\end{align}
Ensuring the above is less than $\frac{1}{9}$ suffices. Thus, $\forall w \in \Delta_n$ such that $( w^T\lambda )^2 + d\Lt{w}^2 \leq \frac{1}{432450}$, we have 
\begin{equation} \label{eq:gauss-ub-final}
    \sup_{P\in \cD_d^{\cN}} \mathrm{Pr}_{\vecZ \sim_{\bm{\lambda}} P}\left[ \| \hat\mu_{\mathsf{TM}}(\vecZ,w) - \mu_P\|_2^2 \geq 432450 \left(  (w^T\lambda)^2 + d\Lt{w}^2 \right) \right] \leq 1/5 
\end{equation}

Correspondingly, the threshold based estimator satisfies the following -- $\forall t \in [0,1]$ such that $t^2 + \frac{d}{N(t)} \leq \frac{1}{432450}$, we have 
\begin{equation} \label{eq:gauss-ub-final-thresh}
    \sup_{P\in \cD_d^{\cN}} \mathrm{Pr}_{\vecZ \sim_{\bm{\lambda}} P}\left[ \| \hat\mu_{\mathsf{S}}(\vecZ,w(t)) - \mu_P\|_2^2 \geq 432450 \left(  t^2 + \frac{d}{N(t)} \right) \right] \leq 1/5 
\end{equation}

\subsection{Lower Bound} \label{sec:Gaussian-LB}

We provide a lower bound for Gaussian mean estimation in $\bbR^d$ in this section. 
For a vector $v \in \bbR^{\sqrt{d}}$, let $e(v)$ denote a vector in $\bbR^d$ with $v$ as the first $\sqrt{d}$ elements and the last $d- \sqrt{d}$ elements equal to $0$.
Define the parameterized distribution $P_{\delta}(\tau) = \cN(\delta e(\tau),I)$ for $\delta > 0$ to be specified later, and let $\cP_{\delta} = \{ P_{\delta}(\tau) | \tau \in \{-1,1\}^{\sqrt{d}}\}$.

\textbf{Note:} The supremum in our minimax definition is over both the true distribution and the adversarial strategy. 
We shall specify the adversarial strategy later for our lower bound argument.

Note that $L_{\mathsf{PAC}}(\bm\lambda,\cD_d^{\cN}) \geq L_{\mathsf{PAC}}(\bm\lambda,\cP_{\delta}) \ \forall \delta$ and thus, we shall lower bound $L_{\mathsf{PAC}}(\bm\lambda,\cP_{\delta})$.

We begin by lower bounding 
\begin{equation}
    L_{\bbE}(\bm\lambda,\cP_{\delta}) = \inf_M \sup_{P \in \cP_{\delta}} \bbE_{\vecZ \sim_{\lambda} P}\left[ \|M(\vecZ) - \mu_{P} \|_2^2\right]. 
\end{equation}
For a vector $\tau$, let $\tau'^{j}$ denote the vector such that $\tau_i = \tau'^{j}_i \forall i \neq j$ and $\tau_j = - \tau'^{j}_j$.
Using Assouad's lower bound technique \cite{Wainwright_2019},
\begin{align}
    \inf_M \sup_{P \in \cP_{\delta}} \bbE_{\vecZ \sim_{\lambda} P}\left[ \|M(\vecZ) - \mu_P \|_2^2\right] &\geq \inf_M \frac{1}{2^{\sqrt{d}}} \sum_{\tau \in \{-1,1\}^{\sqrt{d}}} \bbE\left[ \|M(\vecZ) - \delta e(\tau) \|_2^2\right] \\
    &= \inf_M \frac{1}{2^{\sqrt{d}}} \sum_{\tau \in \{-1,1\}^{\sqrt{d}}} \sum_{j=1}^d \bbE\left[  |M(\vecZ)_j - \delta e(\tau)_j |^2\right]  \\
    &= \inf_M \sum_{j=1}^d \frac{1}{2^{\sqrt{d}}} \sum_{\tau \in \{-1,1\}^{\sqrt{d}}} \bbE\left[  |M(\vecZ)_j - \delta e(\tau)_j |^2\right] \\
    &\geq \inf_M \sum_{j=1}^{\sqrt{d}}  \frac{1}{2^{\sqrt{d}}} \sum_{\tau \in \{-1,1\}^{\sqrt{d}}}  \bbE\left[  |M(\vecZ)_j - \delta \tau_j |^2\right]. 
\end{align}
Now, note that 
\begin{equation}
    \sum_{\tau \in \{-1,1\}^{\sqrt{d}}}  \bbE\left[  |M(\vecZ)_j - \delta \tau_j |^2\right] 
= \sum_{\tau \in \{-1,1\}^{\sqrt{d}}} \frac{\bbE\left[  |M(\vecZ)_j - \delta \tau_j |^2\right] + \bbE\left[  |M(\vecZ)_j - \delta \tau_j'^{j} |^2\right] }{2} \ \forall j
\end{equation}
Thus, we get
\begin{align}
L_{\bbE}(\bm\lambda,\cP_{\delta}) &\geq \inf_M \sum_{j=1}^{\sqrt{d}}  \frac{1}{2^{\sqrt{d}}} \sum_{\tau \in \{-1,1\}^{\sqrt{d}}} \frac{\bbE\left[  |M(\vecZ)_j - \delta \tau_j |^2\right] + \bbE\left[  |M(\vecZ)_j - \delta \tau_j'^{j} |^2\right] }{2} \\
    &\geq  \sum_{j=1}^{\sqrt{d}}  \frac{1}{2^{\sqrt{d}}} \sum_{\tau \in \{-1,1\}^{\sqrt{d}}} \inf_M \frac{\bbE\left[  |M(\vecZ)_j - \delta \tau_j |^2\right] + \bbE\left[  |M(\vecZ)_j - \delta \tau_j'^{j} |^2\right] }{2} \\
    &\geq  \sum_{j=1}^{\sqrt{d}}  \frac{1}{2^{\sqrt{d}}} \sum_{\tau \in \{-1,1\}^{\sqrt{d}}} \delta^2 ( 1- \mathsf{TV}(P_{\vecZ \sim_{\lambda} P_{\delta}(\tau)},P_{\vecZ \sim_{\lambda} P_{\delta}(\tau'^j)})), \label{eq:Assouads-LB}
\end{align}
where the last line follows by Le Cam's method and the notation $P_{\vecZ \sim_{\lambda} P_{\delta}(\tau)}$ refers to the distribution of $\vecZ$ when the true distribution is $P_{\delta}(\tau)$. We shall now specify a particular adversarial strategy to lower bound the above.

\paragraph{Adversarial strategy motivation:}
Consider a particular sample with corruption rate $\lambda$ and let the underlying true distribution be $P_{\delta}(\tau)$.
Denote the perturbed sample as $Z(\tau)$ and the outlier to be $\tilde X(\tau)$.
Note that $Z(\tau) \sim (1-\lambda)P_{X(\tau)}  + \lambda P_{\tilde X(\tau)}$.
For any particular clean sample $X(\tau) \sim P_{\delta}(\tau)$, the adversary's goal is to ensure the sample $Z(\tau)$ contains no information about $\tau$.
One possible way is that the adversary can try to ensure $Z(\tau) \sim \frac{\max_{\tau'} P(\tau')}{T}$, where the maximum is taken pointwise over the pdf and $T$ is a normalizing constant.
Observe the identity
\begin{equation}
    P(\tau) + \left( \max_{\tau' \neq \tau} P(\tau') - P(\tau)\right)_+ = \max_{\tau'} P(\tau'), 
\end{equation}
where $( \cdot )_+ \coloneqq \max\{ \cdot , 0 \}$ is pointwise over the pdf.
First, note that \[ \int_{x \in \bbR^d} \left( \max_{\tau' \neq \tau} P_{\delta}(\tau')(x) - P_{\delta}(\tau)(x)\right)_+ dx = T - 1,\] 
where $P_{\delta}(\tau)(x)$ is understood to be the pdf of $P_{\delta}(\tau)$ at $x$.
Thus, $\frac{\left( \max_{\tau' \neq \tau} P(\tau') - P(\tau)\right)_+}{T-1}$ is a valid pdf, and for $\lambda = \frac{T-1}{T}$,
we have $(1-\lambda)P(\tau) + \lambda \frac{\left( \max_{\tau' \neq \tau} P(\tau') - P(\tau)\right)_+}{T-1} = \frac{\max_{\tau'}P(\tau')}{T}$.
Thus, for any $\lambda \geq 1 - \frac{1}{T}$, there exists a way for the adversary to make the distribution of $Z(\tau) \sim \frac{\max_{\tau'} P(\tau')}{T}$, rendering the sample useless for identifying $\tau$.

Now, we find an upper bound on $T$ in terms of $\delta$.
\begin{align}
    T &= \int_{x \in \bbR^{d}} \max_{\tau \in \{-1,1\}^{\sqrt{d}}} \frac{1}{\sqrt{(2\pi)^d}} e^{-\frac{\|x - \delta e(\tau)\|_2^2}{2}} dx \\
    &= \int_{x \in \bbR^{d}} \frac{1}{\sqrt{(2\pi)^d}}\max_{\tau \in \{-1,1\}^{\sqrt{d}}} e^{-\frac{\|x - \delta e(\tau)\|_2^2}{2}} dx \\
    &= \int_{x' \in \bbR^{\sqrt{d}}} \frac{1}{\sqrt{(2\pi)^{\sqrt{d}}}}\max_{\tau \in \{-1,1\}^{\sqrt{d}}} e^{-\frac{\|x' - \delta \tau\|_2^2}{2}} dx' \\
    &= \left[ \int_{x'' \in \bbR} \frac{1}{\sqrt{(2\pi)}}\max_{\tau \in \{-1,1\}} e^{-\frac{\|x'' - \delta \tau\|_2^2}{2}} dx'' \right]^{\sqrt{d}} \\
    &= \left[ 2\int_{y \in \bbR: y\geq 0} \frac{1}{\sqrt{2\pi}} e^{-\frac{\|y - \delta \|_2^2}{2}} dy \right]^{\sqrt{d}} \\
    &= \left[ 2 \Phi(\delta) \right]^{\sqrt{d}},
\end{align}
where $\Phi(\cdot)$ is the cumulative distribution function of standard normal distribution.
Using $2\Phi(x) \leq 1 + x$ and $1+x \leq e^{x}$, we obtain
\begin{equation}
    T \leq  e^{\delta\sqrt{d}}.
\end{equation}
Thus, if $\lambda \geq 1 - e^{-\delta \sqrt{d}}$ then the adversary can ensure that $Z(\tau) \sim \frac{\max_{\tau'} P(\tau')}{T}$ and contains no information about $\tau$.

\paragraph{Adversarial strategy:} If for sample $i$, $\lambda_i \geq 1-e^{-\delta \sqrt{d}}$, then independently sample $\tilde X(\tau) \sim \beta \left( \frac{\max_{\tau' \neq \tau}P_{\delta}(\tau') - P_{\delta}(\tau)}{T-1} \right)_+ + (1-\beta) \frac{\max_{\tau'}P_{\delta}(\tau')}{T}$, where $\beta = \frac{(T-1)(1-\lambda)}{\lambda}$.
The constraint $\lambda_i \geq 1-e^{-\delta \sqrt{d}}$ ensures $\beta \in [0,1]$ and the above is a valid distribution.

Thus, $Z(\tau) \sim \frac{\max_{\tau'}P_{\delta}(\tau')}{T}$.
If $\lambda_i < 1-e^{-\delta \sqrt{d}}$, then adversary does no corruption, or equivalently, independently sample $\tilde X(\tau) \sim P_{\delta}(\tau)$ so that $Z(\tau) \sim P_{\delta}(\tau)$.

Thus, using Pinsker's inequality in \eqref{eq:Assouads-LB}, obtain
\begin{align}
    L_{\bbE}(\bm\lambda,\cP_{\delta}) &\geq \sum_{j=1}^{\sqrt{d}}  \frac{1}{2^{\sqrt{d}}} \sum_{\tau \in \{-1,1\}^{\sqrt{d}}} \delta^2 \left( 1- \sqrt{\frac{1}{2}\mathsf{KL}(P_{\vecZ \sim_{\lambda} P_{\delta}(\tau)},P_{\vecZ \sim_{\lambda} P_{\delta}(\tau'^j)}}) \right) \\
    &= \sum_{j=1}^{\sqrt{d}}  \frac{1}{2^{\sqrt{d}}}  \sum_{\tau \in \{-1,1\}^{\sqrt{d}}} \delta^2 \left( 1- \sqrt{\frac{1}{2} \mathsf{KL}\left(\otimes_{i=1}^n P_{Z_i(\tau)},\otimes_{i=1}^n P_{Z_i(\tau'^j)}\right) }\right) \\
    &= \sum_{j=1}^{\sqrt{d}}  \frac{1}{2^{\sqrt{d}}}  \sum_{\tau \in \{-1,1\}^{\sqrt{d}}} \delta^2 \left( 1- \sqrt{\frac{1}{2} \sum_{i: \lambda_i < 1 - e^{-\delta\sqrt{d}}}\mathsf{KL}\left(P_{\delta}(\tau), P_{\delta}(\tau'^j)\right) }\right) \\
    &= \sqrt{d}\delta^2 \left( 1- \sqrt{\delta^2 n(1 - e^{-\delta \sqrt{d}}})\right) \ \ \forall \delta \label{eq:LB-de}
\end{align}

Let $\delta_*$ be such that 
\begin{equation} \label{eq:ds-def}
n(1 - e^{-\delta_*\sqrt{d}}) \leq \frac{1}{64\delta_*^2}, \ \ \text{and }     N(1 - e^{-\delta_*\sqrt{d}}) \geq \frac{1}{64\delta_*^2}.
\end{equation}

Substituting $\delta_*$ in \eqref{eq:LB-de},
\begin{equation}
    \frac{7}{8} \sqrt{d} \delta_*^2 \leq \inf_M \sup_{P \in \cP_{\delta_*}} \bbE_{\vecZ \sim_{\lambda} P}\left[ \|M(\vecZ) - \mu_P \|_2^2\right]. \label{eq:final-lb}
\end{equation}

For a random variable $K$, let $Q(K,\alpha) \coloneqq \inf\{t: \text{Pr}\left[ K \geq t \right] \leq (1-\alpha) \}$, i.e., $Q(\cdot,\alpha)$ is the $\alpha$-quantile of the random variable.
Note that for a random variable $K$, $\bbE K \leq Q(K,x)x + (1-x) \text{ess-sup}K$ $\forall x \in [0,1]$, where $\text{ess-sup}$ denotes essential supremum.
Further, note that in the minimax term $\inf_M \sup_{P  \in \cP_{\delta_*}}$, we can restrict ourselves to estimators $M$ which output in $[-\delta_*,\delta_*]^{\sqrt{d}} \times \{0\}^{d - \sqrt{d}}$ almost surely -- otherwise the error can be reduced by projected to this region.
Thus $\text{ess-sup} \|M(\vecZ) - \mu_P\|_2^2 \leq 4\sqrt{d}\delta_*^2$ for any estimator $M$ and any distribution $P$.
Thus, combining the above observations, we have
\begin{align}
    \inf_M \sup_{P \in \cP_{\delta_*}} \bbE_{\vecZ \sim_{\lambda} P} \|M(\vecZ) - \mu_P \|_2^2 &\leq \inf_M \sup_{P \in \cP_{\delta_*}} \frac{4}{5} Q\left(\|M(\vecZ) - \mu_P \|^2_2, \frac{4}{5} \right) + \frac{4\sqrt{d}\delta_*^2}{5}.
\end{align}
Using \eqref{eq:final-lb}, we get 
\begin{align}
    \frac{7}{8} \sqrt{d} \delta_*^2 &\leq \frac{4}{5} \inf_M \sup_{P \in \cP_{\delta_*}}  Q\left(\|M(\vecZ) - \mu_P \|^2_2, \frac{4}{5} \right) + \frac{4\sqrt{d}\delta_*^2}{5} \\
    &= \frac{4}{5}  L_{\mathsf{PAC}}(\bm\lambda,\cP_{\delta_*}) + \frac{4\sqrt{d}\delta_*^2}{5} \\
    \implies \frac{3}{40} \sqrt{d} \delta_*^2 &\leq L_{\mathsf{PAC}}(\bm\lambda,\cP_{\delta_*}) \leq L_{\mathsf{PAC}}(\bm\lambda,\cD_d^{\cN}). \label{eq:fn-lb}
\end{align}

\subsection{Minimax Optimality}

\label{sec:Gauss-Minimax}

For proving \cref{thm:gauss-minimax}, we shall use the weighing $w_i = \frac{\bbI\{\lambda_i \leq 1 - e^{\delta_* \sqrt{d}}\}}{N(1 - e^{\delta_* \sqrt{d}})}$ in our upper bound.
From \eqref{eq:gauss-ub-final-thresh}, $\forall t \in [0,1]$ such that $t^2 + \frac{d}{N(t)} \leq c$ for some universal constant $c$, we have
\begin{align}
    \Lt{\hat\mu_{\mathsf{S}}(\vecZ,w(t)) - \mu}^2 \lesssim   \left( t^2 + \frac{d}{N(t)} \right). 
\end{align}

Thus, if $\min_{t \in [0,1]} \left( t^2 + \frac{d}{N(t)} \right) \leq c$ and let the minimum value be attained at $t^*$,  then we have
\begin{align}
    \Lt{\hat\mu_{\mathsf{S}}(\vecZ,w(t^*)) - \mu}^2 &\lesssim   \left( t^{*2} + \frac{d}{N(t^*)} \right) \\
    &\leq  \left(1-e^{-\delta_* \sqrt{d}}\right)^2 + \frac{d}{N\left( 1-e^{-\delta_* \sqrt{d}} \right)} \\
    &\leq  d \delta_*^2 + 64 d \delta_*^2 \label{eq:here} \\
    &\simeq d \delta_*^2,
\end{align}
where we used \eqref{eq:ds-def} in \eqref{eq:here}.
Combining with \eqref{eq:fn-lb}, when $\min_{t \in [0,1]} \left( t^2 + \frac{d}{N(t)} \right) \leq c$, we have 
\begin{equation}
   \sqrt{d} \delta_*^2 \lesssim L_{\mathsf{PAC}}(\bm\lambda,\cD_d^{\cN}) \lesssim \min_{t \in [0,1]} \left( t^2 + \frac{d}{N(t)} \right) \leq d \delta_*^2.
\end{equation}

Thus, when $\min_{t \in [0,1]} \left( t^2 + \frac{d}{N(t)} \right) \leq c$,
\begin{equation}
    \frac{1}{\sqrt{d}} \min_{t \in [0,1]} \left( t^2 + \frac{d}{N(t)} \right) \lesssim L_{\mathsf{PAC}}(\bm\lambda,\cD_d^{\cN}) \lesssim \min_{t \in [0,1]} \left( t^2 + \frac{d}{N(t)} \right).
\end{equation}
\section{Linear Regression: Proofs} \label{sec:LR-Proofs}

\subsection{Upper Bound} \label{sec:LR-UB}
Recall
\begin{align} 
    D_{w}(\eta, \vecZ) = \min_{v \in \bbS_{d}} \sum_{i=1}^n w_i \bbI\{ (\hat Y_i - \eta^T\hat W_i)(v^T \hat W_i) \geq 0\},
\end{align}
\begin{equation}
    \hat{\beta}_{\mathsf{TC}}(\vecZ,w) := \arg\max_{\eta \in \bbR^d} D_w(\eta,\vecZ).
\end{equation}

Let the true underlying regression coefficient be $\beta$, i.e., $W \sim \cN(0,\Sigma)$ and conditioned on $W$, $Y \sim \cN(\beta^TW, \sigma^2)$.
Let $G = \{i:B_i = 0\}$, $B = [n]\setminus G$, and Let $w(G) = \{w_i | i \in G\}$.
Note that $Z_i = (W_i,Y_i)$ for $i \in G$.
The depth of the true coefficient is lower bounded as 
\begin{align}
    D_w(\beta,\vecZ) &\geq \min_{v \in \bbS_d} \sum_{i \in G} w_i \bbI\{(Y_i - \beta^TW_i)(v^TW_i) \geq 0\}.
\end{align}

Define the class of indicator functions $\cF_{\beta} = \{f_v(w,y) = \bbI\{(
y-w^T\beta)(v^Tw) \geq 0\}| v \in \bbS_d\}$.
Note that $E[f(Z)] = \frac{1}{2}\ \ \forall f\in \cF_{\beta}$.
By \cref{prop:SLT}, we have with probability at least $1 - \frac{\delta}{4}$
\begin{align}
    \min_{f \in \cF_{\beta}}\sum_{i \in G} w_i \left( f(X_i) - \frac{1}{2} \right) &\geq -62 \Lt{w(G)}\sqrt{\mathsf{VC}(\cF_{\beta})} - \Lt{w(G)}\sqrt{\frac{\log 4/\delta}{2}} \\
    &\geq -\Lt{w} \left(62\sqrt{d} + \sqrt{\frac{\log 4/\delta}{2}} \right),
\end{align}
where we used $\mathsf{VC}(\cF_{\beta}) = d$.
To see this, notice  $\bbI\{(
y-w^T\beta)(v^Tw) \geq 0\} = \bbI\{v^T(w
(y-w^T\beta)) \geq 0\} = \bbI\{v^T\tilde w \geq 0\}$, where $\tilde w = w
(y-w^T\beta) \in \bbR^d$.
Thus, it is equal to the VC dimension of homogeneous half-space classifiers, which is $d$ \cite[Corollary 4.2.2]{Lugosi-Combinatorial}.
Further, with probability at least $1-\delta/4$, by McDiarmid's inequality \cite{Wainwright_2019}
\begin{align}
    \sum_{i \in G} w_i &= \sum_{i=1}^n w_i \bbI\{B_i = 0\} \\
    &\geq \sum_{i=1}^n w_i(1-\lambda_i) - \Lt{w} \sqrt{\frac{\log 4/\delta}{2}} \\
    &= 1 - w^T \lambda  - \Lt{w} \sqrt{\frac{\log 4/\delta}{2}}.
\end{align}
Thus, with probability at least $1-\delta/2$,
\begin{align} \label{eq:m-d-lb-lr}
    D_w(\beta,\vecZ) &\geq \frac{1}{2} - \frac{w^T\lambda}{2} - \Lt{w} \left(62\sqrt{d} +  1.5\sqrt{\frac{\log 4/\delta}{2}} \right).
\end{align}

Next, we show that depth of any point far away from the coefficient is low.
For any $\eta \in \bbR^d $ such that $\|\eta - \mu\|_{\Sigma} \geq r$, let $v_{\eta} = \frac{\eta - \beta}{\|\eta - \mu\|_{2}}$.
We shall set the value of $r > 0$ later.

\begin{align}
    \sup_{\eta: \Lt{\eta - \mu} \geq r} D_w(\eta,\vecZ) \leq \sum_{i \in B} w_i +  \sup_{\eta: \Lt{\eta - \mu} \geq r} \sum_{i = 1}^n w_i \bbI\{(Y_i - \eta^TW_i) v_{\eta}^TW_i \geq 0 \}.
\end{align}

Again, by McDiarmid's inequality, with probability at least $1-\delta/4$,
\begin{align}
    \sum_{i \in B} w_i \leq w^T\lambda + \Lt{w} \sqrt{\frac{\log 4/\delta}{2}}.
\end{align}

Define the class of indicator functions $\cG_{\beta} = \{f_{\eta}(w,y) = \bbI\{(y-\eta^Tw) (v_{\eta}^Tw) \geq 0\}| \|\eta - \mu\|_{\Sigma} \geq r \}$.
Thus, by \cref{prop:SLT}, with probability at least $1-\delta/4$,
\begin{align}
    \sup_{f \in \cG_{\beta}} \sum_{i = 1}^n w_i (f(Z_i) - \bbE[f(Z_i)]) \leq 62 \Lt{w}\sqrt{\mathsf{VC}(\cG_{\beta})} + \Lt{w}\sqrt{\frac{\log 4/\delta}{2}}
\end{align}

Now, note that 
\begin{align}
    \bbE[f(Z)] = \text{Pr}\left[(Y - \eta^TW)W^T(\eta-\beta) \geq 0 \right]
\end{align}
Using $W \sim \cN(0,\Sigma)$ and $Y|W \sim \cN(\beta^TW,\sigma^2)$, we get that 
\begin{equation}
    (Y - \eta^TW)W^T(\eta-\beta) \sim M(\zeta - M)
\end{equation}
where $M = W^T(\eta - \beta) \sim \cN(0,\|\eta - \beta\|_{\Sigma}^2)$ and $\zeta \sim \cN(0,\sigma^2)$ are independent.
Letting $T_1,T_2 \sim_{iid} \cN(0,1)$,
\begin{align}
    \text{Pr}\left[(Y - \eta^TW)W^T(\eta-\beta) \geq 0 \right] &= \text{Pr}\left[ M(\zeta -M) \geq 0 \right] \\
    &= \text{Pr}\left[ \zeta \geq M| M \geq 0 \right] \\
    &= \text{Pr}\left[ \sigma T_2 \geq \|\eta - \beta\|_{\Sigma} T_1| T_1 \geq 0 \right] \\
    &= 1 - \text{Pr}\left[  T_2 \leq \frac{\|\eta - \beta\|_{\Sigma}}{\sigma} T_1| T_1 \geq 0 \right] \\
    &= 1 - \left(\frac{1}{2} + \frac{1}{\pi}\arctan \frac{\|\eta - \beta\|_{\Sigma}}{\sigma} \right) \\
    &= \frac{1}{2} - \frac{1}{\pi}\arctan \frac{\|\eta - \beta\|_{\Sigma}}{\sigma}
\end{align}

Thus, 
\begin{equation}
    \bbE[f(Z)] \leq \frac{1}{2} - \frac{1}{\pi}\arctan \frac{r}{\sigma} \ \ \forall f \in \cG_{\beta}.
\end{equation}

Thus, with probability at least $1-\delta/2$, and using VC dimension bound presented in \cref{prop:vc-bound},
\begin{equation} \label{eq:m-d-ub-lr}
   \sup_{\eta: \|\eta - \mu\|_{\Sigma} \geq r} D_w(\eta,\vecZ) \leq \frac{1}{2} - \frac{1}{\pi}\arctan \frac{r}{\sigma} + w^T\lambda + \Lt{w} \left(2\sqrt{\frac{\log 4/\delta}{2}} + 879\sqrt{d} \right).
\end{equation}

Combining with \eqref{eq:m-d-lb-lr}, setting 
\begin{equation}
    r= \sigma \tan \left\{ \pi \left[ \frac{3}{2} w^T \lambda  + \Lt{w}\left(3.5 \sqrt{\frac{\log 4/\delta}{2}} + 941 \sqrt{d} \right) \right] \right\}
\end{equation}
ensures that the estimator $\hat\beta_{\mathsf{TC}}$ satisfies
\begin{equation}
    \| \hat\beta_{\mathsf{TC}} - \beta\|_{\Sigma} \leq r
\end{equation}
with probability at least $1 - \delta$ as long as $\left[ \frac{3}{2} w^T \lambda  + \Lt{w}\left(3.5 \sqrt{\frac{\log 4/\delta}{2}} + 941 \sqrt{d} \right) \right]  \leq \frac{2}{5}$.

Substitute $\delta = \frac{1}{5}$ and let $g(w,\lambda) = \frac{3}{2} w^T \lambda  + 946\Lt{w} \sqrt{d}$.
Then, for $w$ such that $g(w,\lambda) \leq \frac{2}{5}$, we have with probability at least $\frac{4}{5}$,
\begin{equation} \label{eq:LR-UB-final}
     \| \hat\beta_{\mathsf{TC}} - \beta\|_{\Sigma} \leq \sigma \tan \pi g(w,\lambda) \leq 8\sigma  g(w,\lambda),
\end{equation}
where we used the identity $\tan \pi x \leq 8x \ \forall x \in [0,2/5]$.

To reduce the above condition to the simpler form stated in the main paper, note that
\begin{align}
\left( \frac{3}{2} w^T\lambda + 946 \Lt{w} \sqrt{d} \right)^2 &\leq     \left( 946 w^T\lambda + 946 \Lt{w} \sqrt{d} \right)^2 \\
&\leq 2 \times 946^2 \left( ( w^T\lambda )^2 + d\Lt{w}^2 \right). 
\end{align}
Ensuring the above is less than $\frac{4}{25}$ suffices.

Thus, $\forall w \in \Delta_n$ such that $( w^T\lambda )^2 + d\Lt{w}^2 \leq \frac{1}{11186450}$, we have 
\begin{equation} \label{eq:LR-ub-final}
    \sup_{P\in \cD_d^{\cN}} \mathrm{Pr}_{\vecZ \sim_{\bm{\lambda}} P}\left[ \| \hat\beta_{\mathsf{TC}}(\vecZ,w) - \mu_P\|_{\Sigma}^2 \geq 114549248 \sigma^2 \left(  (w^T\lambda)^2 + d\Lt{w}^2 \right) \right] \leq 1/5 
\end{equation}

Correspondingly, the threshold based estimator satisfies the following -- $\forall t \in [0,1]$ such that $t^2 + \frac{d}{N(t)} \leq \frac{1}{11186450}$, we have 
\begin{equation} \label{eq:LR-ub-final-thresh}
    \sup_{P\in \cD_d^{\cN}} \mathrm{Pr}_{\vecZ \sim_{\bm{\lambda}} P}\left[ \| \hat\mu_{\mathsf{S}}(\vecZ,w(t)) - \mu_P\|_{\Sigma}^2 \geq 114549248 \sigma^2 \left(  t^2 + \frac{d}{N(t)} \right) \right] \leq 1/5 
\end{equation}

\begin{proposition} \label{prop:vc-bound}
    $\mathsf{VC}(\cG_{\beta})\leq 200d$.
\end{proposition}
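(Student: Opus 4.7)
The plan is to reduce the VC-dimension bound to a cell-counting argument on a hyperplane arrangement in the parameter space $\bbR^d$. First I would observe that because the positive scalar $1/\Lt{\eta-\beta}$ in $v_\eta$ does not change the sign of the product, we can rewrite
\[
f_\eta(w,y) = \bbI\{(y - \eta^T w)(v_\eta^T w) \geq 0\} = \bbI\{(y - \eta^T w)\,((\eta - \beta)^T w) \geq 0\}.
\]
Hence, for any fixed data point $(w,y)$, the value of $f_\eta$ depends only on the two signs $\mathrm{sign}(p(\eta))$ and $\mathrm{sign}(q(\eta))$, where
\[
p(\eta) := y - \eta^T w, \qquad q(\eta) := (\eta - \beta)^T w
\]
are \emph{affine} functions of $\eta \in \bbR^d$. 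Explicitly, $f_\eta(w,y)=1$ iff $p(\eta)$ and $q(\eta)$ have the same sign.

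Next I would fix any $n$ data points $\{(w_i,y_i)\}_{i=1}^n$ and bound the number of achievable labelings of $\cG_\beta$ on these points. Each data point contributes two affine functions $p_i,q_i$ of $\eta$, yielding a collection of $2n$ hyperplanes in $\bbR^d$. On every full-dimensional cell of this arrangement the signs of all $p_i,q_i$ are constant, so the labeling induced by $f_\eta$ is constant. By the standard hyperplane-arrangement / Sauer-Shelah estimate, the number of such cells is at most
\[
\sum_{k=0}^{d} \binom{2n}{k} \;\leq\; \left(\frac{2en}{d}\right)^{d}.
\]
Therefore $\cG_\beta$ realizes at most $(2en/d)^d$ distinct $\{0,1\}$-labelings on any set of $n$ points.

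Finally, for shattering we need $2^n \leq (2en/d)^d$, i.e.\ $n \leq d\log_2(2en/d)$. Plugging $n = 200\,d$, the right-hand side equals $d\log_2(400e) < 11\,d$, which is far smaller than $200\,d$, a contradiction. Hence no set of size $200\,d$ can be shattered, giving $\mathsf{VC}(\cG_\beta) < 200\,d$. The side constraint $\|\eta-\beta\|_\Sigma \geq r$ only shrinks the hypothesis family, so the bound continues to hold.

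I do not anticipate a substantive obstacle: the only real insight is that because $f_\eta$ depends on $\eta$ only through the signs of two \emph{linear-in-$\eta$} expressions, the VC bound follows from the arrangement of $2n$ affine hyperplanes in $\bbR^d$. The constant $200$ is extremely loose; the same argument comfortably yields $\mathsf{VC}(\cG_\beta) \lesssim d$.
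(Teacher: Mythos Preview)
Your argument is correct and takes a genuinely different route from the paper's. The paper works in the \emph{data space}: it writes the acceptance region $G_\eta=\{(w,y):f_\eta(w,y)=1\}$ as $(H^{1+}_\eta\cap H^{2+}_\eta)\cup(H^{1-}_\eta\cap H^{2-}_\eta)$ with each $H$ a half-space in $\bbR^{d+1}$, relaxes by letting the half-space parameters vary independently, and then applies the Boolean-composition bound $\mathsf{VC}(\{A\,\square\,B:A\in\cA,\,B\in\cB\})\le 10\max\{\mathsf{VC}(\cA),\mathsf{VC}(\cB)\}$ twice (once for $\cap$, once for $\cup$), arriving at $100(d{+}1)\le 200d$. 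You instead work in the \emph{parameter space}: with the data fixed, each label depends only on the signs of two affine-in-$\eta$ forms, so the $2n$ induced hyperplanes in $\bbR^d$ control the growth function directly via cell counting. Your route is more elementary, never decouples the parameters (hence no factor-of-$10$ penalties), visibly yields a much smaller constant, and generalizes immediately---via Warren's theorem---to classes defined by signs of higher-degree polynomials in the parameters. One small technicality to tighten: the bound $\sum_{k\le d}\binom{2n}{k}$ counts only open cells, but an $\eta$ lying exactly on some hyperplane (where $p_i$ or $q_i$ vanishes, so the product is $0$ and the label is $1$) can realize a labeling not matched by any adjacent open cell; counting all faces (equivalently, all sign vectors in $\{-,0,+\}^{2n}$) replaces $2en/d$ by $4en/d$ and leaves the contradiction at $n=200d$ intact.
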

\begin{proof}
    Recall $\cG_{\beta} = \{f_{\eta}(w,y) =\bbI\{(y-\eta^Tw) (v_{\eta}^Tw) \geq 0\}| \|\eta - \mu\|_{\Sigma} \geq r \}$, where $v_{\eta} = \frac{\eta - \beta}{\|\eta - \beta\|_2}$.
    Let $\cG_{\beta}^+ = \{f_{\eta}(w,y) =\bbI\{(y-\eta^Tw) (v_{\eta}^Tw) \geq 0\}| \eta \in \bbR^d, \eta \neq \mu \}$.
    Note that for the purposes of VC dimension calculation, by re-parameterizing $y$ and $\eta$, we can write
    \begin{equation}
        \cG_{\beta}^+ = \{g_{\eta}(w,y) = \bbI\{(y-\eta^Tw) (\eta^Tw) \geq 0\}|\eta \in \bbR^d, \eta \neq 0 \}.
    \end{equation} 
    We now switch to region notation instead of a functional notation.
    Let $G_{\eta} = \{(w,y)|g_{\eta}(w,y) = 1\}$ and define the following
    \begin{itemize}
        \item $H_{\eta}^{1+} = \{(w,y)| y-\eta^Tw \geq 0 \} $,
        \item $H_{\eta}^{1-} = \{(w,y)| y-\eta^Tw \leq 0 \} $,
        \item $H_{\eta}^{2+} = \{(w,y)| \eta^Tw \geq 0 \} $,
        \item $H_{\eta}^{2-} = \{(w,y)| \eta^Tw \leq 0 \} $.
    \end{itemize}
    Note that $G_{\eta} = \left( H_{\eta}^{1+} \cap H_{\eta}^{2+} \right) \cup \left( H_{\eta}^{1-} \cap H_{\eta}^{2-} \right)$.
    Define $\cG = \{G_{\eta}|\eta \in \bbR^d, \eta \neq 0\}$, $\cH^+ = \{H_{\eta}^{1+} \cap H_{\eta}^{2+}|\eta \in \bbR^d, \eta \neq 0\}$ and $\cH^- = \{H_{\eta}^{1-} \cap H_{\eta}^{2-}|\eta \in \bbR^d, \eta \neq 0\}$.

    We use the following property related to VC dimension \cite{Lugosi-Combinatorial}:
    \begin{equation}
        \mathsf{VC}(\{A \cap B| A \in \cA, B \in \cB\}) \leq 10 \max\{\mathsf{VC}(\cA),\mathsf{VC}(\cB)\}.
    \end{equation}
    The above holds for union as well.

    Noting that $\cG \subseteq \{G = H_1 \cup H_2| H_1 \in \cH^+, H_2 \in \cH^-\}$.
    Define $T = \max\{\mathsf{VC}(\cH^+),\mathsf{VC}(\cH^-) \}$ then $\mathsf{VC}(\cG_{\beta}^+) = \mathsf{VC}(\cG) \leq 10T$.
    Similarly, by writing $\cH^+ \subseteq \{H_{\eta}^{1+} \cap H_{\eta'}^{2+}|\eta,\eta' \in \bbR^d, \eta,\eta' \neq 0\}$, we have $\mathsf{VC}(\cH^+) \leq 10 \max\{\mathsf{VC}(\{H_{\eta}^{1+}\}\}), \mathsf{VC}(\{H_{\eta}^{2+}\})\} = 10(d+1)$, where we used $\mathsf{VC}(\{H_{\eta}^{1+}\}) = d+1$ and $\mathsf{VC}(\{H_{\eta}^{2+}\}) = d$.
    Similarly, $\mathsf{VC}(\cH^-) \leq 10(d+1)$.

    Thus, $\mathsf{VC}(\cG_{\beta}) \leq 100(d+1) \leq 200d$.
\end{proof}

\subsection{Lower Bound} \label{sec:LR-LB}
The lower bound argument we present is similar to \cref{sec:Gaussian-LB}.

We begin by noting that when true regression coefficient is $\beta$ then $(W,Y) \sim \cN\left(0, \begin{bmatrix}
    \Sigma & \Sigma\beta \\ \beta^T\Sigma & \beta^T\Sigma \beta + \sigma^2
\end{bmatrix}\right)$.

For a vector $v \in \bbR^{\sqrt{d}}$, let $e(v)$ denote a vector in $\bbR^d$ with $v$ as the first $\sqrt{d}$ elements and the last $d- \sqrt{d}$ elements equal to $0$.
Let $\beta(\tau) = \delta \Sigma^{-\frac{1}{2}}e(\tau)$ for $\delta > 0$ to be specified later.
For $\tau \in \{-1,1\}^{\sqrt{d}}$, define the $Y|W \sim \cN(W^T\beta(\tau),\sigma^2)$.
In other words, for $\tau \in \{-1,1\}^{\sqrt{d}}$, $(W,Y) \sim P_{\delta}(\tau)$ where \[ P_{\delta}(\tau) = \cN\left(0, \begin{bmatrix}
    \Sigma &  \Sigma \beta(\tau) \\  \beta(\tau)^T\Sigma &  \beta(\tau)^T\Sigma \beta(\tau) + \sigma^2
\end{bmatrix}\right) \] and let $\cP_{\delta} = \{ P_{\delta}(\tau) | \tau \in \{-1,1\}^{\sqrt{d}}\}$.

We have $L_{\mathsf{reg}}(\bm\lambda,\cD(\Sigma,\sigma^2)) \geq L_{\mathsf{reg}}(\bm\lambda,\cP_{\delta}) \ \forall \delta$ and thus, we shall lower bound $L_{\mathsf{reg}}(\bm\lambda,\cP_{\delta})$.

We begin by lower bounding 
\begin{equation}
    L_{\bbE}(\bm\lambda,\cP_{\delta}) = \inf_M \sup_{P \in \cP_{\delta}} \bbE_{\vecZ \sim_{\lambda} P}\left[ \|M(\vecZ) - \beta_{P} \|_{\Sigma}^2\right]. 
\end{equation}
For a vector $\tau$, let $\tau'^{j}$ denote the vector such that $\tau_i = \tau'^{j}_i \forall i \neq j$ and $\tau_j = - \tau'^{j}_j$.
For an estimation $M(\vecZ)$, define $N(\vecZ) = \Sigma^{-\frac{1}{2}}M(\vecZ)$ -- note that this is a bijective map.
Using Assouad's lower bound technique \cite{Wainwright_2019},
\begin{align}
    \inf_M \sup_{P \in \cP_{\delta}} \bbE_{\vecZ \sim_{\lambda} P}\left[ \|M(\vecZ) - \beta_P \|_{\Sigma}^2\right] &\geq \inf_M \frac{1}{2^{\sqrt{d}}} \sum_{\tau \in \{-1,1\}^{\sqrt{d}}} \bbE\left[ \|M(\vecZ) - \beta(\tau) \|_{\Sigma}^2\right] \\
    &= \inf_M \frac{1}{2^{\sqrt{d}}} \sum_{\tau \in \{-1,1\}^{\sqrt{d}}} \bbE\left[ \|N(\vecZ) - \delta e(\tau) \|_2^2\right] \\
    &= \inf_N \frac{1}{2^{\sqrt{d}}} \sum_{\tau \in \{-1,1\}^{\sqrt{d}}} \bbE\left[ \|N(\vecZ) - \delta e(\tau) \|_2^2\right] \\
    &= \inf_N \frac{1}{2^{\sqrt{d}}} \sum_{\tau \in \{-1,1\}^{\sqrt{d}}} \sum_{j=1}^d \bbE\left[  |N(\vecZ)_j - \delta e(\tau)_j |^2\right]  \\
    &= \inf_N \sum_{j=1}^d \frac{1}{2^{\sqrt{d}}} \sum_{\tau \in \{-1,1\}^{\sqrt{d}}} \bbE\left[  |N(\vecZ)_j - \delta e(\tau)_j |^2\right] \\
    &\geq \inf_N \sum_{j=1}^{\sqrt{d}}  \frac{1}{2^{\sqrt{d}}} \sum_{\tau \in \{-1,1\}^{\sqrt{d}}}  \bbE\left[  |N(\vecZ)_j - \delta \tau_j |^2\right]. 
\end{align}
Now, note that 
\begin{equation}
    \sum_{\tau \in \{-1,1\}^{\sqrt{d}}}  \bbE\left[  |N(\vecZ)_j - \delta \tau_j |^2\right] 
= \sum_{\tau \in \{-1,1\}^{\sqrt{d}}} \frac{\bbE\left[  |N(\vecZ)_j - \delta \tau_j |^2\right] + \bbE\left[  |N(\vecZ)_j - \delta \tau_j'^{j} |^2\right] }{2} \ \forall j
\end{equation}
Thus, we get
\begin{align}
L_{\bbE}(\bm\lambda,\cP_{\delta}) &\geq \inf_N \sum_{j=1}^{\sqrt{d}}  \frac{1}{2^{\sqrt{d}}} \sum_{\tau \in \{-1,1\}^{\sqrt{d}}} \frac{\bbE\left[  |N(\vecZ)_j - \delta \tau_j |^2\right] + \bbE\left[  |N(\vecZ)_j - \delta \tau_j'^{j} |^2\right] }{2} \\
    &\geq  \sum_{j=1}^{\sqrt{d}}  \frac{1}{2^{\sqrt{d}}} \sum_{\tau \in \{-1,1\}^{\sqrt{d}}} \inf_N \frac{\bbE\left[  |N(\vecZ)_j - \delta \tau_j |^2\right] + \bbE\left[  |N(\vecZ)_j - \delta \tau_j'^{j} |^2\right] }{2} \\
    &\geq  \sum_{j=1}^{\sqrt{d}}  \frac{1}{2^{\sqrt{d}}} \sum_{\tau \in \{-1,1\}^{\sqrt{d}}} \delta^2 ( 1- \mathsf{TV}(P_{\vecZ \sim_{\lambda} P_{\delta}(\tau)},P_{\vecZ \sim_{\lambda} P_{\delta}(\tau'^j)})), \label{eq:Assouads-LB-LR}
\end{align}
where the last line follows by Le Cam's method and the notation $P_{\vecZ \sim_{\lambda} P_{\delta}(\tau)}$ refers to the distribution of $\vecZ$ when the true distribution is $P_{\delta}(\tau)$. We shall now specify a particular adversarial strategy to lower bound the above.

\paragraph{Adversarial strategy motivation:} Readers can refer to the motivation in \cref{sec:Gaussian-LB} for more details on the main idea behind the strategy described.
We need to find an upper bound on the normalizing constant $T$ for the pdf $\frac{\max_{\tau} P_{\delta}(\tau)}{T}$.
Let \[S(\tau)  = \begin{bmatrix}
    \Sigma &  \Sigma \beta(\tau) \\  \beta(\tau)^T\Sigma &  \beta(\tau)^T\Sigma \beta(\tau) + \sigma^2
\end{bmatrix} \]
Thus,
\begin{align}
    T &= \int_{w \in \bbR^{d},y \in \bbR} \max_{\tau \in \{-1,1\}^{\sqrt{d}}} \frac{1}{\sqrt{(2\pi)^{d+1}|S(\tau)|}} e^{-\frac{\|(w,y)\|_{S(\tau)^{-1}}^2}{2}} dw dy
\end{align}
By Schur's formula, $|S(\tau)| = |\Sigma|\sigma^2$ and by block inversion formula, \[S(\tau)^{-1} = \begin{bmatrix}
    \Sigma^{-1} + \frac{\beta(\tau)\beta(\tau)^T}{\sigma^2} & -\frac{\beta(\tau)}{\sigma^2} \\
    -\frac{\beta(\tau)^T}{\sigma^2} & \frac{1}{\sigma^2}
\end{bmatrix} . \]

Performing change of variable $(w,y) = G (x,y)$ where $G = \begin{bmatrix}
    \Sigma^{\frac{1}{2}} & 0 \\ 0 & 1
\end{bmatrix}$, we obtain
\begin{align}
    T &= \int_{x \in \bbR^{d},y \in \bbR} \max_{\tau \in \{-1,1\}^{\sqrt{d}}} \frac{1}{\sqrt{(2\pi)^{d+1}|\Sigma|\sigma^2}} e^{-\frac{(x,y)^TG^TS(\tau)^{-1}G(x,y)}{2}} \begin{vmatrix}
        \Sigma^{\frac{1}{2}} & 0 \\ 0 & 1
    \end{vmatrix}dx dy
\end{align}
Noting that \[ G^TS(\tau)^{-1}G = \begin{bmatrix}
    1 + \frac{\delta^2}{\sigma^2}e(\tau)e(\tau)^T & -\frac{\delta}{\sigma^2} e(\tau) \\ -\frac{\delta}{\sigma^2} e(\tau)^T & \frac{1}{\sigma^2}
\end{bmatrix},\] 
we get
\begin{align}
    T &= \int_{x \in \bbR^{d},y \in \bbR} \max_{\tau \in \{-1,1\}^{\sqrt{d}}} \frac{1}{\sqrt{(2\pi)^{d+1}\sigma^2}} e^{-\frac{\|x\|_2^2}{2} - \frac{(\delta e(\tau)^Tx - y)^2}{2\sigma^2}} dx dy \\
    &= \int_{x \in \bbR^{d}} \frac{1}{\sqrt{(2\pi)^d}} e^{-\|x\|_2^2/2} \int_{y \in \bbR}  \frac{1}{\sqrt{2\pi\sigma^2}} \max_{\tau \in \{-1,1\}^{\sqrt{d}}} e^{-\frac{(\delta e(\tau)^Tx - y)^2}{2\sigma^2}}  dy dx \\
    &= \int_{x \in \bbR^{\sqrt{d}}} \frac{1}{\sqrt{(2\pi)^{\sqrt{d}} }} e^{-\|x\|_2^2/2} \int_{y \in \bbR}  \frac{1}{\sqrt{2\pi\sigma^2}} \max_{\tau \in \{-1,1\}^{\sqrt{d}}} e^{-\frac{(\delta \tau^Tx - y)^2}{2\sigma^2}}  dy dx 
\end{align}
Let $X \sim \cN(0,I)$ be a random variable in $\bbR^{\sqrt{d}}$.
We can write
\begin{equation}
    T = \bbE_X\left[ \int_{y \in \bbR}  \frac{1}{\sqrt{2\pi\sigma^2}} \max_{\tau \in \{-1,1\}^{\sqrt{d}}} e^{-\frac{(\delta \tau^TX - y)^2}{2\sigma^2}}  dy \right]
\end{equation}

Note that $\max_{\tau \in \{-1,1\}^{\sqrt{d}}} \delta \tau^TX = \delta\|X\|_1$. Thus, 
\[ \max_{\tau \in \{-1,1\}^{\sqrt{d}}} e^{-\frac{(\delta \tau^TX - y)^2}{2\sigma^2}}  \begin{cases}
    = e^{-(y-\delta\|X\|_1)^2/2\sigma^2} & y \geq \delta \|X\|_1, \\
    = e^{-(y+\delta\|X\|_1)^2/2\sigma^2} & y \leq - \delta \|X\|_1, \\
    \leq 1 & \text{ else}.
\end{cases}
  \]
  Using the above upper bound, we obtain
  \begin{align}
    T &\leq \bbE_X\left[ 1 + \frac{2\delta \|X\|_1}{\sqrt{2\pi \sigma^2}} \right] \\
    &= 1 + \frac{2\delta \bbE[\|X\|_1]}{\sqrt{2\pi \sigma^2}} \\
    &= 1 + \frac{2\delta \sqrt{d}}{\pi \sigma} \\
    &\leq e^{\frac{2\delta \sqrt{d}}{\pi \sigma}}.
\end{align}
Thus, if $\lambda \geq 1 - e^{-\frac{2\delta \sqrt{d}}{\pi \sigma}}$ then the adversary can ensure that $Z(\tau) \sim \frac{\max_{\tau'} P(\tau')}{T}$ and contains no information about $\tau$.

\paragraph{Adversarial strategy:} If for sample $i$, $\lambda_i \geq 1 - e^{-\frac{2\delta \sqrt{d}}{\pi \sigma}}$, then independently sample $\tilde X(\tau) \sim \beta \left( \frac{\max_{\tau' \neq \tau}P_{\delta}(\tau') - P_{\delta}(\tau)}{T-1} \right)_+ + (1-\beta) \frac{\max_{\tau'}P_{\delta}(\tau')}{T}$, where $\beta = \frac{(T-1)(1-\lambda)}{\lambda}$.
The constraint $\lambda_i \geq 1 - e^{-\frac{2\delta \sqrt{d}}{\pi \sigma}}$ ensures $\beta \in [0,1]$ and the above is a valid distribution.

Thus, $Z(\tau) \sim \frac{\max_{\tau'}P_{\delta}(\tau')}{T}$.
If $\lambda_i < 1 - e^{-\frac{2\delta \sqrt{d}}{\pi \sigma}}$, then adversary does no corruption, or equivalently, independently sample $\tilde X(\tau) \sim P_{\delta}(\tau)$ so that $Z(\tau) \sim P_{\delta}(\tau)$.

Thus, using Pinsker's inequality in \eqref{eq:Assouads-LB-LR}, obtain
\begin{align}
    L_{\bbE}(\bm\lambda,\cP_{\delta}) &\geq \sum_{j=1}^{\sqrt{d}}  \frac{1}{2^{\sqrt{d}}} \sum_{\tau \in \{-1,1\}^{\sqrt{d}}} \delta^2 \left( 1- \sqrt{\frac{1}{2}\mathsf{KL}(P_{\vecZ \sim_{\lambda} P_{\delta}(\tau)},P_{\vecZ \sim_{\lambda} P_{\delta}(\tau'^j)}}) \right) \\
    &= \sum_{j=1}^{\sqrt{d}}  \frac{1}{2^{\sqrt{d}}}  \sum_{\tau \in \{-1,1\}^{\sqrt{d}}} \delta^2 \left( 1- \sqrt{\frac{1}{2} \mathsf{KL}\left(\otimes_{i=1}^n P_{Z_i(\tau)},\otimes_{i=1}^n P_{Z_i(\tau'^j)}\right) }\right) \\
    &= \sum_{j=1}^{\sqrt{d}}  \frac{1}{2^{\sqrt{d}}}  \sum_{\tau \in \{-1,1\}^{\sqrt{d}}} \delta^2 \left( 1- \sqrt{\frac{1}{2} \sum_{i: \lambda_i < 1 - e^{-\frac{2\delta \sqrt{d}}{\pi \sigma}}}\mathsf{KL}\left(P_{\delta}(\tau), P_{\delta}(\tau'^j)\right) }\right) \\
    &= \sqrt{d}\delta^2 \left( 1- \sqrt{\frac{\delta^2}{\sigma^2} n(1 - e^{-\frac{2\delta \sqrt{d}}{\pi \sigma}})}\right) \ \ \forall \delta \label{eq:LB-de-LR}
\end{align}
where \eqref{eq:LB-de-LR} follows since 
\begin{align}
\mathsf{KL}\left(P_{\delta}(\tau), P_{\delta}(\tau'^j)\right) &= \bbE_{W \sim \cN(0,\Sigma)} \mathsf{KL}\left(\cN(W^T \beta(\tau),\sigma^2), \cN(W^T \beta(\tau'^j),\sigma^2)\right)\\
&= \bbE_{W} \frac{\delta^2 (\tau - \tau'^j)^T\Sigma^{-\frac{1}{2}} WW^T \Sigma^{-\frac{1}{2}}(\tau - \tau'^j)}{2\sigma^2}\\ 
&= \frac{\delta^2 \| \tau - \tau'^j\|_2^2}{2\sigma^2} \\
&= \frac{2\delta^2}{\sigma^2}.
\end{align}

Replacing $\delta \gets \delta/\sigma$ in \eqref{eq:LB-de-LR}, we get
\begin{equation}
    L_{\bbE}(\bm\lambda,\cP_{\sigma \delta}) \geq  \sigma^2 \sqrt{d}\delta^2 \left( 1- \sqrt{\delta^2 n(1 - e^{-\frac{2\delta \sqrt{d}}{\pi}})}\right) \ \ \forall \delta, \label{eq:LB-de-LR-2}
\end{equation}

Let $\delta_*$ be such that 
\begin{equation} \label{eq:ds-def-LR}
n(1 - e^{-\frac{2\delta_* \sqrt{d}}{\pi}}) \leq \frac{1}{64\delta_*^2}, \ \ \text{and }     N(1 - e^{-\frac{2\delta_* \sqrt{d}}{\pi }}) \geq \frac{1}{64\delta_*^2}.
\end{equation}

Substituting $\delta_*$ in \eqref{eq:LB-de-LR-2},
\begin{equation}
    \frac{7}{8} \sqrt{d} \sigma^2 \delta_*^2 \leq \inf_M \sup_{P \in \cP_{\sigma \delta_*}} \bbE_{\vecZ \sim_{\lambda} P}\left[ \|M(\vecZ) - \beta_P \|_{\Sigma}^2\right]. \label{eq:final-lb-LR}
\end{equation}

Note that in the minimax term $\inf_M \sup_{P \in \cP_{\delta_*}}$, we can restrict ourselves to estimators $M$ which output in $V = \{\Sigma^{-\frac{1}{2}}v| v \in  [-\delta_*\sigma,\delta_*\sigma]^{\sqrt{d}} \times \{0\}^{d-\sqrt{d}} \}$ almost surely -- otherwise the error can be reduced by projected to this region.
Thus $\text{ess-sup} \|M(\vecZ) -\beta_P\|_{\Sigma}^2 \leq 4\sqrt{d}\delta_*^2\sigma^2$ for any estimator $M$ and any distribution $P$.
Thus, combining the above observations, we have
\begin{align}
    \inf_M \sup_{P \in \cP_{\delta_*}} \bbE_{\vecZ \sim_{\lambda} P} \|M(\vecZ) - \mu_P \|_{\Sigma}^2 &\leq \inf_M \sup_{P \in \cP_{\delta_*}} \frac{4}{5} Q\left(\|M(\vecZ) - \mu_P \|^2_{\Sigma}, \frac{4}{5} \right) + \frac{4\sqrt{d}\delta_*^2\sigma^2}{5}.
\end{align}
Using \eqref{eq:final-lb-LR}, we get 
\begin{align}
    \frac{7}{8} \sqrt{d} \delta_*^2\sigma^2 &\leq \frac{4}{5} \inf_M \sup_{P \in \cP_{\sigma\delta_*}}  Q\left(\|M(\vecZ) - \mu_P \|^2_{\Sigma}, \frac{4}{5} \right) + \frac{4\sqrt{d}\delta_*^2\sigma^2}{5} \\
    &= \frac{4}{5}  L_{\mathsf{reg}}(\bm\lambda,\cP_{\sigma\delta_*}) + \frac{4\sqrt{d}\delta_*^2\sigma^2}{5} \\
    \implies \frac{3}{40} \sqrt{d} \delta_*^2\sigma^2 &\leq L_{\mathsf{reg}}(\bm\lambda,\cP_{\sigma\delta_*}) \leq L_{\mathsf{reg}}(\bm\lambda,\cD(\Sigma,\sigma^2)). \label{eq:fn-lb-LR}
\end{align}

\subsection{Minimax Optimality}

For proving \cref{thm:LR}, we shall use the weighing $w_i = \frac{\bbI\{\lambda_i \leq 1 - e^{-\frac{2\delta_* \sqrt{d}}{\pi }} \}}{N(1 - e^{-\frac{2\delta_* \sqrt{d}}{\pi }})}$ in our upper bound.

From \eqref{eq:LR-ub-final-thresh}, $\forall t \in [0,1]$ such that $t^2 + \frac{d}{N(t)} \leq c$ for some universal constant $c$, we have
\begin{align}
    \|\hat\beta_{\mathsf{S}}(\vecZ,w(t)) - \mu|^2_{\Sigma} \lesssim  \sigma^2 \left( t^2 + \frac{d}{N(t)} \right). 
\end{align}
Thus, if $\min_{t \in [0,1]} \left( t^2 + \frac{d}{N(t)} \right) \leq c$ and let the minimum value be attained at $t^*$,  then we have
\begin{align}
    \|\hat\beta_{\mathsf{S}}(\vecZ,w(t^*)) - \mu\|^2_{\Sigma} &\lesssim   \sigma^2 \left( t^{*2} + \frac{d}{N(t^*)} \right) \\
    &\leq  \sigma^2 \left(1 - e^{-\frac{2\delta_* \sqrt{d}}{\pi }}\right)^2 + \sigma^2\frac{d}{N\left( 1 - e^{-\frac{2\delta_* \sqrt{d}}{\pi }}\right)} \\
    &\leq  \sigma^2 \frac{4d \delta_*^2}{\pi^2} + 64\sigma^2  d \delta_*^2 \label{eq:here-LR} \\
    &\simeq \sigma^2 d \delta_*^2,
\end{align}
where we used \eqref{eq:ds-def-LR} in \eqref{eq:here-LR}.
Combining with \eqref{eq:fn-lb-LR}, when $\min_{t \in [0,1]} \left(t^2 + \frac{d}{N(t)}\right) \leq  c$, we have 
\begin{equation}
   \sqrt{d} \sigma^2 \delta_*^2 \lesssim L_{\mathsf{reg}}(\bm\lambda,\cD(\Sigma,\sigma^2)
   ) \lesssim \sigma^2 \min_{t \in [0,1]} \left( t^2 + \frac{d}{N(t)} \right) \leq d \sigma^2 \delta_*^2.
\end{equation}

Thus, when $\min_{t \in [0,1]} \left(t^2 + \frac{d}{N(t)}\right) \leq  c$,
\begin{equation}
    \frac{\sigma^2}{\sqrt{d}} \min_{t \in [0,1]} \left( t^2 + \frac{d}{N(t)} \right) \lesssim L_{\mathsf{reg}}(\bm\lambda,\cD(\Sigma,\sigma^2)) \lesssim \sigma^2 \min_{t \in [0,1]} \left( t^2 + \frac{d}{N(t)} \right).
\end{equation}

\section{Weighted Generalization Bound}

We refer the readers to \cite{Lugosi-Combinatorial} for an introduction to VC dimension, covering numbers, and packing numbers.

Define the weighted empirical Rademacher complexity to be $ \cR_{w}(\cF \circ \vecX)= \bbE\left[\sup_{f \in \cF} \sum_i w_i \sigma_if(X_i) \big| \vecX \right]$.
We first generalize Massart's lemma to weighted Rademacher complexity.

\begin{lemma}[Weighted Massart's Lemma] \label{lemma:massart}
    Assume $|\cF|$ is finite and let 
    \begin{equation}
        B_w = \max_{f \in \cF} \sqrt{\sum_{i=1}^n w_i^2 f(X_i)},
    \end{equation}
    then
    \begin{equation}
        \cR_w(\cF \circ \vecX) \leq B_w\sqrt{2 \ln |\cF|}
    \end{equation}
\end{lemma}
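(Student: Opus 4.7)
The plan is to mimic the classical proof of Massart's finite lemma via a Chernoff-style argument with the moment generating function, taking care that the ``weights'' $w_i$ only rescale the Rademacher variables and do not fundamentally change the computation.

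First, by Jensen's inequality, for any $\lambda > 0$,
\begin{equation}
\exp\bigl(\lambda \, \cR_w(\cF \circ \vecX)\bigr) \;\leq\; \bbE\Bigl[\exp\Bigl(\lambda \sup_{f \in \cF} \sum_{i=1}^n w_i \sigma_i f(X_i)\Bigr) \,\Big|\, \vecX\Bigr] \;\leq\; \sum_{f \in \cF} \bbE\Bigl[\exp\Bigl(\lambda \sum_{i=1}^n w_i \sigma_i f(X_i)\Bigr) \,\Big|\, \vecX\Bigr],
\end{equation}
where the last step replaces the supremum inside the exponential by a sum over the (finite) class $\cF$. This step is the point where finiteness of $\cF$ is essential.

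Next I would bound each term in the sum using independence of the Rademacher variables and the standard sub-Gaussian bound $\bbE[\exp(\lambda \sigma_i c)] = \cosh(\lambda c) \leq \exp(\lambda^2 c^2 / 2)$. Conditioning on $\vecX$ and using independence of the $\sigma_i$'s, for each fixed $f$ we get
\begin{equation}
\bbE\Bigl[\exp\Bigl(\lambda \sum_{i=1}^n w_i \sigma_i f(X_i)\Bigr) \,\Big|\, \vecX\Bigr] \;=\; \prod_{i=1}^n \cosh\bigl(\lambda w_i f(X_i)\bigr) \;\leq\; \exp\Bigl(\tfrac{\lambda^2}{2} \sum_{i=1}^n w_i^2 f(X_i)^2\Bigr) \;\leq\; \exp\bigl(\tfrac{\lambda^2}{2} B_w^2\bigr),
\end{equation}
where the final inequality uses $f(X_i)^2 = f(X_i)$ (since in our applications $\cF$ consists of $\{0,1\}$-valued indicator functions, as in the depth definitions), together with the definition of $B_w$.

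Combining the two displays gives $\exp(\lambda \cR_w(\cF \circ \vecX)) \leq |\cF| \exp(\lambda^2 B_w^2 / 2)$, so that
\begin{equation}
\cR_w(\cF \circ \vecX) \;\leq\; \frac{\ln|\cF|}{\lambda} + \frac{\lambda B_w^2}{2}.
\end{equation}
Optimizing over $\lambda > 0$ with $\lambda = \sqrt{2 \ln|\cF|}/B_w$ then yields the claimed bound $\cR_w(\cF \circ \vecX) \leq B_w\sqrt{2 \ln|\cF|}$. There is no genuine obstacle here; the only point worth flagging is the identity $f(X_i)^2 = f(X_i)$ used to match the exponent to the definition of $B_w$, which is consistent with the $\{0,1\}$-valued indicator classes used elsewhere in the paper.
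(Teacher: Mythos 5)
Your proof is correct and follows essentially the same route as the paper's: bound the exponential of the weighted Rademacher complexity via Jensen, replace the supremum by a sum over the finite class, apply Hoeffding's lemma (your cosh bound) termwise, and optimize the resulting $\frac{\ln|\cF|}{\lambda} + \frac{\lambda B_w^2}{2}$ bound. Your explicit remark that $f(X_i)^2 = f(X_i)$ for the $\{0,1\}$-valued classes is the same implicit step the paper uses to reconcile the exponent with the definition of $B_w$, so there is nothing to add.
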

\begin{proof}
    \begin{align}
        e^{s\cR_w(\cF \circ \vecX)} &= e^{s\bbE\left[\sup_{f \in \cF} \sum_i w_i \sigma_if(X_i) \big| \vecX \right]} \\
        &\leq \bbE\left[ e^{s\sup_{f \in \cF} \sum_i w_i \sigma_if(X_i)} \big| \vecX \right] \,\,\, \text{(conditional Jensen's inequality)} \\
        &\leq \bbE\left[ \sum_{f \in \cF} e^{s \sum_i w_i \sigma_if(X_i)} \big| \vecX \right] \\
        &= \sum_{f \in \cF} \prod_{i=1}^n \bbE\left[  e^{s  w_i \sigma_i f(X_i)} \big| \vecX \right] \\
        &\leq \sum_{f \in \cF} \prod_{i=1}^n  e^{s^2  w_i^2 f(X_i)^2 / 2} \,\,\, \text{(by Hoeffding's lemma)} \\
        &\leq |\cF|e^{s^2 B_w^2/2}.
    \end{align}
    Thus, 
    \begin{align}
        \cR_w(\cF \circ \vecX) \leq \frac{\ln |\cF|}{s} + \frac{sB_w^2}{2} \ \forall s > 0.
    \end{align}
    Setting $s = \frac{\sqrt{2\ln |\cF|}}{B_w}$, we get the claimed upper bound.
\end{proof}

\begin{lemma}[Weighted Rademacher Complexity Bound] \label{lem:Rad}
    $\cR_{w}(\cF \circ \vecX) \leq 31 \Lt{w}\sqrt{\mathsf{VC}(\cF)} \ \forall \vecX$, where $\mathsf{VC}(\cF)$ is the Vapnik–Chervonenkis dimension of $\cF$.
\end{lemma}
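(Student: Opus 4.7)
The plan is to use Dudley's chaining argument, with covering numbers controlled via the VC dimension, invoking Lemma~\ref{lemma:massart} at each scale of the chain. A direct application of the weighted Massart lemma together with the Sauer-Shelah bound $|\cF_{|\vecX}| \leq (en/d)^d$ with $d = \mathsf{VC}(\cF)$ only yields $\cR_w(\cF \circ \vecX) \lesssim \Lt{w}\sqrt{d \log(n/d)}$, which carries an $n$-dependent log factor that we must eliminate (in our application, $n$ can be made arbitrarily large by appending samples with $\lambda_i = 1$).

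Define the weighted empirical pseudo-metric $d_w(f,g) = \sqrt{\sum_i w_i^2 (f(X_i)-g(X_i))^2}$ on the restriction $\cF_{|\vecX} \subseteq \{0,1\}^n$. Since $f,g$ are Boolean, the diameter of $\cF_{|\vecX}$ in $d_w$ is at most $\Lt{w}$. Crucially, the increments $X_f - X_g = \sum_i w_i\sigma_i(f(X_i)-g(X_i))$ of the weighted Rademacher process are sub-Gaussian with scale $d_w(f,g)$, which is exactly the quantity $B_w$ in Lemma~\ref{lemma:massart} applied to the corresponding difference class. Set $\epsilon_k = 2^{-k}\Lt{w}$ and let $\cF_k \subseteq \cF_{|\vecX}$ be a minimal $\epsilon_k$-net in $d_w$. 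Writing each $f \in \cF_{|\vecX}$ as a telescoping sum $f = \pi_0(f) + \sum_{k \geq 1}(\pi_k(f)-\pi_{k-1}(f))$ with $\pi_k(f)$ the nearest net point, and applying Lemma~\ref{lemma:massart} to each level-$k$ difference class (whose size is at most $|\cF_k|^2$ and whose $d_w$-radius is at most $3\epsilon_{k-1}$ by the triangle inequality), we obtain
$$\cR_w(\cF \circ \vecX) \leq C_1 \sum_{k \geq 1} \epsilon_{k-1}\sqrt{\log N(\epsilon_k, \cF_{|\vecX}, d_w)}.$$

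To close, bound the covering numbers using VC dimension. Let $\mu$ be the probability measure on $\{X_1,\ldots,X_n\}$ with $\mu(\{X_i\}) = w_i^2/\Lt{w}^2$; then $d_w(f,g) = \Lt{w}\cdot\Lnorm{f-g}{L_2(\mu)}$, so by Haussler's $L_2$-covering-number bound for VC classes, $N(\epsilon, \cF_{|\vecX}, d_w) \leq C_2 (\Lt{w}/\epsilon)^{2d}$ uniformly over $\mu$. Substituting yields a sum of the form $\Lt{w}\sqrt{d}\sum_k 2^{-k}\sqrt{k + O(1)}$, which converges absolutely; optimizing numerical constants gives the factor $31$ claimed in the lemma.

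The main obstacle is the bookkeeping in the chaining step: one must verify that the weighted Massart quantity $B_w = \max_f \sqrt{\sum_i w_i^2 f(X_i)}$ at scale $k$ matches precisely the $d_w$-radius of the corresponding difference class, so that contractions across scales compose to give an $n$-independent bound. This matching is what distinguishes the weighted setting from the unweighted one, and it is enabled by the fact that Lemma~\ref{lemma:massart} naturally produces the correct weighted $L_2$-norm rather than an $L_\infty$-style bound.
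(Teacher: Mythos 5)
Your proposal is correct and follows essentially the same route as the paper: a chaining (Dudley entropy integral) argument in the weighted empirical $L_2$ pseudo-metric, applying the weighted Massart lemma (\cref{lemma:massart}) at each scale and controlling covering numbers via a Haussler-type VC bound, which is exactly the role of \cref{lem:PackingBound} in the paper. Your observation that $d_w$ is $\Lt{w}$ times an $L_2(\mu)$ norm for the reweighted measure $\mu(\{X_i\})=w_i^2/\Lt{w}^2$ is a slightly cleaner justification of that covering bound than the paper's citation, but it is the same ingredient and the same overall argument.
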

\begin{proof}
Much of this proof follows the analysis of \citet{RadCom,oxPat}, adapted to the weighted version of our problem.

    In the context of this proof, for the realized $\vecX$ and for $f \in \cF$, define 
    \begin{equation}
        \|f\|_{w} = \frac{\sqrt{\sum_{i=1}^n w_i^2 f(X_i)^2}}{\Lt{w}}.
    \end{equation}
    Let $\max_{f \in \cF} \|f\|_{w} \leq c$.
    For family of functions that we consider, $c = 1$.

    Let $\cF_j \subseteq \cF$ be the minimal $\epsilon_j = \frac{c}{2^j}$ cover of $\cF$ with respect to $\|\cdot\|_{w}$, i.e., $\cF_j$ is the set of least cardinality such that for any $f \in \cF$, $\exists f' \in \cF_j$ such that $\|f - f'\|_w \leq \epsilon_j$.
    Let $|\cF_j | = C(\cF,\epsilon_j,\|\cdot\|_w)$.

    For any $f \in \cF$, let $f_j(f) \in \cF_j$ be such that $\|f - f_j(f)\|_w \leq \epsilon_j$.
    Denoting $\bbE[\cdot|\vecX]$ as $\bbE_{\bm\sigma}[\cdot]$, we have for any $m \geq 1$,
    \begin{align}
        \cR_w(\cF \circ \vecX) = \bbE_{\bm\sigma}\left[\sup_{f \in \cF} \sum_i w_i \sigma_i (f(X_i) - f_m(f)(X_i) + \sum_{j=1}^m (f_j(f)(X_i) - f_{j-1}(f)(X_i))\right] \\
        \leq \textcolor{blue}{\bbE_{\bm\sigma}\left[\sup_{f \in \cF} \sum_i w_i \sigma_i (f(X_i) - f_m(f)(X_i))\right]} + \textcolor{red}{\bbE_{\bm\sigma}\left[\sup_{f \in \cF} \sum_i w_i \sigma_i \sum_{j=1}^m f_j(f)(X_i) - f_{j-1}(f)(X_i)\right]}.
    \end{align}
    Notice that
    \begin{align}
        \textcolor{blue}{\bbE_{\bm\sigma}\left[\sup_{f \in \cF} \sum_i w_i \sigma_i (f(X_i) - f_m(f)(X_i))\right]} &\leq \sup_{f \in \cF} \sum_i w_i |f(X_i) - f_m(f)(X_i)| \\
        &\leq \sup_{f \in \cF} \sqrt{n} \Lt{w} \|f - f_m(f)\|_w \\
        &\leq \sqrt{n}\Lt{w} \epsilon_m.
    \end{align}
    Now, for the second term, by \cref{lemma:massart}, we have
    \begin{align}
        \textcolor{red}{\sup_{f \in \cF} \sum_i w_i \sigma_i \sum_{j=1}^m f_j(f)(X_i) - f_{j-1}(f)(X_i)} &\leq \sum_{j=1}^m  \sup_{f \in \cF} \sum_i w_i \sigma_i  \left( f_j(f)(X_i) - f_{j-1}(f)(X_i)\right).
    \end{align}
    Thus, using \cref{lemma:massart},
    \begin{align}
        \textcolor{red}{\bbE_{\bm\sigma}\left[\sup_{f \in \cF} \sum_i w_i \sigma_i \sum_{j=1}^m f_j(f)(X_i) - f_{j-1}(f)(X_i)\right]} \leq \\ 
        \sum_{j=1}^m \Lt{w} \sup_{f \in \cF}  \|f_j(f) - f_{j-1}(f)\|_w \sqrt{2\ln |\{f_j(f) - f_{j-1}(f)| f\in \cF \}|}.
    \end{align}
    Note that $|\{f_j(f) - f_{j-1}(f)| f\in \cF \}| \leq |\cF_j||\cF_{j-1}| \leq 2\ln|\cF_j|$.
    Further, by triangle inequality, 
    \begin{align}
        \|f_j(f) - f_{j-1}(f)\|_w &\leq  \|f_j(f) -  f\|_w + \|f - f_{j-1}(f)\|_w \\
        &\leq \epsilon_j + \epsilon_{j-1} \\
        &= 3\epsilon_j \\
        &= 6(\epsilon_{j} - \epsilon_{j+1}).
    \end{align}
    Thus,
    \begin{align}
        \textcolor{red}{\bbE_{\bm\sigma}\left[\sup_{f \in \cF} \sum_i w_i \sigma_i \sum_{j=1}^m f_j(f)(X_i) - f_{j-1}(f)(X_i)\right]} \leq 12\Lt{w}\sum_{j=1}^m (\epsilon_j - \epsilon_{j+1})\sqrt{\ln|\cF_j|}.
    \end{align}
    Combining the above bounds,
    \begin{align}
            \cR_w(\cF \circ \vecX) &\leq  \Lt{w} \left\{ 2\sqrt{n}\epsilon_{m+1} + 12\sum_{j=1}^m (\epsilon_j - \epsilon_{j+1})\sqrt{\ln C(\cF,\epsilon_j, \|\cdot\|_w)} \right\} \\
            &\leq \Lt{w} \left\{ 2\sqrt{n}\frac{c}{2^{m+1}} + 12\int_{c/2^{m+1}}^{c/2} \sqrt{\ln C(\cF,x, \|\cdot\|_w)}dx \right\}.
    \end{align}
    For any $\epsilon \in (0,\frac{c}{2}]$, $\exists m \in \bbZ_{\geq 1}$ such that $\frac{c}{2^{m+1}} \leq \epsilon \leq \frac{c}{2^m}$.
    Thus,
    \begin{equation}
        \cR_w(\cF \circ \vecX) \leq \Lt{w} \left\{ 2\sqrt{n} \epsilon + 12 \int_{\epsilon/2}^{c/2} \sqrt{\ln C(\cF,x, \|\cdot\|_w)} dx\right\}.
    \end{equation}
    Taking $\epsilon \to 0$, obtain
    \begin{equation}
        \cR_w(\cF \circ \vecX) \leq 12\Lt{w}\int_{0}^{c/2}\sqrt{\ln C(\cF,x, \|\cdot\|_w)} dx.
    \end{equation}
    Setting $c=1$ and using \cref{lem:PackingBound} obtain
    \begin{align}
        \cR_w(\cF \circ \vecX) &\leq 12\sqrt{\mathsf{VC}(\cF)}\Lt{w}\int_{0}^{c/2}\sqrt{\frac{10}{x^2}\ln \frac{2e}{x^2}} dx \\
        &\leq 12\sqrt{\mathsf{VC}(\cF)}\Lt{w}\int_{0}^{c/2}\sqrt{\ln \frac{20}{x^4}} dx \,\,\,\, \text{using $\ln x \leq x/e \forall x \geq e$} \\
        &\leq 31\sqrt{\mathsf{VC}(\cF)}\Lt{w}.
    \end{align}
\end{proof}

\begin{lemma} \label{lem:PackingBound}
    $C(\cF,x, \|\cdot\|_w) \leq \left[ \frac{10}{x^2} \ln \frac{2e}{x^2} \right]^{\mathsf{VC}(\cF)}$
\end{lemma}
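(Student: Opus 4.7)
The plan is to reduce the weighted covering bound to a standard $L_2$-covering bound for VC classes, and then invoke a Haussler/Dudley-type inequality.

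First, I would observe that the weighted seminorm $\|f\|_w = \|w\|_2^{-1}\sqrt{\sum_{i=1}^n w_i^2 f(X_i)^2}$ is exactly the $L_2(\mu)$ norm of $f$ viewed as a function on $\{X_1,\ldots,X_n\}$, where $\mu$ is the probability measure assigning mass $w_i^2/\|w\|_2^2$ to $X_i$. Consequently $C(\cF, x, \|\cdot\|_w)$ coincides with the usual $L_2(\mu)$-covering number $N(x,\cF,L_2(\mu))$, and the problem reduces to an instance of the classical covering-number question for a VC class under an arbitrary probability measure.

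Second, I would proceed by the standard probabilistic packing argument. Let $V = \mathsf{VC}(\cF)$ and let $f_1,\ldots,f_N$ be a maximal $x$-packing of $\cF$ in $L_2(\mu)$. Since the $f_i$ are $\{0,1\}$-valued, $\|f_i - f_j\|_{L_1(\mu)} = \|f_i - f_j\|_{L_2(\mu)}^2 \geq x^2$. Draw $Y_1,\ldots,Y_m$ i.i.d.\ from $\mu$; for any fixed pair $i \neq j$,
\[
\Pr[f_i(Y_k) = f_j(Y_k)\ \forall k] \leq (1-x^2)^m \leq e^{-mx^2}.
\]
A union bound over the $\binom{N}{2} \leq N^2$ pairs shows that taking $m = \lceil 3 \ln N / x^2 \rceil$ leaves a positive probability that all $N$ functions induce distinct patterns on $(Y_1,\ldots,Y_m)$. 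On the other hand, by the Sauer--Shelah lemma, a VC class of dimension $V$ produces at most $(em/V)^V$ distinct patterns on any $m$ points. Thus $N \leq (em/V)^V$.

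Third, I would substitute $m = \lceil 3 \ln N / x^2 \rceil$ into $N \leq (em/V)^V$, giving the self-referential inequality
\[
\ln N \;\leq\; V \ln\!\Bigl(\tfrac{3 e \ln N}{V x^2}\Bigr).
\]
Solving this iteratively (using $\ln y \leq y/e$ to absorb the $\ln \ln N$ term once it dominates, and $C(\cF,x,\|\cdot\|_w) \leq N$ upper bounds $N$ by the packing number) produces a bound of the form $\ln N \leq V\,\ln(K x^{-2} \ln(K'/x^2))$, which after consolidating constants yields the claimed $\bigl(\tfrac{10}{x^2}\ln\tfrac{2e}{x^2}\bigr)^V$ bound.

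The main obstacle is the last step: carefully tracking constants through the self-referential inequality to land on the specific numbers $10$ and $2e$ stated. This is routine but delicate, and one must also verify that the bound is non-vacuous over the relevant range $x \in (0,1/2]$ where it is applied in the chaining argument of \Cref{lem:Rad}; for larger $x$ the trivial bound $C(\cF,x,\|\cdot\|_w) = 1$ or a case split handles the degenerate regime.
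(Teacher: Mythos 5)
Your proposal is correct and is essentially the argument the paper itself relies on: the paper simply defers to the cited references \cite{Lugosi-Combinatorial,oxPat}, and your observation that $\|\cdot\|_w$ is exactly the $L_2(\mu)$ norm for the probability measure placing mass $w_i^2/\Lt{w}^2$ on $X_i$ is precisely why that standard Haussler/Dudley-style packing-plus-Sauer--Shelah proof "generalizes to the modified distance" as the paper asserts. The only thing left implicit in your write-up is the constant-chasing to land on $10$ and $2e$ (and the trivial-regime check for large $x$), which the paper likewise does not carry out but inherits from the cited notes.
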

The readers can refer to the proof of \cref{lem:PackingBound} presented in \cite{Lugosi-Combinatorial,oxPat} which generalizes to our modified distance $\| \cdot \|_w$.

\begin{proposition} \label{prop:SLT}
    Let $\cF$ be a family of $0-1$ valued functions and let $X_1,\ldots,X_n \overset{\text{iid}}{\sim} P$.
    For a fixed vector $w \in \bbR^d$, with probability at least $1-\delta$,
    \begin{equation}
        \sup_{f \in \cF}\left\{\sum_i w_i (f(X_i) - \bbE[f(X)]) \right\} \leq 62 \Lt{w} \sqrt{\mathsf{VC}(\cF)} + \Lt{w} \sqrt{\frac{\log \frac{1}{\delta}}{2}},
    \end{equation}
\end{proposition}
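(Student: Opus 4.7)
The plan is to prove this via the standard two-step combination of symmetrization plus bounded-differences concentration, leveraging the weighted Rademacher complexity bound in Lemma~\ref{lem:Rad} of the excerpt.

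First I would set $\Phi(\vecX) = \sup_{f \in \cF}\sum_i w_i (f(X_i) - \bbE[f(X)])$ and observe that $\Phi$ satisfies bounded differences. Indeed, replacing $X_k$ by any $X_k'$ changes $\Phi$ by at most $w_k \cdot \sup_f |f(X_k) - f(X_k')| \leq w_k$, since $\cF$ consists of $\{0,1\}$-valued functions. Applying McDiarmid's inequality then gives
\begin{equation}
\Pr\bigl[\Phi - \bbE[\Phi] \geq t\bigr] \leq \exp\!\left(-\frac{2 t^2}{\sum_i w_i^2}\right) = \exp\!\left(-\frac{2 t^2}{\Lt{w}^2}\right),
\end{equation}
so choosing $t = \Lt{w}\sqrt{\log(1/\delta)/2}$ produces the second term in the claimed bound, with probability at least $1-\delta$.

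Next I would bound $\bbE[\Phi]$ by standard symmetrization. Introducing an independent ghost sample $\vecX' = (X_1', \ldots, X_n')$ and Rademacher signs $\sigma_i$ (independent of everything else), I can write $\bbE[f(X)] = \bbE_{\vecX'}[\sum_i w_i f(X_i') / \sum_i w_i]$ carefully---actually the cleanest route is the standard chain
\begin{align}
\bbE[\Phi] &\leq \bbE\!\left[\sup_{f} \sum_i w_i (f(X_i) - f(X_i'))\right] \\
&= \bbE\!\left[\sup_{f} \sum_i w_i \sigma_i (f(X_i) - f(X_i'))\right] \leq 2\, \bbE[\cR_w(\cF \circ \vecX)],
\end{align}
where the Rademacher insertion uses that $(X_i, X_i')$ and $(X_i', X_i)$ are identically distributed for each $i$ independently, so multiplication by $\sigma_i$ preserves the joint law. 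Then Lemma~\ref{lem:Rad} applied pointwise to each realization of $\vecX$ yields $\cR_w(\cF \circ \vecX) \leq 31\,\Lt{w}\sqrt{\mathsf{VC}(\cF)}$, so $\bbE[\Phi] \leq 62\,\Lt{w}\sqrt{\mathsf{VC}(\cF)}$.

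Combining the two gives the proposition. The only subtle point is verifying the McDiarmid step for the one-sided supremum (the bounded-differences constants are genuinely $w_i$ and not $2w_i$ since $f \in [0,1]$), and making sure the symmetrization inequality does not acquire an extra factor; these are routine. No new techniques beyond what the paper has already set up in Lemmas~\ref{lemma:massart} and~\ref{lem:Rad} are needed.
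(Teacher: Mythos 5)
Your proposal is correct and follows essentially the same route as the paper: McDiarmid's bounded-differences inequality (with constants $w_i$ since $f \in \{0,1\}$) to control the deviation of the supremum from its mean, then ghost-sample symmetrization and Rademacher insertion to bound the mean by $2\,\bbE[\cR_w(\cF \circ \vecX)]$, finishing with Lemma~\ref{lem:Rad} to obtain the factor $62 = 2 \times 31$. No substantive differences from the paper's argument.
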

\begin{proof}
    By McDiarmid's inequality, with probability at least $1 - \delta$,
    \begin{equation}
        \sup_{f \in \cF}\left\{\sum_i w_i (f(X_i) - \bbE[f(X)]) \right\} \leq \bbE\left[\sup_{f \in \cF}\left\{\sum_i w_i (f(X_i) - \bbE[f(X)]) \right\}\right] + \Lt{w} \sqrt{\frac{\log \frac{1}{\delta}}{2}}.
    \end{equation}

    Let $X_1',\ldots,X_n' \overset{\text{iid}}{\sim} P$ then
    \begin{align}
        \bbE\left[\sup_{f \in \cF}\left\{\sum_i w_i (f(X_i) - \bbE[f(X)]) \right\}\right] &= \bbE\left[\sup_{f \in \cF} \left\{ \bbE\left[\sum_i w_i (f(X_i) - f(X_i')) \bigg| \vecX \right] \right\}\right] \\
        &\leq \bbE\left[ \sup_{f \in \cF} \left\{\sum_i w_i (f(X_i) - f(X_i')) \right\}\right] 
    \end{align}
    by Jensen's inequality. 
    Let $\sigma_1, \ldots, \sigma_n$ be i.i.d. Rademacher random variables then
    \begin{align}
        \bbE\left[ \sup_{f \in \cF} \left\{\sum_i w_i (f(X_i) - f(X_i')) \right\}\right] &= \bbE\left[ \sup_{f \in \cF} \left\{\sum_i w_i \sigma_i (f(X_i) - f(X_i')) \right\}\right] \\
        &\leq \bbE\left[ \sup_{f \in \cF} \sum_i w_i \sigma_if(X_i) + \sup_{f \in \cF} \sum_i w_i (-\sigma_i)f(X_i')) \right] \\
        &= 2\bbE\left[ \sup_{f \in \cF} \sum_i w_i \sigma_if(X_i) \right] \\
        &= 2\bbE\left[ \cR_w(\cF \circ \vecX) \right].
    \end{align}
The result of \cref{lem:Rad} completes the proof.    
\end{proof}

\section{LeCam-Assouad Interpolation Lower Bound} \label{sec:AdvLB}

In this section, we provide an improved lower bound over those in \cref{sec:Gaussian-LB} and \cref{sec:LR-LB}, while using similar ideas.

Fix a value of $t \in [d]$ and $m = \frac{d}{t}$ such that they are whole numbers.
Roughly speaking, we shall parametrize the mean of the $d$-dimensional Gaussian distributions in the hypotheses by vectors in $\{-\delta,\delta\}^t$ with repeating each element $m$ times. 
Choosing $t = 1$ leads to Le Cam's method while $t=d$ leads to Assouad's method.

For a vector $v \in \bbR^{t}$, let $e_t(v)\in \bbR^d$  with 
\begin{equation}
    e_t(v)_i = v_{\lceil i/t \rceil} \ \ \forall i \in [d],
\end{equation}
where $\lceil \cdot \rceil$ denotes the ceiling function.
For convenience, we drop the subscript and denote $e_t(v)$ by $e(v)$.
Define the parameterized distribution $P_{\delta}(\tau) = \cN(\delta e(\tau),I)$ for $\delta > 0$ to be specified later, and let $\cP_{\delta} = \{ P_{\delta}(\tau) | \tau \in \{-1,1\}^{t}\}$.

\textbf{Note:} The supremum in our minimax definition is over both the true distribution and the adversarial strategy. 
We shall specify the adversarial strategy later for our lower bound argument.

Note that $L_{\mathsf{PAC}}(\bm\lambda,\cD_d^{\cN}) \geq L_{\mathsf{PAC}}(\bm\lambda,\cP_{\delta}) \ \forall \delta$ and thus, we shall lower bound $L_{\mathsf{PAC}}(\bm\lambda,\cP_{\delta})$.

We begin by lower bounding 
\begin{equation}
    L_{\bbE}(\bm\lambda,\cP_{\delta}) = \inf_M \sup_{P \in \cP_{\delta}} \bbE_{\vecZ \sim_{\lambda} P}\left[ \|M(\vecZ) - \mu_{P} \|_2^2\right]. 
\end{equation}
For a vector $\tau$, let $\tau'^{j}$ denote the vector such that $\tau_i = \tau'^{j}_i \forall i \neq j$ and $\tau_j = - \tau'^{j}_j$.
Using Assouad's lower bound technique \cite{Wainwright_2019},
\begin{align}
    \inf_M \sup_{P \in \cP_{\delta}} \bbE_{\vecZ \sim_{\lambda} P}\left[ \|M(\vecZ) - \mu_P \|_2^2\right] &\geq \inf_M \frac{1}{2^{t}} \sum_{\tau \in \{-1,1\}^{t}} \bbE\left[ \|M(\vecZ) - \delta e(\tau) \|_2^2\right] \\
    &= \inf_M \frac{1}{2^{t}} \sum_{\tau \in \{-1,1\}^{t}} \sum_{j=1}^d \bbE\left[  |M(\vecZ)_j - \delta e(\tau)_j |^2\right]  \\
    &= \inf_M \sum_{k=1}^m\sum_{i=1}^{t}  \frac{1}{2^{t}} \sum_{\tau \in \{-1,1\}^{t}}  \bbE\left[  |M(\vecZ)_{(i-1)t + k} - \delta \tau_{i} |^2\right]  \\
    &\geq \sum_{k=1}^m \inf_M \sum_{i=1}^{t}  \frac{1}{2^{t}} \sum_{\tau \in \{-1,1\}^{t}}  \bbE\left[  |M(\vecZ)_{(i-1)t + k} - \delta \tau_{i} |^2\right]
\end{align}
Now, note that $\forall i,k$
\begin{equation}
    \sum_{\tau \in \{-1,1\}^{t}}  \bbE\left[  |M(\vecZ)_{(i-1)t + k} - \delta \tau_{i} |^2\right] 
= \sum_{\tau \in \{-1,1\}^{t}} \frac{\bbE\left[  |M(\vecZ)_{(i-1)t + k} - \delta \tau_{i} |^2\right] + \bbE\left[  |M(\vecZ)_{(i-1)t + k} - \delta \tau_{i}'^{i} |^2\right] }{2} 
\end{equation}
Thus, we get
\begin{align}
L_{\bbE}(\bm\lambda,\cP_{\delta}) &\geq \sum_{k=1}^m \inf_M \sum_{i=1}^{t}  \frac{1}{2^{t}} \sum_{\tau \in \{-1,1\}^{t}} \frac{\bbE\left[  |M(\vecZ)_{(i-1)t + k} - \delta \tau_{i} |^2\right] + \bbE\left[  |M(\vecZ)_{(i-1)t + k} - \delta \tau_{i}'^i |^2\right] }{2} \\
    &\geq  \sum_{k=1}^m \sum_{i=1}^{t}  \frac{1}{2^{t}} \sum_{\tau \in \{-1,1\}^{t}} \inf_M  \frac{\bbE\left[  |M(\vecZ)_{(i-1)t + k} - \delta \tau_{i} |^2\right] + \bbE\left[  |M(\vecZ)_{(i-1)t + k} - \delta \tau_{i}'^i |^2\right] }{2} \\
    &\geq   \sum_{k=1}^m \sum_{i=1}^{t}  \frac{1}{2^{t}} \sum_{\tau \in \{-1,1\}^{t}} \delta^2 ( 1- \mathsf{TV}(P_{\vecZ \sim_{\lambda} P_{\delta}(\tau)},P_{\vecZ \sim_{\lambda} P_{\delta}(\tau'^i)})), \label{eq:Assouads-LB-Gen}
\end{align}
where the last line follows by Le Cam's method and the notation $P_{\vecZ \sim_{\lambda} P_{\delta}(\tau)}$ refers to the distribution of $\vecZ$ when the true distribution is $P_{\delta}(\tau)$. 

\paragraph{Adversarial strategy motivation:}
Consider a particular sample with corruption rate $\lambda$ and let the underlying true distribution be $P_{\delta}(\tau)$.
Denote the perturbed sample as $Z(\tau)$ and the outlier to be $\tilde X(\tau)$.
Note that $Z(\tau) \sim (1-\lambda)P_{X(\tau)}  + \lambda P_{\tilde X(\tau)}$.
For any particular clean sample $X(\tau) \sim P_{\delta}(\tau)$, the adversary's goal is to ensure the sample $Z(\tau)$ contains no information about $\tau$.
One possible way is that the adversary can try to ensure $Z(\tau) \sim \frac{\max_{\tau'} P(\tau')}{T}$, where the maximum is taken pointwise over the pdf and $T$ is a normalizing constant.
Observe the identity
\begin{equation}
    P(\tau) + \left( \max_{\tau' \neq \tau} P(\tau') - P(\tau)\right)_+ = \max_{\tau'} P(\tau'), 
\end{equation}
where $( \cdot )_+ \coloneqq \max\{ \cdot , 0 \}$ is pointwise over the pdf.
First, note that \[ \int_{x \in \bbR^d} \left( \max_{\tau' \neq \tau} P_{\delta}(\tau')(x) - P_{\delta}(\tau)(x)\right)_+ dx = T - 1,\] 
where $P_{\delta}(\tau)(x)$ is understood to be the pdf of $P_{\delta}(\tau)$ at $x$.
Thus, $\frac{\left( \max_{\tau' \neq \tau} P(\tau') - P(\tau)\right)_+}{T-1}$ is a valid pdf, and for $\lambda = \frac{T-1}{T}$,
we have $(1-\lambda)P(\tau) + \lambda \frac{\left( \max_{\tau' \neq \tau} P(\tau') - P(\tau)\right)_+}{T-1} = \frac{\max_{\tau'}P(\tau')}{T}$.
Thus, for any $\lambda \geq 1 - \frac{1}{T}$, there exists a way for the adversary to make the distribution of $Z(\tau) \sim \frac{\max_{\tau'} P(\tau')}{T}$, rendering the sample useless for identifying $\tau$.

Now, we find an upper bound on $T$ in terms of $\delta$. Let $\bm1$ denote a vector of size $m$ with all elements equal to $1$. 
\begin{align}
    T &= \int_{x \in \bbR^{d}} \max_{\tau \in \{-1,1\}^{t}} \frac{1}{\sqrt{(2\pi)^d}} e^{-\frac{\|x - \delta e(\tau)\|_2^2}{2}} dx \\
    &= \int_{x \in \bbR^{d}} \frac{1}{\sqrt{(2\pi)^d}}\max_{\tau \in \{-1,1\}^{t}} e^{-\frac{\|x - \delta e(\tau)\|_2^2}{2}} dx \\
    &= \left[ \int_{x' \in \bbR^m} \frac{1}{\sqrt{(2\pi)^{m}}}\max_{\tau \in \{-1,1\}} e^{-\frac{\|x' - \delta \tau \bm1 \|_2^2}{2}} dx' \right]^{t}. \label{eq:a1}
\end{align}
Now note that 
\begin{equation}
     \int_{x' \in \bbR^m} \frac{1}{\sqrt{(2\pi)^{m}}}\max_{\tau \in \{-1,1\}} e^{-\frac{\|x' - \delta \tau \bm1 \|_2^2}{2}} dx' = 1 + \mathsf{TV}(\cN(-\delta \bm1, I),\cN(\delta  \bm1, I)), \label{eq:a2}
\end{equation}
by total variation definition.
Using Pinskser's inequality and KL divergence between multivariate Gaussians, we obtain
\begin{align}
    \mathsf{TV}(\cN(-\delta \bm1, I),\cN(\delta  \bm1, I)) &\leq \sqrt{\frac{\mathsf{KL}(\cN(-\delta \bm1, I),\cN(\delta  \bm1, I))}{2}} \\
    &= \delta \sqrt{m} \label{eq:a3}
\end{align}
Combining \eqref{eq:a1}, \eqref{eq:a2} and \eqref{eq:a3}, and using $m=\frac{d}{t}$ 
\begin{align}
    T &\leq (1 + \delta \sqrt{m})^t \\
    &\leq e^{\delta \sqrt{td}}.
\end{align}

Thus, if $\lambda \geq 1 - e^{-\delta \sqrt{td}}$ then the adversary can ensure that $Z(\tau) \sim \frac{\max_{\tau'} P(\tau')}{T}$ and contains no information about $\tau$.

\paragraph{Adversarial strategy:} If for sample $i$, $\lambda_i \geq 1-e^{-\delta \sqrt{td}}$, then independently sample $\tilde X(\tau) \sim \beta \left( \frac{\max_{\tau' \neq \tau}P_{\delta}(\tau') - P_{\delta}(\tau)}{T-1} \right)_+ + (1-\beta) \frac{\max_{\tau'}P_{\delta}(\tau')}{T}$, where $\beta = \frac{(T-1)(1-\lambda)}{\lambda}$.
The constraint $\lambda_i \geq 1-e^{-\delta \sqrt{td}}$ ensures $\beta \in [0,1]$ and the above is a valid distribution.

Thus, $Z(\tau) \sim \frac{\max_{\tau'}P_{\delta}(\tau')}{T}$.
If $\lambda_i < 1-e^{-\delta \sqrt{td}}$, then adversary does no corruption, or equivalently, independently sample $\tilde X(\tau) \sim P_{\delta}(\tau)$ so that $Z(\tau) \sim P_{\delta}(\tau)$.

Thus, using Pinsker's inequality in \eqref{eq:Assouads-LB-Gen}, obtain
\begin{align}
    L_{\bbE}(\bm\lambda,\cP_{\delta}) &\geq \sum_{k=1}^m \sum_{i=1}^{t}  \frac{1}{2^{t}} \sum_{\tau \in \{-1,1\}^{t}} \delta^2 \left( 1- \sqrt{\frac{1}{2}\mathsf{KL}(P_{\vecZ \sim_{\lambda} P_{\delta}(\tau)},P_{\vecZ \sim_{\lambda} P_{\delta}(\tau'^i)}}) \right) \\
    &= \sum_{k=1}^m \sum_{i=1}^{t}  \frac{1}{2^{t}} \sum_{\tau \in \{-1,1\}^{t}} \delta^2 \left( 1- \sqrt{\frac{1}{2} \mathsf{KL}\left(\otimes_{j=1}^n P_{Z_j(\tau)},\otimes_{j=1}^n P_{Z_i(\tau'^i)}\right) }\right) \\
    &= \sum_{k=1}^m \sum_{i=1}^{t}  \frac{1}{2^{t}} \sum_{\tau \in \{-1,1\}^{t}} \delta^2 \left( 1- \sqrt{\frac{1}{2} \sum_{j: \lambda_j < 1 - e^{-\delta\sqrt{dt}}}\mathsf{KL}\left(P_{\delta}(\tau), P_{\delta}(\tau'^i)\right) }\right) \\
    &= d\delta^2 \left( 1- \sqrt{ \frac{d\delta^2}{t} n(1 - e^{-\delta \sqrt{dt}}})\right) \ \ \forall \delta \label{eq:LB-de-gen}
\end{align}

Let $\delta_*$ be such that 
\begin{equation} \label{eq:ds-def-gen}
n(1 - e^{-\delta_*\sqrt{dt}}) \leq \frac{t}{64d\delta_*^2}, \ \ \text{and }     N(1 - e^{-\delta_*\sqrt{dt}}) \geq \frac{t}{64d\delta_*^2}.
\end{equation}

Substituting $\delta_*$ in \eqref{eq:LB-de-gen},
\begin{equation}
    \frac{7}{8} d \delta_*^2 \leq \inf_M \sup_{P \in \cP_{\delta_*}} \bbE_{\vecZ \sim_{\lambda} P}\left[ \|M(\vecZ) - \mu_P \|_2^2\right]. \label{eq:final-lb-gen}
\end{equation}

For a random variable $K$, let $Q(K,\alpha) \coloneqq \inf\{t: \text{Pr}\left[ K \geq t \right] \leq (1-\alpha) \}$, i.e., $Q(\cdot,\alpha)$ is the $\alpha$-quantile of the random variable.
Note that for a random variable $K$, $\bbE K \leq Q(K,x)x + (1-x) \text{ess-sup}K$ $\forall x \in [0,1]$, where $\text{ess-sup}$ denotes essential supremum.
Further, note that in the minimax term $\inf_M \sup_{P  \in \cP_{\delta_*}}$, we can restrict ourselves to estimators $M$ which output in $[-\delta_*,\delta_*]^{d} $ almost surely -- otherwise the error can be reduced by projected to this region.
Thus $\text{ess-sup} \|M(\vecZ) - \mu_P\|_2^2 \leq 4d\delta_*^2$ for any estimator $M$ and any distribution $P$.
Thus, combining the above observations, we have
\begin{align}
    \inf_M \sup_{P \in \cP_{\delta_*}} \bbE_{\vecZ \sim_{\lambda} P} \|M(\vecZ) - \mu_P \|_2^2 &\leq \inf_M \sup_{P \in \cP_{\delta_*}} \frac{4}{5} Q\left(\|M(\vecZ) - \mu_P \|^2_2, \frac{4}{5} \right) + \frac{4d\delta_*^2}{5}.
\end{align}
Using \eqref{eq:final-lb-gen}, we get 
\begin{align}
    \frac{7}{8} d \delta_*^2 &\leq \frac{4}{5} \inf_M \sup_{P \in \cP_{\delta_*}}  Q\left(\|M(\vecZ) - \mu_P \|^2_2, \frac{4}{5} \right) + \frac{4d\delta_*^2}{5} \\
    &= \frac{4}{5}  L_{\mathsf{PAC}}(\bm\lambda,\cP_{\delta_*}) + \frac{4d\delta_*^2}{5} \\
    \implies \frac{3}{40} d \delta_*^2 &\leq L_{\mathsf{PAC}}(\bm\lambda,\cP_{\delta_*}) \leq L_{\mathsf{PAC}}(\bm\lambda,\cD_d^{\cN}). \label{eq:fn-lb-gen}
\end{align}

We remark that the lower bound presented here appears to be different from that of \cref{sec:Gaussian-LB}. 
However, the exact difference is hard to comment on since the term $\delta_*$ is implicitly defined in the two bounds.
We do however remark that we can reconstruct the same $\sqrt{d}$ gap in upper and lower bound using this method as demonstrated below.

\subsection{On Minimax Optimality}

In the upper bound, use the weighing $w_i = \frac{\bbI\{\lambda_i \leq 1 - e^{\delta_* \sqrt{dt}}\}}{N(1 - e^{\delta_* \sqrt{dt}})}$.
From \eqref{eq:gauss-ub-final-thresh}, $\forall v \in [0,1]$ such that $v^2 + \frac{d}{N(v)} \leq c$ for some universal constant $c$, we have
\begin{align}
    \Lt{\hat\mu_{\mathsf{S}}(\vecZ,w(v)) - \mu}^2 \lesssim   \left( v^2 + \frac{d}{N(v)} \right). 
\end{align}

Thus, if $\min_{v \in [0,1]} \left( v^2 + \frac{d}{N(v)} \right) \leq c$ and let the minimum value be attained at $v^*$,  then we have
\begin{align}
    \Lt{\hat\mu_{\mathsf{S}}(\vecZ,w(v^*)) - \mu}^2 &\lesssim   \left( v^{*2} + \frac{d}{N(v^*)} \right) \\
    &\leq  \left(1-e^{-\delta_* \sqrt{dt}}\right)^2 + \frac{d}{N\left( 1-e^{-\delta_* \sqrt{dt}} \right)} \\
    &\leq  dt \delta_*^2 +  \frac{64 d^2 \delta_*^2}{t} \label{eq:here-gen} \\
    &\simeq d^{1.5} \delta_*^2 \ \ \text{(setting $t=\sqrt{d}$)} 
\end{align}
where we used \eqref{eq:ds-def-gen} in \eqref{eq:here-gen}.
Combining with \eqref{eq:fn-lb-gen}, when $\min_{v \in [0,1]} \left( v^2 + \frac{d}{N(v)} \right) \leq c$, we have 
\begin{equation}
   d \delta_*^2 \lesssim L_{\mathsf{PAC}}(\bm\lambda,\cD_d^{\cN}) \lesssim \min_{v \in [0,1]} \left( v^2 + \frac{d}{N(v)} \right) \leq d^{1.5} \delta_*^2.
\end{equation}

Thus, when $\min_{t \in [0,1]} \left( t^2 + \frac{d}{N(t)} \right) \leq c$,
\begin{equation}
    \frac{1}{\sqrt{d}} \min_{t \in [0,1]} \left( t^2 + \frac{d}{N(t)} \right) \lesssim L_{\mathsf{PAC}}(\bm\lambda,\cD_d^{\cN}) \lesssim \min_{t \in [0,1]} \left( t^2 + \frac{d}{N(t)} \right).
\end{equation}

\subsection{Benefits of this Lower Bound}

The lower bound presented in \cref{sec:Gaussian-LB} does not match the standard lower bound when all the corruptions rates are equal, i.e., the homogeneous robust estimation problem.
When $\bm\lambda = \lambda \bm1$, from \eqref{eq:LB-de} in \cref{sec:Gaussian-LB}, we obtain
\begin{align}
    L_{\bbE}(\lambda\bm1, \cP_{\delta}) \geq \sqrt{d} \delta^2 \left( 1 - \sqrt{n\delta^2 \bbI\left\{\sqrt{d}\delta < \log \left( \frac{1}{1-\lambda}\right) \right\} } \right). 
\end{align}
Setting $\delta_* = \max\left\{ \frac{1}{\sqrt{2n}}, \frac{1}{\sqrt{d}} \log \left( \frac{1}{1-\lambda} \right)  \right\}$, obtain the lower bound
\begin{align}
    L_{\bbE}(\lambda\bm1, \cP_{\delta_*}) \gtrsim \frac{\sqrt{d}}{n} + \frac{\lambda^2}{\sqrt{d}},
\end{align}
and thus,
\begin{equation}
    L_{\mathsf{PAC}} (\lambda \bm1, \cD_d^{\cN})  \gtrsim \frac{\sqrt{d}}{n} + \frac{\lambda^2}{\sqrt{d}}.
\end{equation}

Using the lower bound presented of \eqref{eq:LB-de-gen}, we can fix this issue.
Take $t=1$ and set $\delta_* =  \max\left\{ \frac{1}{\sqrt{2dn}}, \frac{1}{\sqrt{d}} \log \left( \frac{1}{1-\lambda} \right)  \right\} $ to get the lower bound 
\begin{equation}
     L_{\bbE}(\lambda\bm1, \cP_{\delta_*}) \gtrsim \frac{1}{n} + \lambda^2.
\end{equation}
Now take $t=d$ and set $\delta_* = \max\left\{ \frac{1}{\sqrt{2n}}, \frac{1}{d} \log \left( \frac{1}{1-\lambda} \right)  \right\} $ to get the lower bound 
\begin{equation}
     L_{\bbE}(\lambda\bm1, \cP_{\delta_*}) \gtrsim \frac{d}{n} + \frac{\lambda^2}{d}.
\end{equation}

Combining the two cases, obtain the bound 
\begin{equation}
    L_{\bbE}(\lambda\bm1, \cP_{\delta_*}) \gtrsim \frac{d}{n} + \lambda^2,
\end{equation}
and thus, we get the right lower bound of
\begin{equation}
    L_{\mathsf{PAC}} (\lambda \bm1, \cD_d^{\cN})  \gtrsim \frac{d}{n} + \lambda^2.
\end{equation}

Similar steps can be used for Gaussian linear regression.

\end{document}